\documentclass{article}

\usepackage[margin=1in]{geometry}
\usepackage[utf8]{inputenc} 
\usepackage[T1]{fontenc}    
\usepackage{hyperref}       
\usepackage{url}            
\usepackage{booktabs}       
\usepackage{amsfonts}       
\usepackage{nicefrac}       
\usepackage{microtype}      
\usepackage{xcolor}         
\usepackage{natbib}
\usepackage{graphicx}

\usepackage{enumerate}
\usepackage[shortlabels]{enumitem}

\setlength{\abovedisplayskip}{1pt}
\setlength{\abovedisplayshortskip}{1pt}
\setlength{\belowdisplayskip}{1pt}
\setlength{\belowdisplayshortskip}{1pt}
\setlength{\jot}{1pt}

\title{In-Context Learning with Representations: \\ Contextual Generalization of Trained Transformers}

%

\author{%
 Tong Yang\thanks{Department of Electrical and Computer Engineering, Carnegie Mellon University; email: \texttt{tongyang@andrew.cmu.edu}. } \\
CMU    \\
	\and
	Yu Huang\thanks{Department of Statistics and Data Science, Wharton School, University of Pennsylvania; email: \texttt{yuh42@wharton.upenn.edu}. } \\
 UPenn  \\
 	\and
	Yingbin Liang\thanks{Department of Electrical and Computer Engineering, The Ohio State University; email: \texttt{liang.889@osu.edu}. }\\
	OSU\\
	\and
	Yuejie Chi\thanks{Department of Electrical and Computer Engineering, Carnegie Mellon University; email: \texttt{yuejiechi@cmu.edu}. }\\
	CMU\\ 	
}


\usepackage{amsmath,amsfonts,bm}
\usepackage{algorithm,algorithmic}


















\def\1{\bm{1}}







\def\vzero{{\bm{0}}}
\def\vone{{\bm{1}}}
\def\vxi{{\bm{\xi}}}

\def\vtheta{{\bm{\theta}}}

\def\vdelta{{\bm{\delta}}}
\def\valpha{{\bm{\alpha}}}
\def\va{{\bm{a}}}
\def\vb{{\bm{b}}}

\def\ve{{\bm{e}}}

\def\vq{{\bm{q}}}

\def\vs{{\bm{s}}}

\def\vu{{\bm{u}}}
\def\vv{{\bm{v}}}
\def\vw{{\bm{w}}}
\def\vx{{\bm{x}}}
\def\vy{{\bm{y}}}
\def\vz{{\bm{z}}}
\def\vtheta{{\bm{\theta}}}

\def\mI{{\bm{I}}}
\def\mA{{\bm{A}}}
\def\mB{{\bm{B}}}
\def\mC{{\bm{C}}}
\def\mD{{\bm{D}}}
\def\mM{{\bm{M}}}




\def\mA{{\bm{A}}}
\def\mB{{\bm{B}}}
\def\mC{{\bm{C}}}
\def\mD{{\bm{D}}}
\def\mE{{\bm{E}}}

\def\mG{{\bm{G}}}

\def\mI{{\bm{I}}}

\def\mM{{\bm{M}}}

\def\mQ{{\bm{Q}}}

\def\mV{{\bm{V}}}
\def\mW{{\bm{W}}}

\def\mZ{{\bm{Z}}}

\DeclareMathAlphabet{\mathsfit}{\encodingdefault}{\sfdefault}{m}{sl}
\SetMathAlphabet{\mathsfit}{bold}{\encodingdefault}{\sfdefault}{bx}{n}








\newcommand{\E}{\mathbb{E}}

\newcommand{\R}{\mathbb{R}}


%

\DeclareMathOperator*{\argmin}{arg\,min}

\usepackage{amssymb}

\usepackage{xcolor}
\definecolor{mydarkblue}{rgb}{0,0.08,0.45}
\definecolor{mygreen}{rgb}{0.032, 0.6392, 0.2039}
\definecolor{mypurple}{HTML}{B266FF}

\newcommand{\cmark}{\textcolor{mygreen}{\ding{51}}}
\newcommand{\xmark}{\textcolor{red}{\ding{55}}}
\def\RR{{\mathbb R}}
\def\NN{{\mathbb N}}
\def\EE{{\mathbb E}}

\def\vv{{\boldsymbol v}}

\def\LL{{\mathcal L}}

\def\RR{{\mathbb R}}

\def\NN{{\mathbb N}}
\def\EE{{\mathbb E}}
\def\PP{{\mathbb P}}

\def\vv{{\boldsymbol v}}

\def\LL{{\mathcal L}}

\def\vtheta{\boldsymbol{\theta}}
\def\vxi{\boldsymbol{\xi}}

\def\vlambda{\boldsymbol{\lambda}}
\def\valpha{\boldsymbol{\alpha}}
\def\vdelta{\boldsymbol{\delta}}
\def\veps{\boldsymbol{\epsilon}}

\newcommand{\norm}[1]{\left\| #1 \right\|}
\newcommand{\sm}{\mathsf{softmax}}
\newcommand{\inner}[2]{\left\langle #1, #2 \right\rangle}
\newcommand{\vypred}{\widehat{\vy}}
\newcommand{\vylimit}{\widehat{\vy}^\star}

\usepackage[utf8]{inputenc}
\usepackage{graphicx}
\usepackage{float}
\usepackage{amssymb}
\usepackage{ifthen}
\usepackage{minitoc}
\usepackage{array}
\usepackage{makecell}
\usepackage{pdfpages}
\usepackage{mathtools}
\usepackage{tikz}
\usepackage{multirow}
\usepackage{bbm}

\usepackage{arydshln} 
\usepackage{xcolor}         
\definecolor{DarkGreen}{rgb}{0.1,0.5,0.1}
\definecolor{DarkRed}{rgb}{0.5,0.1,0.1}
\definecolor{DarkBlue}{rgb}{0.1,0.1,0.5}
\definecolor{DarkYellow}{rgb}{.79,.79,0}
\definecolor{unitednationsblue}{rgb}{0.36, 0.57, 0.9}
\definecolor{newblue}{RGB}{245,156,74}
\usepackage{hyperref}
\hypersetup{
    unicode=false,   
    pdftoolbar=true,        
    pdfmenubar=true,        
    pdffitwindow=false,     
    pdfnewwindow=true,    
    colorlinks=true,      
    linkcolor=DarkBlue,     
    citecolor=
    DarkGreen,       
    filecolor=DarkRed,      
    urlcolor=DarkBlue, 
    pdftitle={},
    pdfauthor={},
}
\usepackage{cleveref}

\newcolumntype{?}{!{\vrule width 1pt}}
\usepackage{amsmath}
\usepackage{amsthm}
\usepackage{algorithm,algorithmic}

\usepackage{graphicx} 
\usepackage{subfigure}
\usepackage{bbm}
\newcommand{\voc}{\mathcal{V}}
\newcommand{\promptmtrx}{\mV}
\newcommand{\transformer}{\mathcal{T}}
\newcommand{\Zfull}{\widehat \mZ}
\newcommand{\epsfull}{\veps}

\definecolor{blue_ppt}{rgb}{0,0.6,0.93}

\usepackage{extarrows}
\theoremstyle{plain}
\newtheorem{lm}{Lemma} 

\newtheorem{thm}{Theorem}
\newtheorem{prop}{Proposition}
\newtheorem{asmp}{Assumption}

\allowdisplaybreaks

\usepackage{tikz}

\usepackage{pifont}

\begin{document}

\maketitle

\begin{abstract}
In-context learning (ICL) refers to a remarkable capability of pretrained large language models, which can learn a new task given a few examples during inference. However, theoretical understanding of ICL is largely under-explored, particularly whether transformers can be trained to generalize to unseen examples in a prompt, which will require the model to acquire contextual knowledge of the prompt for generalization. This paper investigates the training dynamics of  transformers by gradient descent through the lens of non-linear regression tasks. The contextual generalization here can be attained via  learning  the template function for each task in-context, where all template functions lie in a linear space with $m$ basis functions. We analyze the training dynamics of one-layer multi-head transformers to {in-contextly} predict unlabeled inputs given partially labeled prompts, where the labels contain Gaussian noise and the number of examples in each prompt  are not sufficient to determine the template. Under mild assumptions, we show that the training loss for a one-layer multi-head transformer converges linearly to a global minimum. Moreover, the transformer effectively learns to perform ridge regression over the basis functions. To our knowledge, this study is the first provable demonstration that transformers can 
learn contextual (i.e., template) information to generalize to both unseen examples and tasks when prompts contain only a small number of query-answer pairs.
\end{abstract}

\noindent\textbf{Keywords:} transformers, in-context learning, contextual generalization

	\tableofcontents
	
\section{Introduction} \label{sec:intro}


Transformers \citep{vaswani2017attention} have achieved tremendous successes in machine learning, particularly in natural language processing, by introducing self-attention mechanisms that enable models to capture long-range dependencies and contextualized representations. In particular, these self-attention mechanisms endow  transformers with remarkable in-context learning (ICL) capabilities, allowing them to adapt to new tasks or domains by simply being prompted with a few examples that demonstrate the desired behavior, without any explicit fine-tuning or updating of the model's parameters~\citep{brown2020language}.

A series of papers have empirically studied the underlying mechanisms behind in-context learning in transformer models~\citep{garg2022can,von2023transformers,wei2023larger,olsson2022context,xie2021explanation,chen2024can,agarwal2024many}, which have shown that transformers can predict unseen examples after being prompted on a few examples. The pioneering work of \citet{garg2022can} showed empirically that transformers can be trained from scratch to perform in-context learning of simple function classes, providing a theoretically tractable in-context learning framework. 
Following this well-established framework, several works have investigated various properties of in-context learning in transformers. For instance, studies have explored generalization and stability~\citep{li2023transformers}, expressive power~\citep{bai2024transformers,akyurek2022learning,giannou2023looped}, causal structures~\citep{nichani2024transformers, edelman2024evolution}, statistical properties~\citep{xie2021explanation,jeon2024information}, to name a few. 

In particular, analysis from an optimization perspective can provide valuable insights into how these models acquire and apply knowledge that enable in-context learning. A few works~\citep{huang2023context,chen2024training,li2024training,nichani2024transformers} thus studied the training dynamics of shallow transformers with softmax attention in order to in-context learn simple tasks such as linear regression~\citep{huang2023context,chen2024training}, binary classification tasks~\citep{li2024training}, and causal graphs~\citep{nichani2024transformers}. Their theoretical analyses illuminated how transformers,  given an arbitrary query token,  learn to {\em directly} apply  
the answer corresponding to it from the query-answer pairs that appear in each prompt. Therefore, they all require the sequence length of each prompt to be large enough so that all query-answer pairs have been seen in each prompt with sufficiently high probability, 
whereas practical prompts are often too short to contain many query examples.
This suggests that in-context learning can exploit {\em inherent contextual} information of the prompt to generalize to {\em unseen} examples, which further
raise the following intriguing theoretical question:
\begin{center}
    \textit{How do transformers learn contextual information from more general function classes to predict unseen examples given prompts that contain only partial examples?}
\end{center}

Since our paper studies ICL of non-linear function regression, the function mapping (which we also term as ``template'') naturally serves as the ``contextual information'' that can be learned for generalization to unseen examples. When each prompt contains only a small number of (noisy) examples, the template that generates the labels may be {\em underdetermined}, i.e., multiple templates could generate the same labels 
in the prompt. Such an issue of underdetermination further raises a series of intriguing questions, such as:
\begin{center}
    \textit{
    When the template that generates a prompt is underdetermined, what is the transformer's preference for choosing the template and how good is such a choice?
    }
\end{center}

\subsection{Our contributions}

In this paper, we answer the above questions by analyzing the training dynamics of a one-layer transformer with multi-head softmax attention through the lens of non-linear regression tasks. In our setting, the template function for each task lies in the linear space formed by $m$ nearly-arbitrary basis functions that capture representation (i.e., features) of data. Our goal is to provide insights on how transformers trained by gradient descent (GD) acquire template information from more general function classes to generalize to unseen examples and tasks when each prompt contains only a small number of query-answer pairs. We summarize our contributions are as follows.

\begin{itemize}  
    \item We first establish the convergence guarantee of a one-layer transformer with multi-head softmax attention trained with gradient descent on general non-linear regression in-context learning tasks. We assume each prompt contains only a few (i.e., partial) examples with their Gaussian noisy labels, which are not sufficient to determine the template. 
Under mild assumptions, we establish that the training loss of the transformer converges at a linear rate.
    Moreover, by analyzing the limit point of the transformer parameters, we are able to uncover what information about the basic tasks the transformer extracts and  memorizes during training in order to perform in-context prediction.

    \item We then analyze the transformer's behavior at inference time after training, and show that the transformer chooses its generating template by performing ridge regression over the basis functions. We also provide the iteration complexity for pretraining the transformer to reach $\varepsilon$-precision with respect to its    choice of the template given an arbitrary prompt at inference time.
    We further compare the choice of the transformer and the best possible choice over the template class and characterize how the sequence length of each prompt influences the inference time performance of the model.

   \item  Under more realistic assumptions, our analysis framework allows us to overcome a handful of assumptions made in previous works such as  large prompt length~\citep{huang2023context,chen2024training,li2024training,nichani2024transformers}, orthogonality of data~\citep{huang2023context,chen2024training,li2024training,nichani2024transformers}, restrictive initialization conditions~\citep{chen2024training}, special structure of the transformer~\citep{nichani2024transformers}, and 
   mean-field models~\citep{kim2024transformers}. Further, the function classes we consider are a generalization of those considered in most theoretical works~\citep{huang2023context,chen2024training,li2024training,wu2023many,zhang2023trained}.   
 We also highlight the importance of  multi-head attention mechanism in this process.  
   
\end{itemize}

To our best knowledge, this is the \textit{first} work that analyzes how transformers learn contextual (i.e., template) information to generalize to unseen examples and tasks when prompts contain only a small number of query-answer pairs. Table~\ref{tb:comparison} provides a detailed comparison with  existing theoretical works in terms of settings, training analysis and generalization of in-context learning.

\begin{table}[ht]
\centering
\begin{tabular}{@{}ccccccc@{}}
\toprule
Reference & \begin{tabular}[c]{@{}c@{}}nonlinear\\ attention\end{tabular} & \begin{tabular}[c]{@{}c@{}}multi\\ head\end{tabular} & \begin{tabular}[c]{@{}c@{}}task\\ shift\end{tabular} & \begin{tabular}[c]{@{}c@{}}  GD \\ convergence \end{tabular} & \begin{tabular}[c]{@{}c@{}}noisy\\ data\end{tabular} & \begin{tabular}[c]{@{}c@{}} representation \\ learning \end{tabular} \\  \midrule
\citet{wu2023many} & \xmark & \xmark & \cmark & \cmark & \cmark & \xmark\\
\citet{zhang2023trained} &   \xmark   & \xmark & \cmark & \cmark & \cmark & \xmark\\
 \citet{huang2023context} & \cmark & \xmark   &    \cmark  &  \cmark &   \xmark  &  \xmark  \\
\citet{li2024training} &  \cmark  &   \xmark     &      \cmark      &   \cmark   &     \cmark       &   \xmark    \\ 
\citet{chen2024training} &    \cmark &   \cmark   &        \xmark    &    \xmark   &     \cmark       &   \xmark  \\ 
\citet{kim2024transformers} &    \xmark & \xmark &       \cmark     &   \xmark    &     \xmark      &   \cmark    \\ 
Ours &    \cmark &      \cmark      &   \cmark         &    \cmark   &    \cmark        &    \cmark  \\ 
\bottomrule
\end{tabular}
\vspace{2mm}
\caption{Comparisons with existing theoretical works that study the learning dynamics of transformers in ICL. 
Here, the last column refers to the fact that the response in the regression task is generated by a linearly weighted  unknown representation (feature) model. 
}\label{tb:comparison}
 
\end{table}

\subsection{Related work} \label{sec:related}

\paragraph{In-context learning.} Recent research has investigated the theoretical underpinnings of transformers' ICL capabilities from diverse angles. For example, several works focus on explaining the in-context learning of  transformers from a Bayesian perspective~\citep{xie2021explanation,ahuja2023context,han2023context,jiang2023latent,wang2023large,wies2024learnability,zhang2023and,jeon2024information,hahn2023theory}.  \citet{li2023transformers} analyzed the generalization and stability of transformers' in-context learning. Focusing on the representation theory, \citet{akyurek2022learning,bai2024transformers} studied the expressive power of transformers on the linear regression task. \citet{akyurek2022learning} showed by construction that transformers can represent GD of ridge regression or the closed-form ridge regression solution. \citet{bai2024transformers} extended \citet{akyurek2022learning} and showed that transformers can implement a broad class of standard machine learning algorithms in-context. \citet{dai2022can,von2023transformers} showed transformers could in-context learn to perform GD. 

More pertinent to our work,  \citet{guo2023transformers} considered an ICL setting very similar to ours, where the label depends on the input through a basis of possibly complex but fixed template functions, composed with a linear function that differs in each prompt. By construction, the optimal ICL algorithm first transforms the inputs by the representation function, and then performs linear ICL on top of the transformed dataset. \citet{guo2023transformers} showed the existence of transformers
that approximately implement such algorithms, whereas our work is from a different perspective, showing that (pre)training the transformer loss by GD will naturally yield a solution with the aforementioned desirable property characterized in \citet{guo2023transformers}.

\paragraph{Training dynamics of transformers performing ICL.} A line of work initiated by \citet{garg2022can} aims to understand the ICL ability of transformers from an optimization perspective. \citep{zhang2023trained,kim2024transformers} analyzed the training dynamics of transformers with \textit{linear} attention. 
\citet{huang2023context,chen2024training,li2024training} studied the optimization dynamics of one-layer softmax attention transformers performing simple in-context learning tasks, such as linear regression~\citep{huang2023context,chen2024training} and binary classification~\citep{li2024training}.  

Among them, \citet{huang2023context} was the first to study the training dynamics of softmax attention, where they gave the convergence results of a one-layer transformer with single-head attention on linear regression tasks, assuming context features come from an orthogonal dictionary and each token in the prompts is drawn from a multinomial distribution. In order to leverage the concentration property inherent to multinomial distributions, they require the sequence length to be much larger than the size of dictionary. 
Their analysis indicates that the prompt tokens that are the same as the query will have dominating attention weights, which allows the transformer to {\em copy-paste} the correct answer from those prompt tokens.

\citet{li2024training} studied the training of a one-layer single-head transformer in ICL on binary classification tasks. Same as~\citet{huang2023context}, they  required the data to be pairwise orthogonal, and shared the same copy-paste mechanism as in \citet{huang2023context}. To be precise,  a fraction of their context inputs needs to contain the same pattern as the query to guarantee that the total attention weights on contexts matching the query pattern outweigh those on other contexts. 

\citet{chen2024training} studied the dynamics of {\em gradient flow} for training a one-layer multi-head softmax attention model for ICL of multi-task linear regression, where the coefficient matrix has certain spectral properties. They  required the sequence length to be sufficiently large~\citep[Assumption 2.1]{chen2024training}, together with  restrictive initialization conditions~\citep[Definition 3.1]{chen2024training}. While using the copy-paste analysis framework as in \citet{huang2023context,li2024training},  the attention probability vector in their work is delocalized, so that the attention is spread out to capture the information from similar tokens in regression tasks.
\citet{kim2024transformers} studied the dynamics of Wasserstein gradient flow for training a one-layer transformer with an infinite-dimensional fully-connected layer followed by a linear attention layer for ICL of linear regression, assuming infinite prompt length. \citet{nichani2024transformers} analyzed the optimization dynamics of a simplified two-layer transformer with gradient descent on in-context learning a latent causal graph.  

\paragraph{Notation.} Boldface small and capital letters denote vectors and matrices, respectively. Sets are denoted with curly capital letters, e.g., $\mathcal{W}$.
We let $(\RR^d,\norm{\cdot})$ denote the $d$-dimensional real coordinate space equipped with norm $\norm{\cdot}$. $\mI_d$ is the identity matrix of dimension $d$. The $\ell^p$-norm of $\vv$ is denoted by $\norm{\vv}_p$, where $1\leq p\leq \infty$, and the spectral norm and the Frobenius norm of a matrix $\mM$ are denoted by $\norm{\mM}_2$ and $\norm{\mM}_F$, respectively. $\mM^\dag$ stands for the Moore-Penrose pseudoinverse of matrix $\mM$, and $\mM_{:,i}$ stands for its $i$-th column vector.
We let $[N]$ denote $\{1, \dots, N\}$, and denote $\vone_N$ to represent the all-one vector of length $N$, 
and  by $\vzero$ a vector or a matrix consisting of all 0's. We allow the application of functions such as  $\exp(\cdot)$ to vectors or matrices, with the understanding that they are applied in an element-wise manner. 
We use $\ve_i$ to denote the one-hot vector whose $i$-th entry is 1 and the other entries are all 0. 

\section{Problem Setup} \label{sec:setup}

\paragraph{In-context learning with representation.}
We consider ICL of  regression with unknown representation, similar to the setup introduced in \citet{guo2023transformers}. To begin, let  $f: \R^d\rightarrow\R^m $
 be a fixed representation map that $f(\vx)=(f_1(\vx),\cdots,f_m(\vx))^\top$ for any $\vx \in \R^d$. The map $f$ can be quite general, which can be regarded as a feature extractor that will be learned by the transformer. We assume that each ICL task corresponds to a map $\vlambda^\top f(\cdot)$ that lies in the linear span of those $m$ basis functions in $f(\cdot)$, where $\vlambda$ is generated according to the distribution $\mathcal{D}_{\vlambda}$. Thus, for each ICL instance, the (noisy) label of an input $\vv_k$ ($\forall k\in[K]$) is given as 
\begin{equation} \label{eq:ICL} 
\quad y_k =  \vlambda^\top(f(\vv_k)+\veps_k), \qquad  \vlambda\sim \mathcal{D}_{\vlambda},\quad \veps_k\overset{i.i.d.}{\sim}\mathcal{N}(0,\tau\mI_m)
\end{equation}
 where 
 $\tau>0$ is the noise level. 
 
 The goal of ICL is to form predictions on query $\vx_{\mathsf{query}}$ given in-context labels of the form \eqref{eq:ICL} on a few inputs, known as {\em prompts}. In this paper, we use $\voc$ to denote the {\em dictionary} set that contains all $K$ unit-norm {\em distinct} tokens, i.e., $\voc\coloneqq\left\{\vv_1,\cdots,\vv_K\right\}\subset \R^{d}$ with each token $\norm{\vv_k}_2=1$.
We assume that each prompt $P=P_{\vlambda}$ provides the first $N$ tokens (with $N \ll K$) and their labels, 
and is embedded in the following matrix
\begin{equation}\label{eq:embedding}
  \mE^P  \coloneqq
  \begin{pmatrix}
    \vv_1 & \vv_2 & \cdots & \vv_N\\
    y_1 & y_2 & \cdots & y_N
  \end{pmatrix} \coloneqq   \begin{pmatrix} 
  \promptmtrx \\
  \vy^{\top}
   \end{pmatrix} 
  \in \mathbb{R}^{(d+1)\times N},
  \end{equation}
where 
\begin{equation}\label{eq:prompt_token}
\promptmtrx\coloneqq(\vv_1,\cdots,\vv_N)\in\R^{d\times N}
\end{equation}
 is the collection of prompt tokens, and $\vy\coloneqq\left(y_1,\cdots, y_N\right)^\top$ is the prompt label. 
Given the prompt as the input, the transformer predicts the labels for all the $K$ tokens $y_1,\cdots,y_K$ in the dictionary set.

\begin{figure}[t]
    \centering
 \includegraphics[width=0.9\textwidth, trim=10pt 120pt 10pt 80pt, clip]{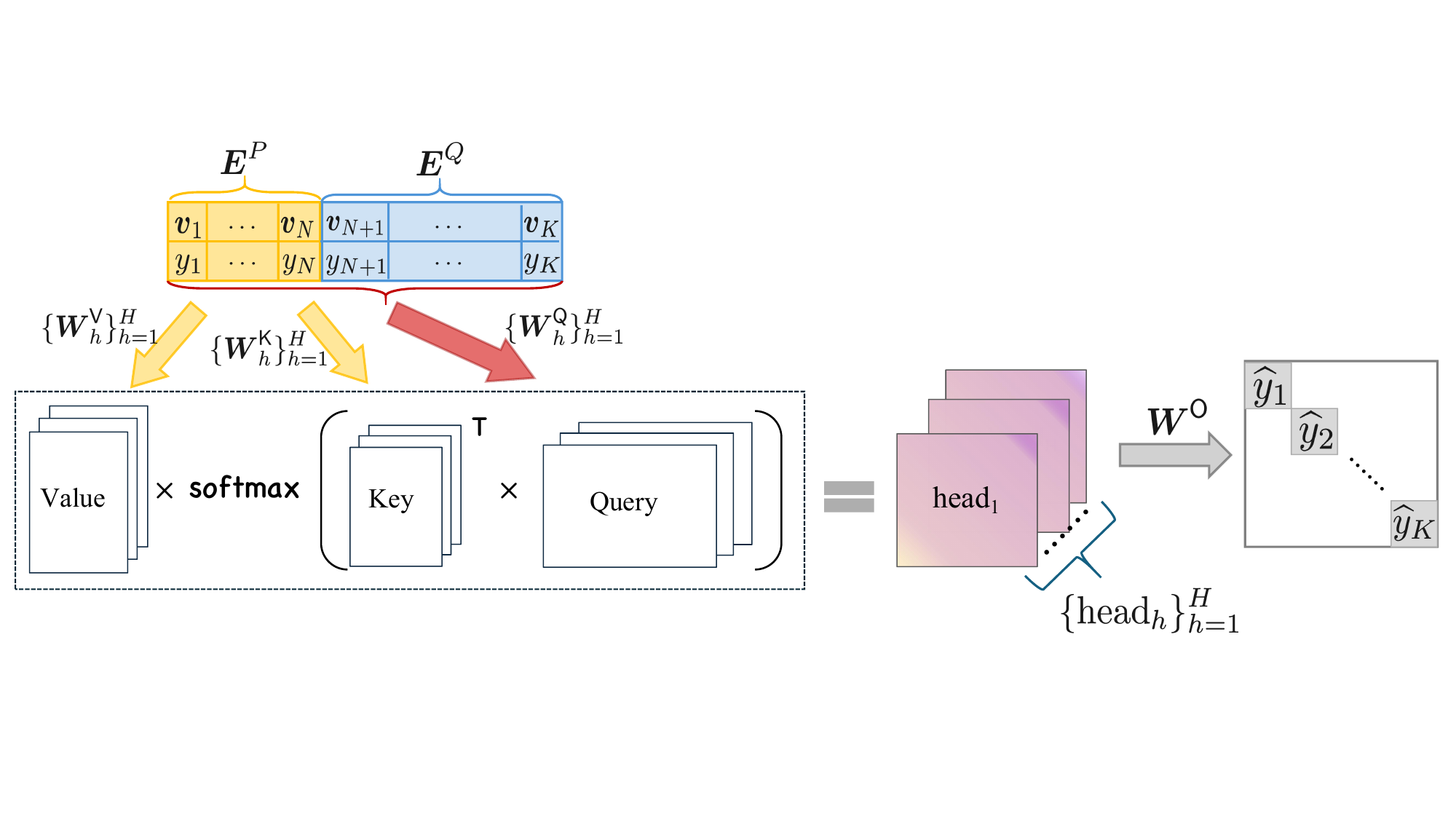} 
    \caption{The structure of a one-layer transformer with multi-head softmax attention.}
    \label{fig:structure}
\end{figure}

\paragraph{Transformer architecture.} 
We adopt a one-layer transformer with multi-head softmax attention \citep{chen2024training} --- illustrated in \Cref{fig:structure} --- to predict the labels of all the tokens in the dictionary $\voc$, where $H$ is the number of heads. Denote the query embedding as
\begin{equation}\label{eq:padding}
    \mE^Q\coloneqq
    \begin{pmatrix}
      \vv_{N+1} &\vv_{N+2} &\cdots &\vv_K\\
      0 &0 &\cdots &0
    \end{pmatrix}\in\R^{(d+1)\times (K-N)},
\end{equation}
and denote the embedding of both the prompt and the query as $\mE\coloneqq(\mE^P , \mE^Q )\in\R^{(d+1)\times K}$.
We define the output of each transformer head as
\begin{equation} \label{eq:head}
\text{head}_h(\mE)\coloneqq\mW_h^{\mathsf{V}} \mE^P \cdot \sm\left((\mE^{P})^\top(\mW_h^{\mathsf{K}})^\top\mW_h^\mathsf{Q} {\mE} \right),\quad h\in[H],
\end{equation}
where $\mW_h^\mathsf{Q} \in\R^{d_{\text{e}} \times (d+1)} $, $\mW_h^{\mathsf{K}} \in\R^{ d_{\text{e}} \times (d+1)}$, and $\mW_h^{\mathsf{V}}\in\R^{K\times (d+1)}$ are the query, key, and value matrices, respectively, 
and the softmax is applied column-wisely, i.e., given a vector input $\vx$, the $i$-th entry of $\sm(\vx)$ is given by $e^{x_i}/\sum_{j}e^{x_j}$.
The attention map of the transformer $\transformer(\mE)$ is defined as
\begin{equation}\label{eq:transformer_structure}
    \transformer(\mE)\coloneqq \mW^\mathsf{O}
    \begin{pmatrix}
        \text{head}_1(\mE) \\
        \vdots \\
        \text{head}_H(\mE)
    \end{pmatrix} \in \mathbb{R}^{K\times K}, 
\end{equation}
where $\mW^\mathsf{O}$ is the output matrix. Following recent theoretical literature to streamline analysis \citep{huang2023context,nichani2024transformers,deora2023optimization,chen2024training}, we assume that the embedding matrices take the following forms: 
\begin{subequations}
\begin{align}
\mW^\mathsf{O}  \coloneqq\left(\mI_K,\cdots,\mI_K\right) & \in\R^{K\times HK},\quad \mW_h^{\mathsf{V}}\coloneqq(\vzero,\vw_h)\in\R^{K\times (d+1)}, \label{eq:W_O_W_V} \\
 (\mW_h^{\mathsf{K}})^\top\mW_h^\mathsf{Q} & =
\begin{pmatrix}
    \mQ_h & \vzero \\
    \vzero & 0
\end{pmatrix}\in\R^{(d+1)\times(d+1)},\quad \forall h\in[H], \label{eq:W_KQ}
\end{align}
\end{subequations}
where $\vw_h=(w_{h,1},\cdots,w_{h,K})^\top\in\R^K$ and $\mQ_h\in R^{d\times d}$ are trainable parameters for all $h\in[H]$.  

The prediction of the labels  is provided by the diagonal entries of $ \transformer(\mE)$, which we denote by $\vypred=(\widehat y_1, \cdots, \widehat y_K)\in\R^K$.
Note that $\widehat y_k$ takes the following form under our parameter specification:
\begin{equation}\label{eq:y_predict}
    \forall k\in[K]:\qquad \widehat y_k=\Big\langle\vy,\sum_{h=1}^H w_{h,k}\,\sm(\promptmtrx^\top\mQ_h\vv_k)\Big\rangle.
\end{equation}

\paragraph{Training via GD.} 
Let $\vtheta=\{\mQ_h,\vw_h\}_{h=1}^H$ denote all trainable parameters of $\transformer$. Let $\epsfull\coloneqq (\veps_1,\cdots,\veps_K)\in\R^{m\times K}$ denote the noise matrix. Given training data over ICL instances, the goal of training is to predict labels $y_k$ for all $\vv_k\in\voc$. Specifically, we train the transformer using gradient descent (GD) by optimizing the following mean-squared population loss:
\begin{equation}\label{eq:loss}
  \LL(\vtheta)\coloneqq\frac{1}{2}\EE_{\vlambda,\epsfull}\left[\frac{1}{K}\sum_{k=1}^K \left(\widehat y_k-y_k\right)^2\right].
\end{equation}
We apply different learning rates  $\eta_Q,\eta_w>0$ for updating $\left\{\mQ_h\right\}_{h=1}^H$ and $\left\{\vw_h\right\}_{h=1}^H$, respectively, i.e., at the $t$-th ($t\geq 1$) step, we have
\begin{align} \label{eq:original_GD}
    \forall h\in[H]:\quad\mQ_h^{(t)}=\mQ_h^{(t-1)}-\eta_Q\nabla_{\mQ_h} \LL(\vtheta^{(t-1)}),\quad \vw_h^{(t)}=\vw_h^{(t-1)}-\eta_w\nabla_{\vw_h} \LL(\vtheta^{(t-1)}),
\end{align}
where $\vtheta^{(t)}=\{\mQ_h^{(t)},\vw_h^{(t)}\}_{h=1}^H$ is the parameter at the $t$-th step.

\paragraph{Inference time.} At inference time, given a prompt $P=P_{\vlambda}$ with $N$ examples, where \textit{$\vlambda$ may not be in the support of the generation distribution $\mathcal{D}_{\vlambda}$}, the transformer applies the pretrained parameters and predicts the labels of all $K$ tokens without further parameter updating.

\section{Theoretical Analysis} \label{sec:theory}

\subsection{Training time convergence}\label{sec:theory_training}

In this section, we show that the training loss $\LL$ converges to its minimum value at a linear rate during training, i.e., the function gap
\begin{equation}\label{eq:gap}
    \Delta^{(t)}\coloneqq\LL(\vtheta^{(t)})-\inf_{\vtheta}\LL\rightarrow 0,\quad t\rightarrow \infty
\end{equation}
at a linear rate, under some appropriate assumptions.

\paragraph{Key assumptions.} We first state our technical assumptions. The first assumption is on the distribution $\mathcal{D}_{\vlambda}$ for generating the coefficient vector $\vlambda$ of the representation maps.
\begin{asmp}[Assumption on distribution $\mathcal{D}_{\vlambda}$]\label{asmp:lambda_dist}
  We assume that in \eqref{eq:ICL} each entry $\lambda_i$ is drawn independently  and satisfies $\EE[\lambda_i]=0$ and $\EE[\lambda_i^2]=1$ for all $i\in[m]$.
\end{asmp}

To proceed, we introduce the following notation:
\begin{equation}\label{eq:Z}
\mZ\coloneqq \left( f(\vv_1) \cdots f(\vv_N)\right)\in\R^{m\times N},\quad
\bar{\mZ}\coloneqq\left(\mZ^\top\mZ+m\tau\mI_N\right)^{1/2}\in\R^{N\times N}, 
\;\; \bar f_{\max}\coloneqq \max_{i\in[N]}\norm{\bar\vz_i}_2,
\end{equation}
where $\bar\vz_i$ is the $i$-th column vector of $\bar\mZ$ for $i\in[N]$. We further define  $\mC_k^{(t)}$ ($k\in[K]$, $t\in\NN_+$) and $\mB_k^{(t)}$ as follows:
\begin{equation}\label{eq:matrix_C_B}
    \mC_k^{(t)}\coloneqq \sm(\promptmtrx^\top\mQ_1^{(t)}\vv_k,\cdots,\promptmtrx^\top\mQ_H^{(t)}\vv_k)\in\R^{N\times H},\qquad \mB_k^{(t)}=\bar\mZ\mC_k^{(t)}\in\R^{N\times H}.
\end{equation}


To guarantee the convergence, we require the initialization of the parameters  $\vtheta^{(0)}$ satisfies the following condition.
\begin{asmp}[Assumption on initialization]\label{asmp:gamma}
   For all $k\in[K]$, $\mB_k^{(0)}$ has full row rank. 
\end{asmp}

 Before stating our main theorem, let us  examine when the initialization condition in Assumption~\ref{asmp:gamma} is met. Fortunately, 
we only require the following mild assumption on $\promptmtrx$ to ensure our parameter initialization has good properties.
\begin{asmp}[Assumption on $\promptmtrx$]\label{asmp:init}
    There exists one row vector $\vx=(x_1,\cdots,x_N)^\top$ of the prompt token matrix $\,\promptmtrx$ (cf.~\eqref{eq:prompt_token}) such that $x_i\ne x_j$, $\forall i\ne j$.  
\end{asmp}

Assumption~\ref{asmp:init} implies that $\voc$ has distinct tokens, i.e., $\vv_j\ne\vv_k$ when $j\ne k$. It is worth noting that Assumption~\ref{asmp:init} is the only assumption we have on the dictionary $\voc$. In comparison, all other theoretical works in Table~\ref{tb:comparison} impose somewhat unrealistic assumptions on $\voc$. For example, 
\citet{huang2023context,li2023transformers,nichani2024transformers} assume that the tokens are pairwise orthogonal, which is restrictive since it implies that the dictionary size $K$ should be no larger than the token dimension $d$, whereas in practice it is often the case that $K\gg d$~\citep{reid2024gemini,touvron2023llama}. In addition, \citet{chen2024training,zhang2023trained,wu2023many} assume that each token is independently sampled from some Gaussian distribution, which also does not align with practical scenarios where tokens are from a fixed dictionary and there often exist (strong) correlations between different tokens.

The following proposition states that when the number of heads exceeds the number of prompts, i.e. $H\geq N$, we can guarantee that Assumption~\ref{asmp:gamma} holds with probability 1 by simply initializing $\{\mQ_h\}_{h=1}^H$ using Gaussian distribution.
\begin{prop}[Initialization of $\{\mQ_h\}_{h=1}^H$]\label{prop:init}
    Suppose Assumptions~\ref{asmp:lambda_dist},~\ref{asmp:init} hold and $H\geq N$. For any fixed $\beta>0$, let $\mQ_h^{(0)}(i,j)\overset{i.i.d.}{\sim}\mathcal{N}(0,\beta^2)$, then Assumption~\ref{asmp:gamma} holds almost surely.
\end{prop}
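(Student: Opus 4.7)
The plan is to reduce the problem to showing that a certain real-analytic function of the Gaussian initialization is not identically zero. Because $\tau>0$, the matrix $\bar\mZ=(\mZ^\top\mZ+m\tau\mI_N)^{1/2}$ is positive definite and hence invertible, so $\mathrm{rank}(\mB_k^{(0)})=\mathrm{rank}(\bar\mZ\mC_k^{(0)})=\mathrm{rank}(\mC_k^{(0)})$. It therefore suffices to prove that $\mC_k^{(0)}\in\R^{N\times H}$ has full row rank almost surely for each fixed $k\in[K]$; a union bound over the finite set $[K]$ then yields Assumption~\ref{asmp:gamma} simultaneously for all $k$. Moreover, $\mC_k^{(0)}$ depends on the initialization only through the vectors $\vu_h\coloneqq\mQ_h^{(0)}\vv_k\in\R^d$, and since $\mQ_h^{(0)}$ has i.i.d.\ $\mathcal{N}(0,\beta^2)$ entries and $\|\vv_k\|_2=1$, these are i.i.d.\ $\mathcal{N}(\vzero,\beta^2\mI_d)$ across $h$, so $(\vu_1,\ldots,\vu_H)$ has a strictly positive Lebesgue density on $(\R^d)^H$.

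Using $H\ge N$, it suffices to prove that the $N\times N$ submatrix formed by the first $N$ columns of $\mC_k^{(0)}$ is invertible almost surely; equivalently, that
\[
D(\vu_1,\ldots,\vu_N)\;\coloneqq\; \det\bigl[\sm(\promptmtrx^\top\vu_1),\,\ldots,\,\sm(\promptmtrx^\top\vu_N)\bigr]
\]
is nonzero almost surely. Every entry of each softmax column is an analytic function of the $\vu_h$'s, so $D$ is real-analytic on $(\R^d)^N$. Invoking the standard fact that a non-identically-zero real-analytic function on $\R^{Nd}$ vanishes only on a Lebesgue null set, combined with the absolute continuity of $(\vu_1,\ldots,\vu_N)$, will give $\PP(D=0)=0$ and complete the proof.

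The main obstacle, and essentially the only substantive step, is to show $D\not\equiv 0$. I will do this by exhibiting an explicit evaluation point. For $h\in[N]$, choose $\vu_h^\star = c\vv_h$ with a scaling parameter $c>0$. Assumption~\ref{asmp:init} implies that the prompt tokens $\vv_1,\ldots,\vv_N$ are pairwise distinct (they must differ in the row with all-distinct entries), and since they are unit vectors, Cauchy--Schwarz gives $\vv_j^\top\vv_h<1=\vv_h^\top\vv_h$ whenever $j\neq h$. Hence $\promptmtrx^\top\vu_h^\star=c\,(\vv_1^\top\vv_h,\ldots,\vv_N^\top\vv_h)^\top$ has a unique strict maximum at coordinate $h$, so $\sm(\promptmtrx^\top\vu_h^\star)\to\ve_h$ entrywise as $c\to\infty$. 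By continuity of the determinant, $D(c\vv_1,\ldots,c\vv_N)\to\det(\mI_N)=1$, so choosing $c$ sufficiently large produces a point where $D$ is nonzero, establishing $D\not\equiv 0$ and closing the proof.
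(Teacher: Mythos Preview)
Your argument is correct. Both you and the paper first reduce to showing that the $N\times N$ submatrix $\mC_k^{(0)}(:,1{:}N)$ is nonsingular almost surely, using invertibility of $\bar\mZ$. From there the approaches diverge. The paper argues by contradiction: using independence of the columns $\vq_h=\mQ_h^{(0)}\vv_k$, it conditions on columns $2,\ldots,N$ via Fubini to obtain a fixed $(N{-}1)$-dimensional subspace that $\exp(\promptmtrx^\top\vq_1)$ would hit with positive probability, then picks out the single coordinate of $\vq_1$ corresponding to the ``all-distinct'' row of $\promptmtrx$ from Assumption~\ref{asmp:init} and shows the resulting one-variable exponential sum $\sum_i y_i e^{v_{1i}q+\text{const}}$ (with distinct exponents $v_{1i}$) has only finitely many zeros. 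Your route instead packages everything into the determinant $D$, observes it is real-analytic in $(\vu_1,\ldots,\vu_N)$, and reduces the whole question to exhibiting a single nonvanishing point, which you produce via the temperature limit $\vu_h=c\vv_h$ and the softmax-to-one-hot convergence. Your method is shorter and uses Assumption~\ref{asmp:init} only through its weakest consequence (pairwise distinctness of the prompt tokens, hence strict Cauchy--Schwarz), while the paper's method avoids invoking the measure-zero property of zero sets of real-analytic functions and instead relies on elementary facts about exponential polynomials, exploiting the ``distinguishing row'' structure directly.
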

\begin{proof}
See Appendix~\ref{sec_app:proof_prop_init}.
\end{proof}

\paragraph{Choice of learning rates.} Define 
\begin{equation}\label{eq:eigen_0}
\zeta_0\coloneqq\min_{k\in[K]}\left\{\lambda_{\min}\left(\mB_k^{(0)}\mB_k^{(0)\top}\right)\right\},
\end{equation}
where $\Delta^{(0)}$ is the initial function gap (c.f.~\eqref{eq:gap}). Assumption~\ref{asmp:gamma} indicates that $\zeta_0>0$.
Let $\gamma$ be any positive constant that satisfies 
    \begin{equation}\label{eq:condition}
        \gamma\geq \zeta_0^{-5/4}\left(\frac{128\sqrt{2}}{\sqrt{2}-1}\norm{\bar\mZ}_2^2 \sqrt{H}\bar f_{\max}K^{3/2}\Delta^{(0)}\right)^{1/2}.
    \end{equation}
    We set the learning rates as
    \begin{equation}\label{eq:choice_lr}
        \eta_Q\leq 1/ L \quad\text{and} \quad \eta_w=\gamma^2\eta_Q,
    \end{equation}
    where\footnote{We leave a tighter, but more complicated, expression of $L$ in the appendix (cf. \eqref{eq:L_tight}) in the appendix and present a simplified form in the main paper for readability.}
    \begin{align} 
            \textstyle L^2&\textstyle=\left(8\sqrt{2}H\sqrt{K}\frac{\norm{\bar\mZ}_2^2}{\zeta_0}\sqrt{\Delta^{(0)}}+1+\frac{\norm{\mZ^\top\Zfull}_2}{m\tau}\right)^2\norm{\bar\mZ}_2^4\cdot\left(\frac{8}{K}\gamma^2+\frac{4096}{\gamma\zeta_0^2}K^2 N\Delta^{(0)}\right)\nonumber \\
            &\textstyle \qquad+2H^2\norm{\bar\mZ}_2^4\left(\frac{\gamma^4}{K^2}+\frac{16384}{\gamma \zeta_0^4}K^3\norm{\bar\mZ}_2^2\left(\Delta^{(0)}\right)^2\right).
 \label{eq:smoothness_const_bar_L}
    \end{align}

\paragraph{Theoretical guarantee.} Now we are ready to state our first main result, regarding the training dynamic of the transformer.
\begin{thm}[Training time convergence]\label{thm:rate_under}
    Suppose Assumptions~\ref{asmp:lambda_dist},~\ref{asmp:gamma} hold. We let $\vw_k^{(0)}=\vzero$ and set the learning rates as in \eqref{eq:choice_lr}. Then we have
    \begin{equation}\label{eq:rate_under}
        \LL(\vtheta^{(t)})-\inf_{\vtheta}\LL(\vtheta)\leq \left(1-\frac{\eta_w\zeta_0}{2K}\right)^t\left(\LL(\vtheta^{(0)})-\inf_{\vtheta}\LL(\vtheta)\right),\quad \forall t\in\NN.
    \end{equation} 
\end{thm}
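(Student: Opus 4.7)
The plan is to reduce the loss \eqref{eq:loss} to a block-structured quadratic in which the $k$-th block is a linear regression of $\tilde\vw_k:=(w_{1,k},\ldots,w_{H,k})^\top\in\RR^H$ against a softmax-dependent design matrix $\mB_k^{(t)}$, then combine a Polyak--{\L}ojasiewicz (PL) inequality on the $\vw$-block with a two-timescale step-size choice that keeps $\zeta^{(t)}:=\min_k\lambda_{\min}(\mB_k^{(t)}\mB_k^{(t)\top})$ bounded away from zero along the trajectory. First, I would integrate out $\vlambda$ (mean zero, covariance $\mI_m$ by Assumption~\ref{asmp:lambda_dist}) and the Gaussian noise $\epsfull$. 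Writing $\vc_{h,k}=\sm(\promptmtrx^\top\mQ_h\vv_k)$ and using $\mZ^\top\mZ+m\tau\mI_N=\bar\mZ^2$, a direct calculation---treating the case $k\le N$ (where $\veps_k$ couples the prompt and the target) separately from $k>N$ (where $\veps_k$ is independent of the prompt noise)---yields
$$\LL(\vtheta)\;=\;\frac{1}{2K}\sum_{k=1}^K\bigl\|\mB_k\tilde\vw_k-\vb_k^\star\bigr\|_2^2 \;+\; C_0,$$
with $\mB_k=\bar\mZ\mC_k$ as in \eqref{eq:matrix_C_B}, $\vb_k^\star=\bar\mZ\ve_k$ for $k\le N$, $\vb_k^\star=\bar\mZ^{-1}\mZ^\top f(\vv_k)$ for $k>N$, and $C_0$ independent of $\vtheta$. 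Assumption~\ref{asmp:gamma} renders each $\mB_k^{(0)}$ surjective onto $\RR^N$ (which forces $H\ge N$), so $\inf_\vtheta\LL=C_0$ and $\Delta^{(t)}=\frac{1}{2K}\sum_k\|\mB_k^{(t)}\tilde\vw_k^{(t)}-\vb_k^\star\|_2^2$.

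\textbf{PL on the $\vw$-block and descent.} Differentiating the display above, $\nabla_{\tilde\vw_k}\LL=\frac{1}{K}\mB_k^\top(\mB_k\tilde\vw_k-\vb_k^\star)$; regrouping $\{\tilde\vw_k\}_k$ by heads (the two indexings refer to the same scalar parameters),
$$\sum_{h=1}^H\bigl\|\nabla_{\vw_h}\LL(\vtheta^{(t)})\bigr\|_2^2 \;=\;\sum_{k=1}^K\bigl\|\nabla_{\tilde\vw_k}\LL\bigr\|_2^2 \;\ge\; \frac{\zeta^{(t)}}{K^2}\cdot 2K\,\Delta^{(t)} \;=\; \frac{2\,\zeta^{(t)}}{K}\,\Delta^{(t)}.$$
Combined with a block-smoothness bound for $\LL$ (whose Lipschitz constant is precisely the $L$ in \eqref{eq:smoothness_const_bar_L}, engineered so that the $\gamma^2$ and $\gamma^4$ factors absorb the two-timescale ratio $\eta_w/\eta_Q=\gamma^2$ into a single effective smoothness), the standard descent lemma applied to the update~\eqref{eq:original_GD} with $\eta_Q\le 1/L$ reduces to
$$\LL(\vtheta^{(t+1)})-\LL(\vtheta^{(t)})\;\le\; -\,\frac{\eta_w}{2}\sum_h\|\nabla_{\vw_h}\LL(\vtheta^{(t)})\|_2^2.$$
Provided $\zeta^{(t)}\ge\zeta_0/2$, combining these two displays immediately yields $\Delta^{(t+1)}\le(1-\eta_w\zeta_0/(2K))\Delta^{(t)}$, which is exactly \eqref{eq:rate_under}.

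\textbf{Main obstacle: keeping $\zeta^{(t)}\ge\zeta_0/2$.} The technical heart of the argument is that $\mB_k^{(t)}$ depends on $\{\mQ_h^{(t)}\}$ nonlinearly through the softmax, so $\zeta^{(t)}$ is a moving target and convexity cannot be invoked directly. I would proceed by strong induction on $t$. Assuming $\zeta^{(s)}\ge\zeta_0/2$ for all $s<t$, the descent step gives $\Delta^{(s)}\le(1-\rho)^s\Delta^{(0)}$ with $\rho=\eta_w\zeta_0/(2K)$. The softmax has Jacobian with operator norm at most $1$, and a chain-rule bound on $\nabla_{\mQ_h}\LL$ will produce an estimate of the form $\|\nabla_{\mQ_h}\LL(\vtheta^{(s)})\|_2\lesssim \|\bar\mZ\|_2\,\|\vw^{(s)}\|_F\sqrt{\Delta^{(s)}}$. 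Telescoping the GD update and using $\vw_h^{(0)}=\vzero$ (so that $\|\vw^{(s)}\|_F$ grows like $\gamma^2$ times an accumulated gradient norm) gives
$$\|\mQ_h^{(t)}-\mQ_h^{(0)}\|_2\;\le\; \eta_Q\sum_{s=0}^{t-1}\|\nabla_{\mQ_h}\LL(\vtheta^{(s)})\|_2 \;\lesssim\; \frac{\eta_Q}{\rho}\,\sqrt{\Delta^{(0)}}\cdot\max_{s<t}\|\vw^{(s)}\|_F.$$
Pushing this through the softmax Lipschitz bound controls $\|\mB_k^{(t)}-\mB_k^{(0)}\|_2$, which by Weyl's inequality applied to $\mB_k\mB_k^\top$ controls $|\zeta^{(t)}-\zeta_0|$. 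The lower bound on $\gamma$ imposed in \eqref{eq:condition} is exactly the threshold at which this perturbation is at most $\zeta_0/2$, closing the induction. Thus the interlocked claims ``$\zeta^{(s)}\ge\zeta_0/2$ for $s\le t$'' and ``$\Delta^{(s)}\le(1-\rho)^s\Delta^{(0)}$ for $s\le t$'' are propagated jointly, delivering \eqref{eq:rate_under}.
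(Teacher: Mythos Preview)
Your proposal is correct and follows essentially the same route as the paper: reformulate $\LL$ as $\frac{1}{2K}\sum_k\|\bar\mZ\vdelta_k^\theta\|_2^2+\LL^\star$, extract a PL inequality from the $\vw$-block via $\lambda_{\min}(\mB_k\mB_k^\top)$, combine with smoothness to get the one-step contraction, and run an induction that jointly propagates linear decay and the eigenvalue lower bound $\zeta^{(t)}\ge\zeta_0/2$. The only organizational differences are that the paper (i) makes the two-timescale absorption explicit via the reparameterization $\valpha_h=\vw_h/\gamma$ (Lemma~\ref{lm:reparameter}), so that smoothness and descent are stated for a single step size $\eta_Q$, and (ii) carries a uniform bound $\|\valpha_h^{(t)}\|_2\le\alpha$ as a third explicit induction hypothesis---you invoke this implicitly when you say ``$\|\vw^{(s)}\|_F$ grows like $\gamma^2$ times an accumulated gradient norm,'' but note that the smoothness constant $L$ itself depends on $\alpha$, so the descent step also relies on this bound and it should appear in the joint induction alongside $\zeta^{(t)}\ge\zeta_0/2$ and $\Delta^{(t)}\le(1-\rho)^t\Delta^{(0)}$.
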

\begin{proof}
See Appendix~\ref{sec_app:proof_rate_under}. 
 \end{proof}

Theorem~\ref{thm:rate_under}, together with Proposition~\ref{prop:init}, shows that the training loss converges to its minimum value at a linear rate, under mild assumptions of the task coefficients and token dictionary. This gives the \textit{first} convergence result for transformers with multi-head softmax attention trained using GD to perform ICL tasks (see Table~\ref{tb:comparison}). Our convergence guarantee \eqref{eq:rate_under} also indicates that the convergence speed decreases as the size $K$ of the dictionary or the number $H$ of attention heads increases, which is intuitive because training with a larger vocabulary size or number of parameters is more challenging. However, a small $H$ will limit the expressive power of the model (see Section~\ref{sec:interpretation} for detailed discussion), and we require $H\geq N$ to guarantee Assumption~\ref{asmp:gamma} holds, as stated in Proposition~\ref{prop:init}.

\subsection{Inference time performance}\label{sec:inference}  

We now move to examine the inference time performance, where the  coefficient vector $\vlambda$ corresponding to the inference task may  not drawn from $\mathcal{D}_{\vlambda}$. In fact, we only assume that the coefficient vector $\vlambda$ at inference time is bounded as in the following assumption.
\begin{asmp}[Boundedness of $\vlambda$ at inference time]\label{asmp:boundedness}
    We assume that at inference time $\norm{\vlambda}_2\leq B$ for some $B>0$.
\end{asmp}

For notational simplicity, let $\mZ^Q\in\R^{m\times (K-N)}$ denote
\begin{equation}\label{eq:Z^Q}
    \mZ^Q \coloneqq(f(\vv_{N+1}),\cdots,f(\vv_{K}))\in\R^{m\times (K-N)}.
\end{equation}  
The following theorem characterizes the performance guarantee of the transformer's output $\vypred$ (after sufficient training) at the inference time. 
\begin{thm}[Inference time performance]\label{thm:inference_under}
  Let  $\widehat\vlambda$ be the solution to the following ridge regression problem:
\begin{equation}\label{eq:ridge_regression}
\textstyle \widehat\vlambda\coloneqq\argmin_{\vlambda}\left\{\frac{1}{2N}\sum_{i=1}^N( y_i-\vlambda^\top f(\vv_i))^2+\frac{m\tau}{2N}\norm{\vlambda}_2^2\right\}.
\end{equation}
Under the assumptions in Theorem~\ref{thm:rate_under}, for any $\varepsilon>0$ and $\delta\in(0,1)$, if the number of training iterates $T$ satisfies
    \begin{equation}\label{eq:iteration_complexity_under}
 \textstyle           T\geq \frac{\log\bigg(B^2\Delta^{(0)}\left(\norm{\mZ}_2+\sqrt{\tau}\left(2\sqrt{N\log(1/\delta)}+2\log(1/\delta)+N\right)^{1/2}\right)^2\Big /\left(m\tau\varepsilon\right)\bigg)}{\log\left(1/\left(1-\frac{\eta_w\zeta_0}{2K}\right)\right)},    
    \end{equation}
    then given any prompt $P$ that satisfies Assumption~\ref{asmp:boundedness} at the inference time, with probability at least $1-\delta$, the output of the transformer $\vypred$ satisfies
    \begin{equation}\label{eq:esp_acc_under}
\frac{1}{2K}\norm{\vypred-\vylimit}_2^2\leq \varepsilon, \qquad \text{with}\quad \vylimit\coloneqq \begin{pmatrix}
        \vy \\
        \left(\mZ^Q\right)^\top \widehat\vlambda
    \end{pmatrix}.
    \end{equation} 
\end{thm}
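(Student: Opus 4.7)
The strategy is to first identify the global minimizer of $\LL$ as the best linear predictor of $y_k$ given $\vy$, then to convert the function-value convergence provided by Theorem~1 into a pointwise bound on $\vypred-\vylimit$ via Cauchy--Schwarz, and finally to control the random factor via standard $\chi^2$ concentration on the inference-time noise.

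\textbf{Step 1: Reformulate the loss and identify the minimizer.} Under the parameterization \eqref{eq:y_predict}, the predictor is linear in $\vy$: $\widehat y_k = \va_k(\vtheta)^\top \vy$ with $\va_k(\vtheta) := \sum_{h=1}^H w_{h,k}\,\sm(\promptmtrx^\top \mQ_h\vv_k)$. Using $\EE_{\vlambda,\veps}[\vy\vy^\top] = \mZ^\top\mZ + m\tau\mI_N = \bar\mZ^2$ (invoking Assumption~\ref{asmp:lambda_dist}) and computing $\vb_k := \EE[\vy y_k]$ (which equals $\bar\mZ^2 \ve_k$ for $k\in[N]$ and $\mZ^\top f(\vv_k)$ for $k>N$), one obtains
\begin{equation*}
\LL(\vtheta) - \inf_{\vtheta'} \LL(\vtheta') \;=\; \frac{1}{2K}\sum_{k=1}^K \bigl\|\bar\mZ\bigl(\va_k(\vtheta)-\va_k^\star\bigr)\bigr\|_2^2, \qquad \va_k^\star := \bar\mZ^{-2}\vb_k.
\end{equation*}
For $k\in[N]$ this gives $\va_k^\star=\ve_k$, and for $k>N$ the push-through identity $(\mZ\mZ^\top+m\tau\mI_m)^{-1}\mZ = \mZ\bar\mZ^{-2}$ identifies $(\va_k^\star)^\top\vy = f(\vv_k)^\top\widehat\vlambda$ for $\widehat\vlambda$ defined in \eqref{eq:ridge_regression}. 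Hence $(\va_k^\star)^\top\vy$ is exactly the $k$-th entry of $\vylimit$, so $\vylimit = \mA^\star \vy$ with $\mA^\star=(\va_1^\star,\dots,\va_K^\star)^\top$.

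\textbf{Step 2: From function gap to prediction error via Cauchy--Schwarz.} Since $\vypred_k - \vylimit_k = (\va_k(\vtheta)-\va_k^\star)^\top \vy$, Cauchy--Schwarz applied to the inner product $\langle\bar\mZ(\va_k-\va_k^\star),\,\bar\mZ^{-1}\vy\rangle$ yields $(\vypred_k-\vylimit_k)^2 \le \|\bar\mZ(\va_k-\va_k^\star)\|_2^2\,\|\bar\mZ^{-1}\vy\|_2^2$. Summing over $k$ and dividing by $2K$ gives
\begin{equation*}
\frac{1}{2K}\bigl\|\vypred-\vylimit\bigr\|_2^2 \;\le\; \bigl\|\bar\mZ^{-1}\vy\bigr\|_2^2 \cdot \bigl(\LL(\vtheta)-\inf_{\vtheta'}\LL(\vtheta')\bigr).
\end{equation*}

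\textbf{Step 3: High-probability bound on $\|\bar\mZ^{-1}\vy\|_2^2$.} Because $\bar\mZ^2\succeq m\tau\mI_N$, we have $\|\bar\mZ^{-1}\vy\|_2^2 \le \|\vy\|_2^2/(m\tau)$. Writing $\vy=\mZ^\top\vlambda+\vdelta$ with $\delta_i=\vlambda^\top\veps_i$, the triangle inequality gives $\|\mZ^\top\vlambda\|_2\le B\|\mZ\|_2$ under Assumption~\ref{asmp:boundedness}; and conditional on $\vlambda$, $\|\vdelta\|_2^2/(\tau\|\vlambda\|_2^2)\sim\chi_N^2$, so the Laurent--Massart bound yields $\|\vdelta\|_2^2 \le \tau B^2\bigl(N+2\sqrt{N\log(1/\delta)}+2\log(1/\delta)\bigr)$ with probability $\ge 1-\delta$. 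Combining via $\|\vy\|_2 \le \|\mZ^\top\vlambda\|_2+\|\vdelta\|_2$ gives
\begin{equation*}
\bigl\|\bar\mZ^{-1}\vy\bigr\|_2^2 \;\le\; \frac{B^2}{m\tau}\Bigl(\|\mZ\|_2+\sqrt{\tau}\bigl(2\sqrt{N\log(1/\delta)}+2\log(1/\delta)+N\bigr)^{1/2}\Bigr)^2.
\end{equation*}

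\textbf{Step 4: Combine with Theorem~1 and solve for $T$.} By Theorem~\ref{thm:rate_under}, $\LL(\vtheta^{(T)})-\inf\LL \le (1-\eta_w\zeta_0/(2K))^T\Delta^{(0)}$. Chaining Steps~2--4 and requiring the product to be at most $\varepsilon$ yields exactly the iteration count in \eqref{eq:iteration_complexity_under}.

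\textbf{Expected main obstacle.} The conceptual crux is Step~1: showing that the minimum of $\LL$ is attained at the best linear predictor and that this predictor (through the push-through identity) is precisely the ridge regression estimator from \eqref{eq:ridge_regression}. The expressiveness needed to realize $\va_k^\star$ in the limit is guaranteed by the multi-head softmax structure together with Assumption~\ref{asmp:gamma} (which makes $\bar\mZ\mC_k$ span $\R^N$), so the infimum is indeed the least-squares optimum, not just a lower bound. After Step~1, the remaining work is essentially routine: a Cauchy--Schwarz inequality and a $\chi^2$ tail bound combined with the linear rate from Theorem~\ref{thm:rate_under}.
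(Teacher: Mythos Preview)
Your proposal is correct and follows essentially the same route as the paper. The paper also reformulates the loss via completing the square (its Lemma~\ref{lm:loss_reform_under}) to obtain $\LL(\vtheta)-\LL^\star=\frac{1}{2K}\sum_k\|\bar\mZ(\widehat\va_k-\mA_{:,k})\|_2^2$, identifies $\mA^\top\vy=\vylimit$ via the Sherman--Morrison--Woodbury identity (your push-through identity is equivalent), bounds $\|\vy\|_2$ with the Laurent--Massart $\chi^2$ tail, and combines with the linear rate from Theorem~\ref{thm:rate_under}. The only cosmetic difference is in Step~2: the paper passes through the operator-norm inequality $\frac{1}{2K}\|(\widehat\mA-\mA)^\top\vy\|_2^2\le \frac{1}{2K}\|\widehat\mA-\mA\|_2^2\|\vy\|_2^2$ together with $\|\widehat\mA-\mA\|_2^2\le \frac{2K}{m\tau}(\LL-\LL^\star)$, whereas you apply Cauchy--Schwarz coordinate-wise with the weighting $\bar\mZ^{-1}$; both yield the same bound $\frac{1}{2K}\|\vypred-\vylimit\|_2^2\le \frac{\|\vy\|_2^2}{m\tau}(\LL-\LL^\star)$. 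Also, the paper does not argue expressiveness to show $\LL^\star=\inf_\vtheta\LL$ but instead deduces it a posteriori from the convergence $\LL(\vtheta^{(t)})\to\LL^\star$ established in Theorem~\ref{thm:rate_under}; your expressiveness remark is a valid alternative.
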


\begin{proof}
See Appendix~\ref{sec_app:proof_inference_under}.
\end{proof} 

In Theorem~\ref{thm:inference_under}, \eqref{eq:esp_acc_under} shows that after training, the transformer learns to output the given labels of the first $N$ tokens in each prompt, and more importantly, predicts the labels of the rest $K-N$ tokens by implementing the ridge regression given in \eqref{eq:ridge_regression}.  
Note that \citet{akyurek2022learning} studied the expressive power of transformers on the linear regression task and showed by construction that transformers can represent the closed-form ridge regression solution. Interestingly, here we show from an optimization perspective that transformers can in fact be trained to do so.

\paragraph{Generalization capabilities of the pretrained transformer.} Theorem~\ref{thm:inference_under} captures two generalization capabilities that the pretrained transformer can have. 
\begin{enumerate}[i)]
\item {\em Contextual generalization to unseen examples:} Theorem~\ref{thm:inference_under} suggests that 
the transformer exploits the {\em inherent contextual} information (to be further discussed in \Cref{sec:interpretation}) of the function template in the given prompt, and can further use such information to predict the unseen tokens. 
\item {\em Generalization to unseen tasks:} Theorem~\ref{thm:inference_under} also suggests that the pretrained transformer can generalize to a function map corresponding to any $\vlambda\in \R^{m}$ at the inference time (albeit satisfying Assumption~\ref{asmp:boundedness}), which is not necessarily sampled from the support of its training distribution $\mathcal{D}_{\vlambda}$. 
\end{enumerate}


We note that the contextual generalization that the transformer has here is different in nature from the prediction ability shown in previous works on ICL \citep{huang2023context,chen2024training,li2024training,nichani2024transformers}. Those work focuses on a setting where each prompt contains a good portion of tokens similar to the query token, allowing the transformer to {\em directly} use the label of the corresponding answers from the prompt as the prediction. However, in practical scenarios, prompts often contain only partial information, and our analysis sheds lights on explaining how transformers generalize to unseen examples by leveraging ridge regression to infer the underlying template.

\paragraph{How does the representation dimension affect the performance?}
Beyond the above discovery, several questions are yet to be explored. For instance, while
we demonstrate that transformers can be trained to implement ridge regression, how good is the performance of the ridge regression itself? What is the best choice of ridge regression we could expect? How close is the transformer's choice to the best possible choice? We address these questions as follows.
 
Given any prompt $P$ at inference time, since there is no label information about the rest $K-N$ tokens, the best prediction we could hope for from the transformer shall be
   \begin{equation}\label{eq:best_prediction}
       \widehat\vy^{\text{best}}\coloneqq \begin{pmatrix}
        \vy \\
        \left(\mZ^Q\right)^\top \widehat\vlambda_\tau
    \end{pmatrix},
    \end{equation}\label{eq:minimize_norm}
    where $\mZ^Q$ is defined in \eqref{eq:Z^Q}, and $\widehat\vlambda_\tau$ satisfies:
    \begin{equation}\label{eq:regression_noisy}
        \textstyle \widehat\vlambda_\tau\coloneqq\argmin_{\vlambda}\E_{\tilde{\veps}}\left[\frac{1}{2N}\sum_{i=1}^N(y_i-\vlambda^\top \left(f(\vv_i)+\veps_i\right))^2\right].
    \end{equation}
    In other words, we hope the transformer outputs the given $N$ labels as they are. For the rest $K-N$ labels, the best we could hope for is that the transformer estimates the coefficient vector $\vlambda$ by solving the above regression problem to obtain $\widehat\vlambda_\tau$, and predict the $k$-th label by $\widehat\vlambda_\tau^\top f(\vv_k)$ for $k=N+1,\cdots,K$. 
    Note that \eqref{eq:regression_noisy} is equivalent to the following ridge regression problem (see Lemma~\ref{lm:equivalence} in the appendix for its derivation):    \begin{equation}\label{eq:ridge_regression_best}
    \textstyle \widehat\vlambda_\tau=\argmin_{\vlambda}\left\{\frac{1}{2N}\sum_{i=1}^N(y_i-\vlambda^\top f(\vv_i))^2+\frac{\tau}{2}\norm{\vlambda}_2^2\right\}.
    \end{equation}
    The only difference between the two ridge regression problems \eqref{eq:ridge_regression} and \eqref{eq:ridge_regression_best} is the coefficient of the regularization term. This indicates that at the training time, the transformer learns to implement ridge regression to predict the labels of the rest $K-N$ tokens, assuming the noise level is given by $\frac{m}{N}\tau$.
    This observation also reflects how the sequence length $N$ affects the transformer's preference for choosing templates and its performance at inference time:
\begin{itemize} 
    \item The closer $m$ is to $N$, the closer the transformer's choice of templates is to the best possible choice, and the better the transformer's prediction will be;
    \item When $N<m$, the transformer tends to underfit by choosing a $\vlambda$ with small $\ell_2$-norm;
    \item When $N>m$, the transformer tends to overfit since it underestimates the noise level and  in turn captures noise in the prediction.
\end{itemize}

\subsection{Further interpretation}\label{sec:interpretation} 

We provide more interpretation on our results, which may lead to useful insights into the ICL ability of the transformer. 

\paragraph{How does the transformer gain ICL ability with representations?} 
Intuitively speaking, our pretrained transformer gains in-context ability by extracting and memorizing some ``inherent information'' of all basic function maps $f_i$ ($i\in[m]$) during the training. Such information allows it to infer the coefficient vector $\vlambda$ from the provided labels in each prompt and calculate the inner product $\inner{\vlambda}{f(\vv_k)}$ to compute $y_{k}$ given any token $\vv_{k}\in\voc$ at inference time.  
To be more specific, the ``inherent information'' of all basic tasks could be described by the $N$-by-$K$ matrix $\mA$ defined as follows (see also \eqref{eq:matrix_A}):
$$\mA\coloneqq \left(\mZ^\top\mZ+m\tau\mI_N\right)^{-1}\left(\mZ^\top\Zfull + \left(m\tau\mI_N,\vzero\right)\right)\in\R^{N\times K},$$
where $\Zfull\coloneqq (f(\vv_1),\cdots,f(\vv_K))=(\mZ,\mZ^Q)\in\R^{m\times K}$.
%
During training, the transformer learns to approximate $\mA_{:,k}$ by $\sum_{h=1}^H w_{h,k}\sm(\promptmtrx^\top\mQ_h\vv_k)$ for each $k\in[K]$. 

To further elaborate, we take a closer look at the special case when the labels do not contain any noise, i.e., $\tau=0$, and $N\geq m$. In this case, $\mA$ becomes $\mZ^\dag\Zfull$, and given any prompt $P=P_{\vlambda}$, the coefficient vector $\vlambda$ could be uniquely determined from the provided token-label pairs in the prompt. It is straightforward to verify that the label of each token $\vv_k$ could be represented by the inner product of the given label vector $\vy$ and the $k$-th column of $\mZ^\dag\Zfull$, i.e.,
\begin{equation}\label{eq:label_representation}
  y_k=\inner{\vy}{\mZ^\dag\Zfull_{:,k}}.
\end{equation}
Comparing the above equation with \eqref{eq:y_predict}, it can be seen that in order to gain the in-context ability, the transformer needs to learn an approximation of $ \mZ^\dag\Zfull_{:,k}$ by $\sum_{h=1}^H w_{h,k}\sm(\promptmtrx^\top\mQ_h\vv_k)$ for each $k\in[K]$.

More generally, in the proof of Theorem~\ref{thm:inference_under}, we show that
\begin{equation}\label{eq:y_limit}
    \vylimit_k=\inner{\vy}{\mA_{:,k}},
\end{equation}
comparing which with \eqref{eq:y_predict} suggests that a small training error implies that $\sum_{h=1}^H w_{h,k}\sm(\promptmtrx^\top\mQ_h\vv_k)$ is close to $\mA_{:,k}$. In fact, this is the necessary and sufficient condition for the training loss to be small. A rigorous argument is provided in Lemma~\ref{lm:loss_reform_under}.

\paragraph{The necessity and trade-offs of multi-head attention mechanism.} Multi-head attention mechanism is essential in our setting. In fact, it is generally impossible to train a shallow transformer with only one attention head to succeed in the ICL task considered in our paper. This is because, as we have discussed above, the key for the transformer is to approximate  $\mA_{:,k}$ by $\sum_{h=1}^H w_{h,k}\sm(\promptmtrx^\top\mQ_h\vv_k)$ for each $k\in[K]$. If $H=1$, the transformer could not approximate each $\mA_{:,k}$ by $w_{1,k}\sm(\promptmtrx^\top\mQ_1\vv_k)$ in general since the entries of the latter vector are either all positive or all negative. In addition, Proposition~\ref{prop:init} indicates that when $H\geq N$, the weights of the transformer with a simple initialization method satisfy our desired property that is crucial to guarantee the fast linear convergence. However, \eqref{eq:rate_under} implies that we should not set $H$ to be too large, since larger $H$ yields slower convergence rate.  


\section{Experiments}\label{sec:exp}

This section aims to provide some empirical validation to our theoretical findings and verify that some of our results could be generalized to deeper transformers.

\paragraph{Setup.} We conduct experiments on a synthetic dataset, 
where we randomly generate each token $\vv_k$ and their representation $f(\vv_k)$ from standard Gaussian distribution. We employ both the 1-layer transformer described in Section~\ref{sec:setup} and a standard 4-layer transformer in \citet{vaswani2017attention} with $d_{\text{model}}=256$ and $d_{\text{ff}}=512$. We set the training loss to be the population loss defined in~\eqref{eq:loss}, and initialize $\{\mQ_h^{(0)}\}_{h\in[H]}$ using standard Gaussian and set $\{\vw_h^{(0)}\}_{h\in[H]}$ to be $\vzero$, identical to what is specified in Section~\ref{sec:theory}. We generate $\vlambda$ from standard Gaussian distribution to create the training set with $10000$ samples and in-domain test set with $200$ samples; we also create an out-of-domain (ood) test set with $200$ samples by sampling $\vlambda$ from $\mathcal{N}(\vone_m, 4\mI_m)$. Given $\vlambda$, we generate the label $y_k$ of token $\vv_k$ using \eqref{eq:ICL}, for $k\in[K]$. We train with a batch size $256$. All experiments use the Adam optimizer with a learning rate $1\times 10^{-4}$.

\paragraph{Training and inference performance.} We set $N=30$, $K=200$, $d=100$, $m=20$, and set $H$ to be $64$ and $8$ for 1-layer and 4-layer transformers, respectively. 
Figure~\ref{fig:losses} shows the training and inference losses of both 1-layer and 4-layer transformers, where we measure the inference loss by $\frac{1}{K}\|\widehat \vy-\widehat \vy^\star\|_2^2$ to validate~\eqref{eq:esp_acc_under}: after sufficient training, the output of the transformer $\widehat \vy$ converges to $\widehat \vy^\star$. 
From Figure~\ref{fig:losses} we can see that for both 1-layer and 4-layer transformers, the three curves have the same descending trend, despite the inference loss on the ood dataset is higher than that on the in-domain dataset.
This experiment also shows the transformer's contextual generalization to unseen examples and  to unseen tasks, validating our claim in Section~\ref{sec:inference}. 

\begin{figure}[t]
    \centering
    \subfigure[1-layer transformer]{
    \includegraphics[width=0.48\textwidth]{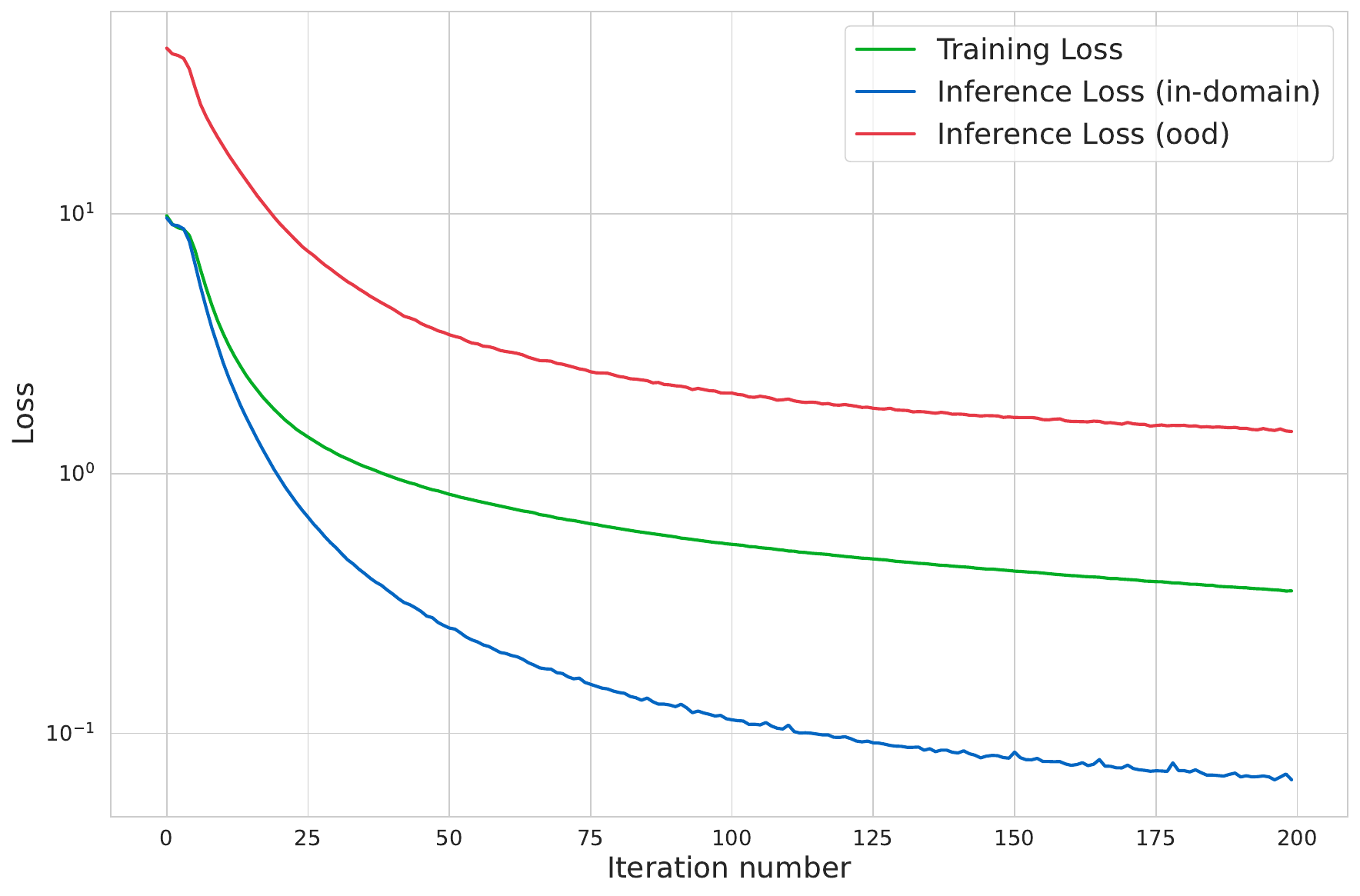}
    }
    \subfigure[4-layer transformer]{
    \includegraphics[width=0.48\textwidth]{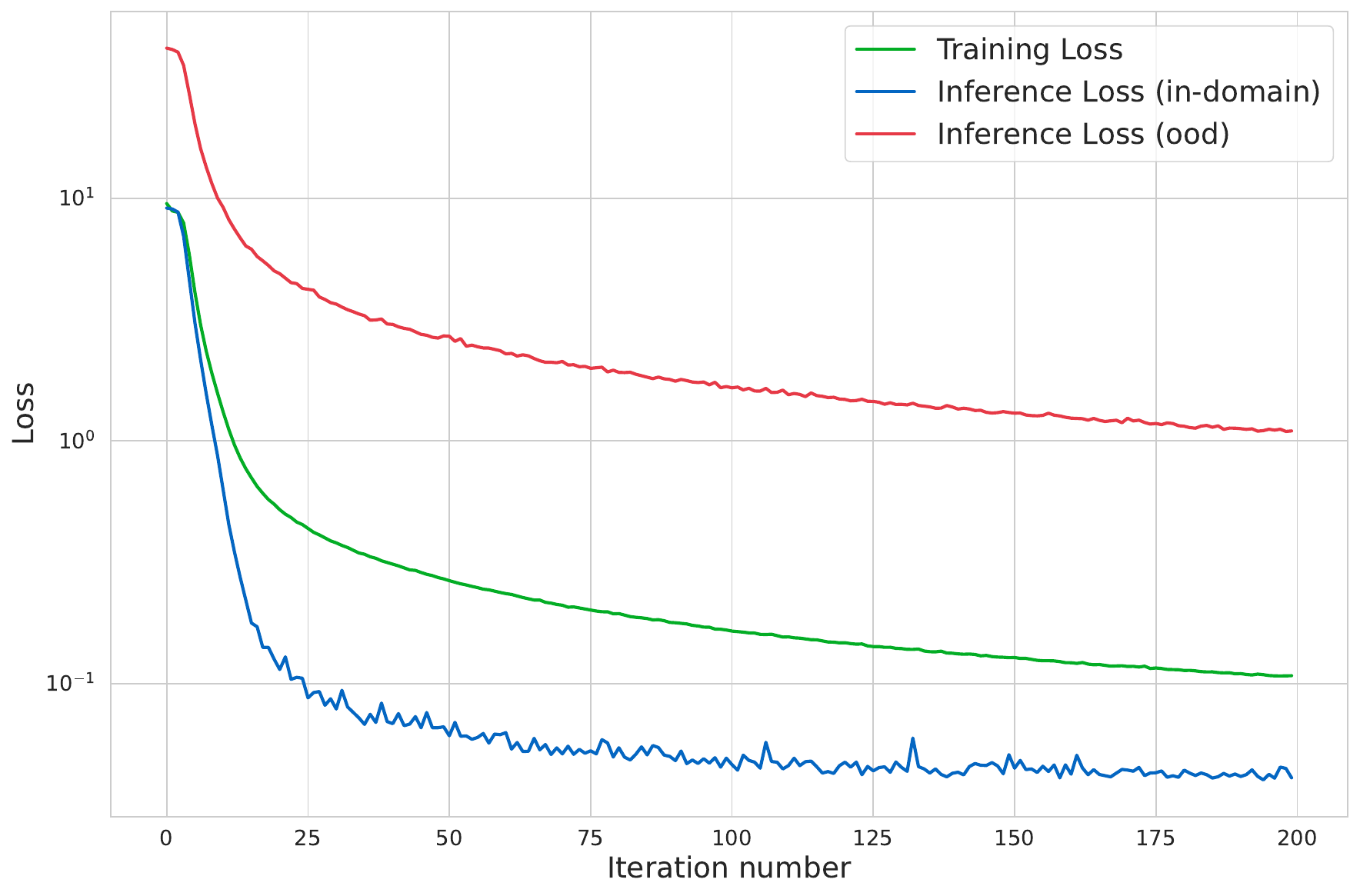}
    }
    \caption{Training and inference losses of (a) 1-layer  and (b) 4-layer  transformers, which validate Theorem~\ref{thm:inference_under}, as well as the transformer's contextual generalization to unseen examples and  to unseen tasks.}\label{fig:losses}
    \end{figure}

Figure~\ref{fig:m_N_relation} plots the performance gap $\frac{1}{K}\norm{\widehat{\vy}^\star-\widehat{\vy}^{\text{best}}}_2^2$ of the one-layer transformer with respect to different $N$ ranging from $50$ to $150$, when we fix $m=100$ and $\tau=0.01$. This verifies that the ridge regression implemented by the pretrained transformer has a better performance when $m$ is close to $N$, again verifying our claim at the end of Section 2.

\begin{figure}[ht]
    \centering
        \centering
        \includegraphics[width=0.48\linewidth]{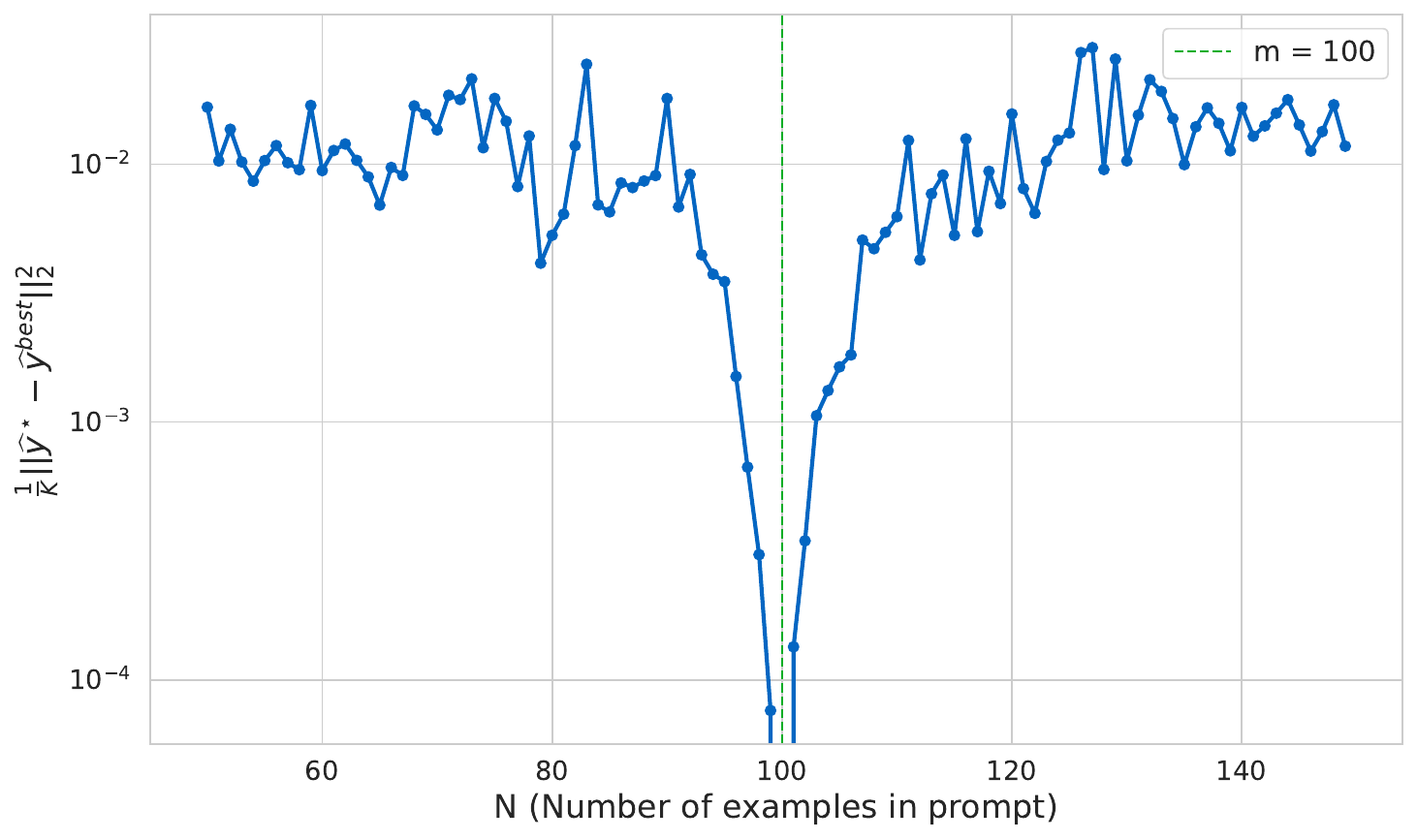}
        \caption{The performance gap $\frac{1}{K}\norm{\widehat{\vy}^\star-\widehat{\vy}^{\text{best}}}_2^2$ with different $N$ when $m=100$, which validates that the closer $N$ is to $m$, the better the transformer's prediction is.}
        \label{fig:m_N_relation}
\end{figure}

\begin{figure}[ht]
    \centering
    \includegraphics[width=0.48\linewidth]{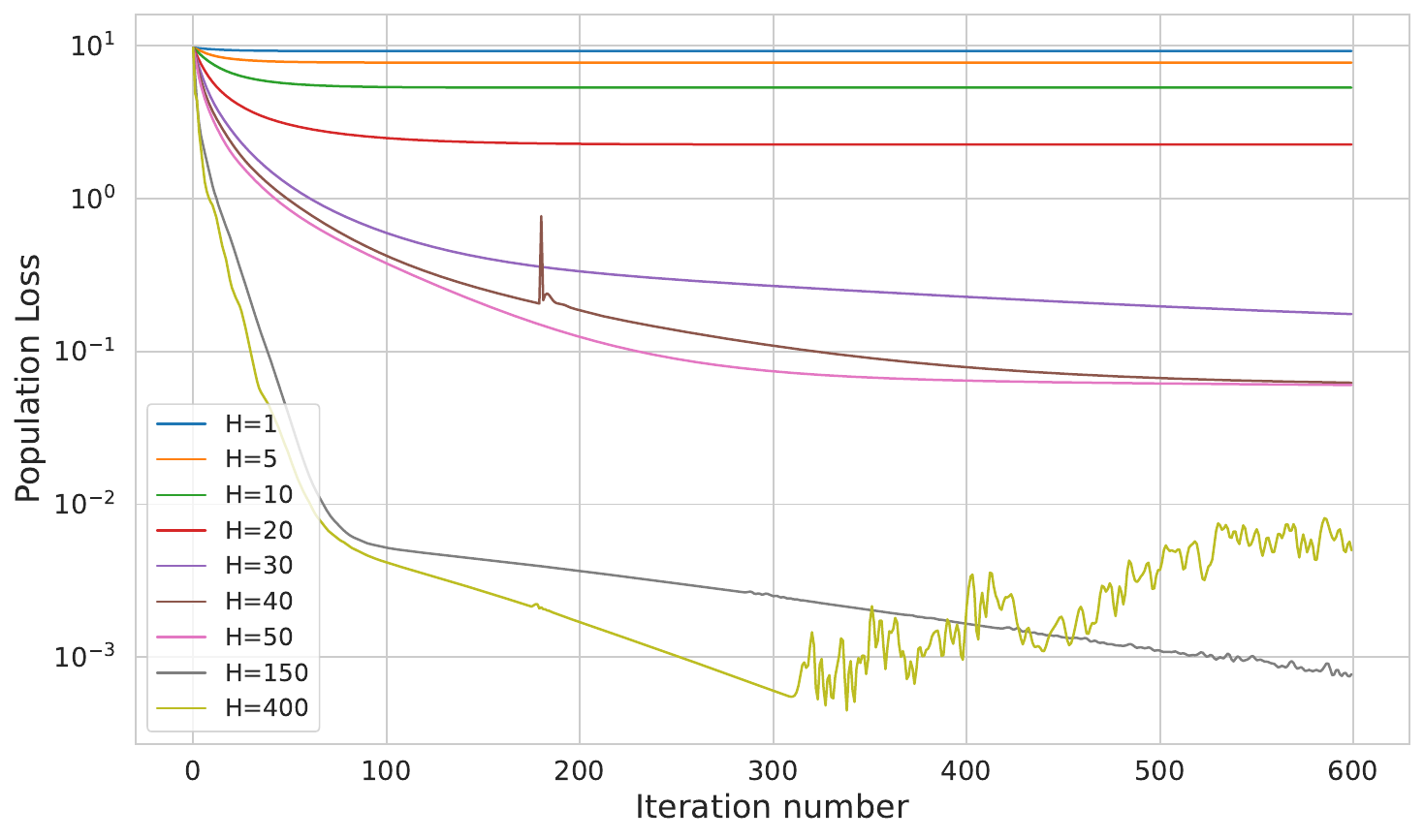}
    \caption{Training losses of the 1-layer transformer with different number of attention heads $H$, where $H$ should be large enough to guarantee the convergence of the training loss, but setting $H$ too large leads to instability and slower divergence.}
    \label{fig:change_H}
\end{figure}

\paragraph{Impact of the number of attention heads.} We now turn to examine the impact of the number of attention heads. 
In this experiment, we use the population loss~\eqref{eq:loss}, and set the other configurations same as those in Figure~\ref{fig:losses}. Figure~\ref{fig:change_H} shows  the training loss curves for different $H$ with respect the iteration number, which validates our claims. From Figure~\ref{fig:change_H}, we can see that we need to set $H$ large enough to guarantee the convergence of the training loss. However, setting $H$ too large ($H=400$) leads to instability and divergence of the loss. Recall that in Proposition~\ref{prop:init}, we require $H\geq N$ to guarantee our convergence results hold. Although this condition may not be necessary, Figure~\ref{fig:change_H} shows that when $H<N=30$, the loss stopped descending even when it is far from the minimal value. On the other side, the loss keeps descending when $H=30$ (though slowly).

We also explore how $H$ affects the training of the 4-layer transformer, as displayed in Figure~\ref{fig:4_layer_change_H}, where we set $K=200$ and the configurations other than $H$ are the same as in Figure~\ref{fig:m_N_relation}. We fix the wall-clock time to be 100 seconds and plot the training loss curves with different $H$. Figure~\ref{fig:4_layer_change_H} (a) shows the final training and inference losses with respect to $H$. It reflects that the losses converge faster with smaller $H$ (here the final training loss is the smallest when $H=4$). The training curves in Figure~\ref{fig:4_layer_change_H} (b) corresponding to different $H$ within 100s may provide some explanation to this phenomenon: (i) transformers with larger $H$ could complete less iterations within a fixed amount of time (the curves corresponding to larger $H$ are shorter); (ii) the training loss curves corresponding to large $H$ ($H=32,64$) descend more slowly. This suggests our claim that larger $H$ may yield slower convergence rate is still valid on deeper transformers. Note that unlike the 1-layer transformer, deeper transformers don't require a large $H$ to guarantee convergence. This is because deeper transformers have better expressive power even when $H$ is small.

\begin{figure}[t]
\centering
\subfigure[final losses vs $H$]{
    \includegraphics[width=0.48\textwidth]{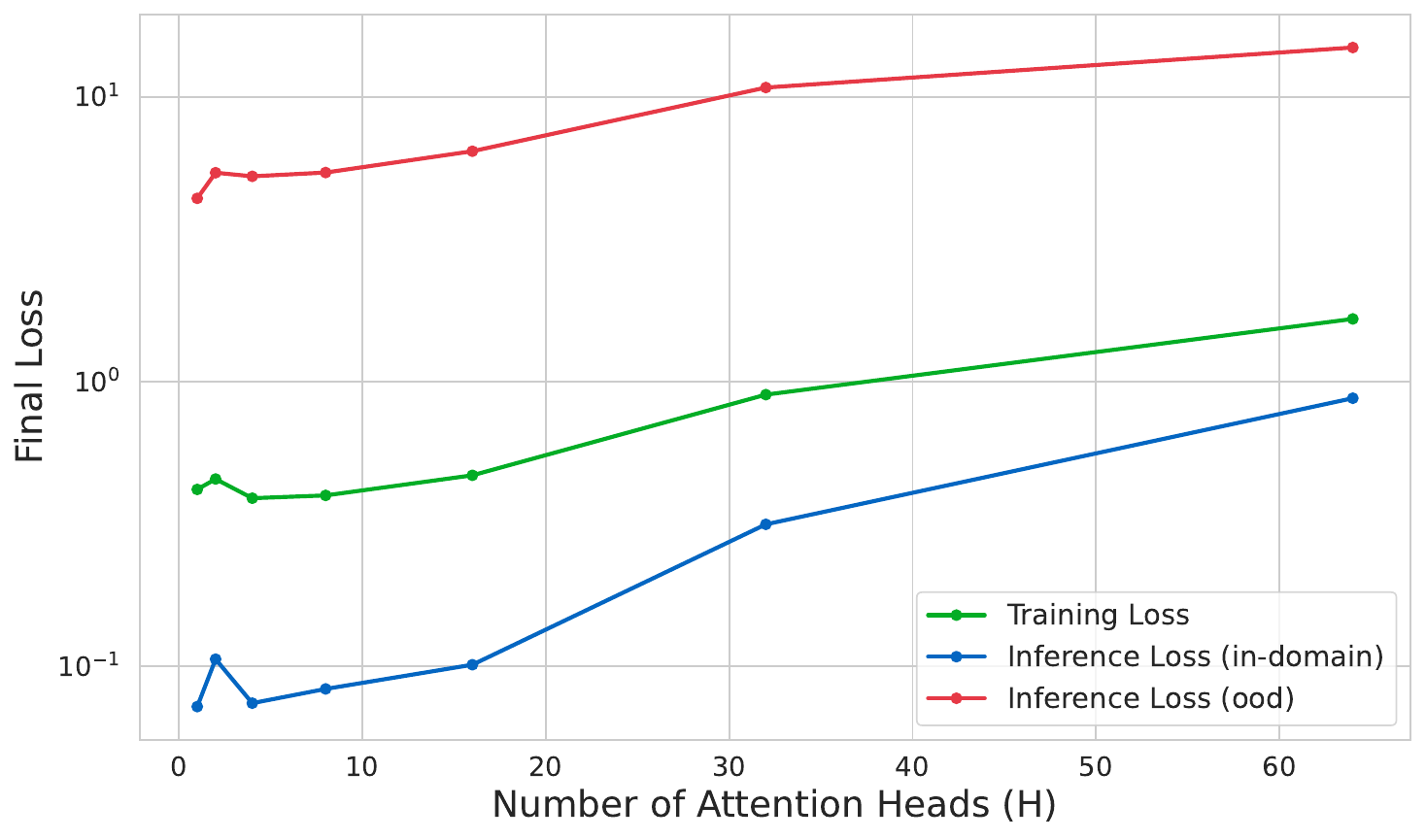}
}\label{fig:4_layer_change_H_final_losses}
    \subfigure[training loss curves for different $H$]{
    \includegraphics[width=0.48\textwidth]{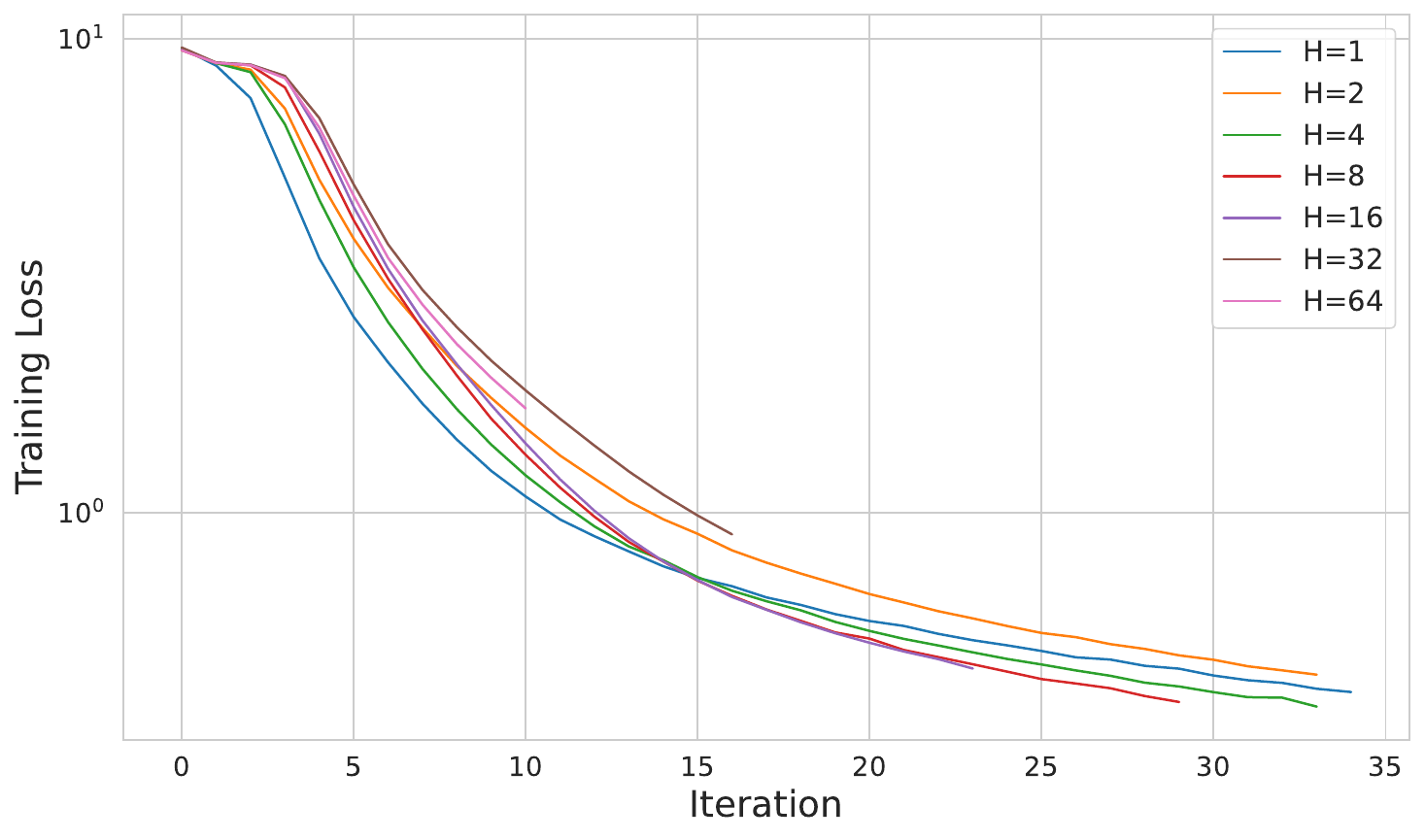}
    }\label{fig:4_layer_change_H_loss_curves}
\caption{Training losses of a 4-layer transformer with different $H$, fixing wall-clock time to be $100$s. This experiment shows that unlike 1-layer transformers, deeper transformers don't require $H$ to be large to guarantee convergence of the loss.}\label{fig:4_layer_change_H}
\end{figure}

\section{Conclusion} \label{sec:conclusion}
We analyze the training dynamics of a one-layer transformer with multi-head softmax attention trained by gradient descent to solve complex non-linear regression tasks using partially labeled prompts. In this setting, the labels contain Gaussian noise, and each prompt may include only a few examples, which are insufficient to determine the underlying template. Our work overcomes several restrictive assumptions made in previous studies and proves that the training loss converges linearly to its minimum value. Furthermore, we analyze the transformer's strategy for addressing the issue of underdetermination during inference and evaluate its performance by comparing it with the best possible strategy.
Our study provides the first analysis of how transformers can acquire contextual (template) information to generalize to unseen examples when prompts contain a limited number of query-answer pairs.

\section*{Acknowledgement}

The work of T. Yang and Y. Chi is supported in part by the grants NSF CCF-2007911, DMS-2134080 and ONR N00014-19-1-2404. The work of Y. Liang was supported in part by the U.S. National Science Foundation under the grants ECCS-2113860, DMS-2134145 and CNS-2112471.

\bibliography{biblio.bib}

\begin{thebibliography}{38}
\providecommand{\natexlab}[1]{#1}
\providecommand{\url}[1]{\texttt{#1}}
\expandafter\ifx\csname urlstyle\endcsname\relax
  \providecommand{\doi}[1]{doi: #1}\else
  \providecommand{\doi}{doi: \begingroup \urlstyle{rm}\Url}\fi

\bibitem[Agarwal et~al.(2024)Agarwal, Singh, Zhang, Bohnet, Chan, Anand, Abbas, Nova, Co-Reyes, Chu, et~al.]{agarwal2024many}
R.~Agarwal, A.~Singh, L.~M. Zhang, B.~Bohnet, S.~Chan, A.~Anand, Z.~Abbas, A.~Nova, J.~D. Co-Reyes, E.~Chu, et~al.
\newblock Many-shot in-context learning.
\newblock \emph{arXiv preprint arXiv:2404.11018}, 2024.

\bibitem[Ahuja et~al.(2023)Ahuja, Panwar, and Goyal]{ahuja2023context}
K.~Ahuja, M.~Panwar, and N.~Goyal.
\newblock In-context learning through the bayesian prism.
\newblock \emph{arXiv preprint arXiv:2306.04891}, 2023.

\bibitem[Aky{\"u}rek et~al.(2022)Aky{\"u}rek, Schuurmans, Andreas, Ma, and Zhou]{akyurek2022learning}
E.~Aky{\"u}rek, D.~Schuurmans, J.~Andreas, T.~Ma, and D.~Zhou.
\newblock What learning algorithm is in-context learning? investigations with linear models.
\newblock \emph{arXiv preprint arXiv:2211.15661}, 2022.

\bibitem[Bai et~al.(2023)Bai, Chen, Wang, Xiong, and Mei]{bai2024transformers}
Y.~Bai, F.~Chen, H.~Wang, C.~Xiong, and S.~Mei.
\newblock Transformers as statisticians: Provable in-context learning with in-context algorithm selection.
\newblock \emph{Advances in neural information processing systems}, 36, 2023.

\bibitem[Brown et~al.(2020)Brown, Mann, Ryder, Subbiah, Kaplan, Dhariwal, Neelakantan, Shyam, Sastry, Askell, et~al.]{brown2020language}
T.~Brown, B.~Mann, N.~Ryder, M.~Subbiah, J.~D. Kaplan, P.~Dhariwal, A.~Neelakantan, P.~Shyam, G.~Sastry, A.~Askell, et~al.
\newblock Language models are few-shot learners.
\newblock \emph{Advances in neural information processing systems}, 33:\penalty0 1877--1901, 2020.

\bibitem[Chen et~al.(2024)Chen, Sheen, Wang, and Yang]{chen2024training}
S.~Chen, H.~Sheen, T.~Wang, and Z.~Yang.
\newblock Training dynamics of multi-head softmax attention for in-context learning: Emergence, convergence, and optimality.
\newblock \emph{arXiv preprint arXiv:2402.19442}, 2024.

\bibitem[Chen and Zou(2024)]{chen2024can}
X.~Chen and D.~Zou.
\newblock What can transformer learn with varying depth? case studies on sequence learning tasks.
\newblock \emph{arXiv preprint arXiv:2404.01601}, 2024.

\bibitem[Dai et~al.(2022)Dai, Sun, Dong, Hao, Ma, Sui, and Wei]{dai2022can}
D.~Dai, Y.~Sun, L.~Dong, Y.~Hao, S.~Ma, Z.~Sui, and F.~Wei.
\newblock Why can gpt learn in-context? language models implicitly perform gradient descent as meta-optimizers.
\newblock \emph{arXiv preprint arXiv:2212.10559}, 2022.

\bibitem[Deora et~al.(2023)Deora, Ghaderi, Taheri, and Thrampoulidis]{deora2023optimization}
P.~Deora, R.~Ghaderi, H.~Taheri, and C.~Thrampoulidis.
\newblock On the optimization and generalization of multi-head attention.
\newblock \emph{arXiv preprint arXiv:2310.12680}, 2023.

\bibitem[Edelman et~al.(2022)Edelman, Goel, Kakade, and Zhang]{edelman2022inductive}
B.~L. Edelman, S.~Goel, S.~Kakade, and C.~Zhang.
\newblock Inductive biases and variable creation in self-attention mechanisms.
\newblock In \emph{International Conference on Machine Learning}, pages 5793--5831. PMLR, 2022.

\bibitem[Edelman et~al.(2024)Edelman, Edelman, Goel, Malach, and Tsilivis]{edelman2024evolution}
B.~L. Edelman, E.~Edelman, S.~Goel, E.~Malach, and N.~Tsilivis.
\newblock The evolution of statistical induction heads: In-context learning markov chains.
\newblock \emph{arXiv preprint arXiv:2402.11004}, 2024.

\bibitem[Garg et~al.(2022)Garg, Tsipras, Liang, and Valiant]{garg2022can}
S.~Garg, D.~Tsipras, P.~S. Liang, and G.~Valiant.
\newblock What can transformers learn in-context? a case study of simple function classes.
\newblock \emph{Advances in Neural Information Processing Systems}, 35:\penalty0 30583--30598, 2022.

\bibitem[Giannou et~al.(2023)Giannou, Rajput, Sohn, Lee, Lee, and Papailiopoulos]{giannou2023looped}
A.~Giannou, S.~Rajput, J.-y. Sohn, K.~Lee, J.~D. Lee, and D.~Papailiopoulos.
\newblock Looped transformers as programmable computers.
\newblock In \emph{International Conference on Machine Learning}, pages 11398--11442. PMLR, 2023.

\bibitem[Guo et~al.(2023)Guo, Hu, Mei, Wang, Xiong, Savarese, and Bai]{guo2023transformers}
T.~Guo, W.~Hu, S.~Mei, H.~Wang, C.~Xiong, S.~Savarese, and Y.~Bai.
\newblock How do transformers learn in-context beyond simple functions? a case study on learning with representations.
\newblock \emph{arXiv preprint arXiv:2310.10616}, 2023.

\bibitem[Hahn and Goyal(2023)]{hahn2023theory}
M.~Hahn and N.~Goyal.
\newblock A theory of emergent in-context learning as implicit structure induction.
\newblock \emph{arXiv preprint arXiv:2303.07971}, 2023.

\bibitem[Han et~al.(2023)Han, Wang, Zhao, and Ji]{han2023context}
C.~Han, Z.~Wang, H.~Zhao, and H.~Ji.
\newblock In-context learning of large language models explained as kernel regression.
\newblock \emph{arXiv preprint arXiv:2305.12766}, 2023.

\bibitem[Huang et~al.(2023)Huang, Cheng, and Liang]{huang2023context}
Y.~Huang, Y.~Cheng, and Y.~Liang.
\newblock In-context convergence of transformers.
\newblock \emph{arXiv preprint arXiv:2310.05249}, 2023.

\bibitem[Jeon et~al.(2024)Jeon, Lee, Lei, and Van~Roy]{jeon2024information}
H.~J. Jeon, J.~D. Lee, Q.~Lei, and B.~Van~Roy.
\newblock An information-theoretic analysis of in-context learning.
\newblock \emph{arXiv preprint arXiv:2401.15530}, 2024.

\bibitem[Jiang(2023)]{jiang2023latent}
H.~Jiang.
\newblock A latent space theory for emergent abilities in large language models.
\newblock \emph{arXiv preprint arXiv:2304.09960}, 2023.

\bibitem[Karimi et~al.(2016)Karimi, Nutini, and Schmidt]{karimi2016linear}
H.~Karimi, J.~Nutini, and M.~Schmidt.
\newblock Linear convergence of gradient and proximal-gradient methods under the {Polyak-{\L}ojasiewicz} condition.
\newblock In \emph{European Conference on Machine Learning and Knowledge Discovery in Databases}, pages 795--811, 2016.

\bibitem[Kim and Suzuki(2024)]{kim2024transformers}
J.~Kim and T.~Suzuki.
\newblock Transformers learn nonlinear features in context: Nonconvex mean-field dynamics on the attention landscape.
\newblock In \emph{Forty-first International Conference on Machine Learning}, 2024.

\bibitem[Laurent and Massart(2000)]{laurent2000adaptive}
B.~Laurent and P.~Massart.
\newblock Adaptive estimation of a quadratic functional by model selection.
\newblock \emph{Annals of statistics}, pages 1302--1338, 2000.

\bibitem[Li et~al.(2024)Li, Wang, Lu, Cui, and Chen]{li2024training}
H.~Li, M.~Wang, S.~Lu, X.~Cui, and P.-Y. Chen.
\newblock Training nonlinear transformers for efficient in-context learning: A theoretical learning and generalization analysis.
\newblock \emph{arXiv preprint arXiv:2402.15607}, 2024.

\bibitem[Li et~al.(2023)Li, Ildiz, Papailiopoulos, and Oymak]{li2023transformers}
Y.~Li, M.~E. Ildiz, D.~Papailiopoulos, and S.~Oymak.
\newblock Transformers as algorithms: Generalization and stability in in-context learning.
\newblock In \emph{International Conference on Machine Learning}, pages 19565--19594. PMLR, 2023.

\bibitem[Nguyen and Mondelli(2020)]{nguyen2020global}
Q.~N. Nguyen and M.~Mondelli.
\newblock Global convergence of deep networks with one wide layer followed by pyramidal topology.
\newblock \emph{Advances in Neural Information Processing Systems}, 33:\penalty0 11961--11972, 2020.

\bibitem[Nichani et~al.(2024)Nichani, Damian, and Lee]{nichani2024transformers}
E.~Nichani, A.~Damian, and J.~D. Lee.
\newblock How transformers learn causal structure with gradient descent.
\newblock \emph{arXiv preprint arXiv:2402.14735}, 2024.

\bibitem[Olsson et~al.(2022)Olsson, Elhage, Nanda, Joseph, DasSarma, Henighan, Mann, Askell, Bai, Chen, et~al.]{olsson2022context}
C.~Olsson, N.~Elhage, N.~Nanda, N.~Joseph, N.~DasSarma, T.~Henighan, B.~Mann, A.~Askell, Y.~Bai, A.~Chen, et~al.
\newblock In-context learning and induction heads.
\newblock \emph{arXiv preprint arXiv:2209.11895}, 2022.

\bibitem[Reid et~al.(2024)Reid, Savinov, Teplyashin, Lepikhin, Lillicrap, Alayrac, Soricut, Lazaridou, Firat, Schrittwieser, et~al.]{reid2024gemini}
M.~Reid, N.~Savinov, D.~Teplyashin, D.~Lepikhin, T.~Lillicrap, J.-b. Alayrac, R.~Soricut, A.~Lazaridou, O.~Firat, J.~Schrittwieser, et~al.
\newblock Gemini 1.5: Unlocking multimodal understanding across millions of tokens of context.
\newblock \emph{arXiv preprint arXiv:2403.05530}, 2024.

\bibitem[Touvron et~al.(2023)Touvron, Martin, Stone, Albert, Almahairi, Babaei, Bashlykov, Batra, Bhargava, Bhosale, et~al.]{touvron2023llama}
H.~Touvron, L.~Martin, K.~Stone, P.~Albert, A.~Almahairi, Y.~Babaei, N.~Bashlykov, S.~Batra, P.~Bhargava, S.~Bhosale, et~al.
\newblock Llama 2: Open foundation and fine-tuned chat models.
\newblock \emph{arXiv preprint arXiv:2307.09288}, 2023.

\bibitem[Vaswani et~al.(2017)Vaswani, Shazeer, Parmar, Uszkoreit, Jones, Gomez, Kaiser, and Polosukhin]{vaswani2017attention}
A.~Vaswani, N.~Shazeer, N.~Parmar, J.~Uszkoreit, L.~Jones, A.~N. Gomez, {\L}.~Kaiser, and I.~Polosukhin.
\newblock Attention is all you need.
\newblock \emph{Advances in neural information processing systems}, 30, 2017.

\bibitem[Von~Oswald et~al.(2023)Von~Oswald, Niklasson, Randazzo, Sacramento, Mordvintsev, Zhmoginov, and Vladymyrov]{von2023transformers}
J.~Von~Oswald, E.~Niklasson, E.~Randazzo, J.~Sacramento, A.~Mordvintsev, A.~Zhmoginov, and M.~Vladymyrov.
\newblock Transformers learn in-context by gradient descent.
\newblock In \emph{International Conference on Machine Learning}, pages 35151--35174. PMLR, 2023.

\bibitem[Wang et~al.(2023)Wang, Zhu, Saxon, Steyvers, and Wang]{wang2023large}
X.~Wang, W.~Zhu, M.~Saxon, M.~Steyvers, and W.~Y. Wang.
\newblock Large language models are implicitly topic models: Explaining and finding good demonstrations for in-context learning.
\newblock In \emph{Workshop on Efficient Systems for Foundation Models@ ICML2023}, 2023.

\bibitem[Wei et~al.(2023)Wei, Wei, Tay, Tran, Webson, Lu, Chen, Liu, Huang, Zhou, et~al.]{wei2023larger}
J.~Wei, J.~Wei, Y.~Tay, D.~Tran, A.~Webson, Y.~Lu, X.~Chen, H.~Liu, D.~Huang, D.~Zhou, et~al.
\newblock Larger language models do in-context learning differently.
\newblock \emph{arXiv preprint arXiv:2303.03846}, 2023.

\bibitem[Wies et~al.(2024)Wies, Levine, and Shashua]{wies2024learnability}
N.~Wies, Y.~Levine, and A.~Shashua.
\newblock The learnability of in-context learning.
\newblock \emph{Advances in Neural Information Processing Systems}, 36, 2024.

\bibitem[Wu et~al.(2023)Wu, Zou, Chen, Braverman, Gu, and Bartlett]{wu2023many}
J.~Wu, D.~Zou, Z.~Chen, V.~Braverman, Q.~Gu, and P.~L. Bartlett.
\newblock How many pretraining tasks are needed for in-context learning of linear regression?
\newblock \emph{arXiv preprint arXiv:2310.08391}, 2023.

\bibitem[Xie et~al.(2021)Xie, Raghunathan, Liang, and Ma]{xie2021explanation}
S.~M. Xie, A.~Raghunathan, P.~Liang, and T.~Ma.
\newblock An explanation of in-context learning as implicit bayesian inference.
\newblock \emph{arXiv preprint arXiv:2111.02080}, 2021.

\bibitem[Zhang et~al.(2023{\natexlab{a}})Zhang, Frei, and Bartlett]{zhang2023trained}
R.~Zhang, S.~Frei, and P.~L. Bartlett.
\newblock Trained transformers learn linear models in-context.
\newblock \emph{arXiv preprint arXiv:2306.09927}, 2023{\natexlab{a}}.

\bibitem[Zhang et~al.(2023{\natexlab{b}})Zhang, Zhang, Yang, and Wang]{zhang2023and}
Y.~Zhang, F.~Zhang, Z.~Yang, and Z.~Wang.
\newblock What and how does in-context learning learn? bayesian model averaging, parameterization, and generalization.
\newblock \emph{arXiv preprint arXiv:2305.19420}, 2023{\natexlab{b}}.

\end{thebibliography}
\bibliographystyle{abbrvnat}
 
 \newpage
 
\appendix

\section{Proof Preparation}  \label{sec_app:proofs}

\subsection{Summary of key notation}

We summarize the frequently used notation in Table~\ref{tb:notation} for ease of reference.
\begin{table}[ht]
  \centering
  \begin{tabular}{cc}
  \toprule
  notation & meaning \\ \midrule
  $K\in\NN_+$ & total number of tokens\\
  $d\in\NN_+$ & token dimension\\
  $m\in\NN_+$ & number of basic tasks\\
  $H\in\NN_+$ & number of attention heads\\
  $N\in\NN_+$ & number of examples in each prompt\\
  $\vv_k\in\R^d$, $k\in[K]$ & the $k$-th token\\
  $f_i:\R^d\rightarrow\R$, $i\in[m]$ & the $i$-th basic task\\
  $\vlambda\in\R^m$ & coefficient vector\\
  $y_k=\vlambda^\top(f(\vv_k)+\veps_k), k\in[K]$ & the $k$-th label\\
   \bottomrule
  \end{tabular} 
  \caption{Notation for key parameters.}\label{tb:notation}
  \end{table}


\subsection{Auxiliary lemmas}\label{sec_app:lemmas}

We provide some useful facts that will be repeatedly used later on.
Let 
$$\vz_k\coloneqq f(\vv_k)=(f_1(\vv_k),f_2(\vv_k),\cdots,f_m(\vv_k))^\top\in\R^m,  \qquad \forall k\in[K]. $$ 
Recalling \eqref{eq:Z}, we can rewrite 
\begin{equation*}
    \mZ\coloneqq(\vz_1,\cdots,\vz_N)\in\R^{m\times N}. 
\end{equation*}

We further define $\vs_{k}^h\in\R^{N}$ as follows:
\begin{equation}\label{eq:s}
\vs_{k}^h\coloneqq\sm(\promptmtrx^{\top}\mQ_h\vv_{k})=(s_{1k}^h,\cdots,s_{Nk}^h)^\top, \quad\forall k\in[K],h\in[H].
\end{equation} 

\begin{lm}[Softmax gradient]\label{lm:grad_softmax}
    For all $j\in[N],k\in[K]$ and $h\in[H]$, we have
    \begin{equation}\label{eq:grad_softmax}
        \frac{\partial s_{jk}^h}{\partial\mQ_h}=s_{jk}^h\sum_{i=1}^N s_{ik}^h (\vv_j-\vv_i)\vv_k^\top,
    \end{equation}
    where $s_{jk}^h$ is defined in~\eqref{eq:s}.
\end{lm}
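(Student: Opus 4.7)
The plan is to apply the chain rule together with the standard Jacobian formula for the softmax function, and then rewrite the resulting expression using the fact that the softmax outputs sum to one. First I would introduce the pre-softmax scores $a_\ell \coloneqq \vv_\ell^\top \mQ_h \vv_k$ for $\ell \in [N]$, so that by definition $s_{jk}^h = e^{a_j} / \sum_{\ell=1}^N e^{a_\ell}$. A direct computation gives the matrix-valued derivative $\partial a_\ell / \partial \mQ_h = \vv_\ell \vv_k^\top$, where the outer product arises because $a_\ell$ is a bilinear form in $\mQ_h$.

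Next, I would invoke the standard softmax Jacobian, namely $\partial s_{jk}^h / \partial a_\ell = s_{jk}^h(\delta_{j\ell} - s_{\ell k}^h)$, where $\delta_{j\ell}$ denotes the Kronecker delta. Combining this with the previous step via the chain rule yields
\begin{equation*}
\frac{\partial s_{jk}^h}{\partial \mQ_h} \;=\; \sum_{\ell=1}^N s_{jk}^h(\delta_{j\ell} - s_{\ell k}^h)\, \vv_\ell \vv_k^\top \;=\; s_{jk}^h\left(\vv_j - \sum_{\ell=1}^N s_{\ell k}^h \vv_\ell\right)\vv_k^\top.
\end{equation*}
Finally, I would use the normalization identity $\sum_{i=1}^N s_{ik}^h = 1$ to rewrite $\vv_j = \sum_{i=1}^N s_{ik}^h \vv_j$, so that the term in parentheses equals $\sum_{i=1}^N s_{ik}^h (\vv_j - \vv_i)$, which delivers the claimed expression.

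The proof is essentially a bookkeeping exercise in matrix calculus, and I do not anticipate any genuine obstacle. The only subtlety worth being careful about is the dimensional consistency of the derivative object: since $\mQ_h \in \R^{d \times d}$ and $s_{jk}^h$ is a scalar, $\partial s_{jk}^h / \partial \mQ_h$ should again be a $d \times d$ matrix, which is indeed what the outer product $(\vv_j - \vv_i)\vv_k^\top$ produces. Verifying that one is applying the chain rule with the correct tensor shapes (and not, for instance, transposing $\vv_k$) is the main thing to watch out for.
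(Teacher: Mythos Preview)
Your proposal is correct; this is exactly the standard chain-rule computation for the softmax Jacobian composed with the bilinear score map. The paper does not actually supply its own proof here but defers to \citet{huang2023context}, and your derivation is precisely the routine argument one would expect to find in that reference.
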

\begin{proof}
    See the proof of Lemma~A.1 in~\cite{huang2023context}. 
\end{proof}

\begin{lm}[Smoothness of softmax]\label{lm:smoothness_softmax}
    For vectors $\vxi_1,\vxi_2\in\R^l$, we have
    \begin{equation}\label{eq:smoothness_softmax}
        \norm{\sm(\vxi_1)-\sm(\vxi_2)}_1\leq 2\norm{\vxi_1-\vxi_2}_\infty.
    \end{equation}
\end{lm}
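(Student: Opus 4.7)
The plan is to prove the smoothness bound via the fundamental theorem of calculus applied to the softmax map along the line segment from $\vxi_1$ to $\vxi_2$, and then to carefully estimate the induced operator norm of the softmax Jacobian from $\ell_\infty$ to $\ell_1$. Explicitly, parametrise $\vxi(t) \coloneqq \vxi_1 + t(\vxi_2 - \vxi_1)$ for $t \in [0,1]$ and write
\begin{equation*}
\sm(\vxi_2) - \sm(\vxi_1) \;=\; \int_0^1 J(\vxi(t))\,(\vxi_2 - \vxi_1)\,dt,
\end{equation*}
where $J(\vxi) \in \R^{l\times l}$ is the Jacobian of $\sm$ at $\vxi$. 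A direct differentiation yields the standard expression $J(\vxi) = \diag(\sm(\vxi)) - \sm(\vxi)\sm(\vxi)^\top$, so that for any $\vv \in \R^l$ and writing $\vs = \sm(\vxi)$, one has $(J(\vxi)\vv)_i = s_i\bigl(v_i - \langle \vs, \vv\rangle\bigr)$.

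The core of the proof is then the pointwise estimate $\|J(\vxi)\vv\|_1 \leq 2\|\vv\|_\infty$ for every $\vxi,\vv$. To establish it, I would exploit that $\vs$ is a probability vector, so $\langle \vs,\vv\rangle$ is a convex combination of the entries of $\vv$ and hence lies in $[\min_j v_j,\max_j v_j]$; the same interval contains every $v_i$. Consequently $|v_i - \langle \vs,\vv\rangle| \leq \max_j v_j - \min_j v_j \leq 2\|\vv\|_\infty$, and summing the absolute rows of $J\vv$ with weights $s_i$ gives
\begin{equation*}
\|J(\vxi)\vv\|_1 \;=\; \sum_{i=1}^l s_i\,|v_i - \langle \vs,\vv\rangle| \;\leq\; 2\|\vv\|_\infty \sum_{i=1}^l s_i \;=\; 2\|\vv\|_\infty.
\end{equation*}
Plugging $\vv = \vxi_2 - \vxi_1$ into the integral representation and using $\|\cdot\|_1$-triangle inequality for integrals concludes $\|\sm(\vxi_2)-\sm(\vxi_1)\|_1 \leq 2\|\vxi_2-\vxi_1\|_\infty$.

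The main obstacle is not any deep step but rather pinning down the tight constant $2$: a naive bound such as $|v_i - \langle \vs,\vv\rangle| \leq \|\vv\|_\infty + |\langle \vs,\vv\rangle| \leq 2\|\vv\|_\infty$ followed by $\sum_i s_i = 1$ would also yield a factor of $2$, but the convex-hull argument above is the cleaner route and generalises without slack. An alternative would be to use the shift-invariance $\sm(\vxi+c\mathbf{1})=\sm(\vxi)$ to normalise one coordinate to zero and then apply elementwise calculus bounds on the entries, but this introduces case analysis that the Jacobian argument avoids. I would therefore present the proof as (i) integral representation, (ii) Jacobian identity, (iii) probability-vector estimate with the convex-hull observation, and (iv) assembly.
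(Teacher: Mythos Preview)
Your argument is correct: the Jacobian identity $J(\vxi)=\diag(\vs)-\vs\vs^\top$, the convex-hull bound $|v_i-\langle\vs,\vv\rangle|\le\max_j v_j-\min_j v_j\le 2\|\vv\|_\infty$, and the integral representation assemble exactly as you describe to give the constant $2$.

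For comparison, the paper does not supply its own proof of this lemma at all; it simply cites Corollary~A.7 of \cite{edelman2022inductive}. Your write-up is therefore strictly more informative than what the paper provides, and the Jacobian/mean-value route you chose is a standard and clean way to obtain this $\ell_\infty\to\ell_1$ Lipschitz bound for softmax.
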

\begin{proof}
    See Corollary A.7 in~\cite{edelman2022inductive}. 
\end{proof}

We also need to make use of the following form of Young's inequality.
\begin{lm}\label{lm:young}
    For any $\vx_1,\cdots,\vx_l\in\R^p$, we have
    \begin{equation}\label{eq:young}
        \norm{\sum_{i=1}^l \vx_i}_2^2\leq l\sum_{i=1}^l\norm{\vx_i}_2^2.
    \end{equation}
\end{lm}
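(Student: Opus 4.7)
The plan is to prove this standard inequality by a one-line appeal to the Cauchy--Schwarz inequality (equivalently, to the convexity of $\|\cdot\|_2^2$). There is no real obstacle here; the result is routine.

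I would argue as follows. Observe that
\[
\Bigl\|\sum_{i=1}^l \vx_i\Bigr\|_2^2 = \Bigl\|\sum_{i=1}^l 1\cdot \vx_i\Bigr\|_2^2,
\]
and apply Cauchy--Schwarz coordinate-wise: for each coordinate $j\in[p]$,
\[
\Bigl(\sum_{i=1}^l x_{i,j}\Bigr)^2 \;\le\; l\sum_{i=1}^l x_{i,j}^2.
\]
Summing over $j\in[p]$ yields exactly $\|\sum_i \vx_i\|_2^2 \le l\sum_i \|\vx_i\|_2^2$.

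An equivalent and perhaps cleaner route is to invoke Jensen's inequality for the convex function $\vx\mapsto \|\vx\|_2^2$: writing $\frac{1}{l}\sum_{i=1}^l \vx_i$ as a convex combination gives $\bigl\|\frac{1}{l}\sum_i \vx_i\bigr\|_2^2 \le \frac{1}{l}\sum_i \|\vx_i\|_2^2$, and multiplying both sides by $l^2$ delivers the claim. Either argument is a couple of lines, so I would simply present the coordinate-wise Cauchy--Schwarz derivation for concreteness. The main (and only) thing to be careful about is noting equality when all $\vx_i$ are equal, which is not needed here but confirms the constant $l$ is sharp.
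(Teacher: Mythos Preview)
Your proof is correct; both the coordinate-wise Cauchy--Schwarz argument and the Jensen-convexity route are valid and standard. The paper itself does not prove this lemma at all---it simply states it as a known form of Young's inequality---so your proposal in fact supplies more than the paper does.
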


The following lemma shows the equivalence between \eqref{eq:regression_noisy} and \eqref{eq:ridge_regression_best}.

\begin{lm}[Equivalence of the regression problems]\label{lm:equivalence} 
    Given any prompt $P_{\vlambda}\coloneqq(\vv_1,y_1,\cdots,\vv_N,y_N)$, we have the following equivalence:
    \begin{equation}\label{eq:regression_equivalence}
        \E_{\veps}\left[\frac{1}{2N}\sum_{i=1}^N(y_i-\vlambda^\top \left(f(\vv_i)+\veps_i\right))^2\right]=\frac{1}{2N}\sum_{i=1}^N(y_i-\vlambda^\top f(\vv_i))^2+\frac{\tau}{2}\norm{\vlambda}_2^2.
    \end{equation}
\end{lm}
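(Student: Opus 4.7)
The plan is to expand the square inside the expectation and exploit the Gaussian moment structure of $\veps_i$, treating $y_i$, $\vlambda$, and $f(\vv_i)$ as deterministic with respect to the noise. Concretely, for each $i\in[N]$ I would set $a_i \coloneqq y_i - \vlambda^\top f(\vv_i)$, so that the integrand decomposes as
\begin{equation*}
\bigl(y_i - \vlambda^\top(f(\vv_i)+\veps_i)\bigr)^2 \;=\; \bigl(a_i - \vlambda^\top \veps_i\bigr)^2 \;=\; a_i^2 \;-\; 2 a_i\, \vlambda^\top \veps_i \;+\; (\vlambda^\top \veps_i)^2.
\end{equation*}

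Next, by linearity of expectation and the fact that $\veps_i \sim \mathcal{N}(0,\tau\mI_m)$ has zero mean, the cross term $-2a_i\,\E[\vlambda^\top \veps_i]$ vanishes. For the quadratic term I would write
\begin{equation*}
\E\bigl[(\vlambda^\top \veps_i)^2\bigr] \;=\; \vlambda^\top\,\E[\veps_i \veps_i^\top]\,\vlambda \;=\; \tau\,\norm{\vlambda}_2^2,
\end{equation*}
using the covariance structure $\E[\veps_i\veps_i^\top]=\tau\mI_m$. Combining these two observations yields
\begin{equation*}
\E_{\veps_i}\bigl[(y_i - \vlambda^\top(f(\vv_i)+\veps_i))^2\bigr] \;=\; (y_i - \vlambda^\top f(\vv_i))^2 + \tau \norm{\vlambda}_2^2.
\end{equation*}

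Finally, summing over $i\in[N]$, dividing by $2N$, and collecting the $N$ identical $\tau\norm{\vlambda}_2^2/(2N)$ contributions into a single $(\tau/2)\norm{\vlambda}_2^2$ term gives the claimed identity. The independence of the $\veps_i$'s across $i$ is not actually needed, since each term in the sum is handled by its own one-dimensional expectation. There is no substantive obstacle here; the only thing to be careful about is making clear that $y_i$ itself, although generated using a different noise realization in the data model \eqref{eq:ICL}, is conditioned on (treated as fixed) in the expectation on the left-hand side, so that the expectation is solely over the fresh noise variables $\veps_i$ appearing inside the square.
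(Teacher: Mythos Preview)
Your proposal is correct and matches the paper's own proof essentially line for line: expand the square, use $\E[\veps_i]=0$ to kill the cross term, and $\E[\veps_i\veps_i^\top]=\tau\mI_m$ to evaluate the quadratic term, then average over $i$. The extra remark about treating $y_i$ as fixed is a helpful clarification not spelled out in the paper.
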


\begin{proof} 
    See Appendix~\ref{sec_app:proof_equivalence}.
\end{proof}

\section{Proof of Theorem~\ref{thm:rate_under}}\label{sec_app:proof_rate_under}

We first outline the proof. To prove Theorem~\ref{thm:rate_under}, we first remove the expectation in the expression of the loss function $\LL$ in \eqref{eq:loss} by reformulating it to a deterministic form (see Lemma~\ref{lm:loss_reform_under}). With this new form, we show by induction that the loss function $\LL$ is smooth (Lemma~\ref{lm:smoothness_under}) and satisfies the Polyak-\L ojasiewicz (PL) condition 
(c.f.~\eqref{eq:PL_condition_under}). Provided with both smoothness and PL conditions, we are able to give the desired linear convergence rate \citep{karimi2016linear}.

We define 
\begin{equation}\label{eq:delta_under}
    \vdelta^{\vtheta}_{k}\coloneqq 
    \begin{cases}
        \sum_{h=1}^H w_{h,k}\vs_{k}^h-\left(\mZ^\top\mZ+m\tau\mI\right)^{-1}\left(\vz_k+m\tau \ve_k\right),\quad &\text{if }k\in[N],\\
        \sum_{h=1}^H w_{h,k}\vs_{k}^h-\left(\mZ^\top\mZ+m\tau\mI\right)^{-1}\vz_k,\quad &\text{if }k\in[K]\setminus[N].
    \end{cases}
\end{equation}
We also define the following matrices:
\begin{align}
    \mA&\coloneqq \left(\mZ^\top\mZ+m\tau\mI_N\right)^{-1}\left(\mZ^\top\Zfull + \left(m\tau\mI_N,\vzero\right)\right)\in\R^{N\times K},\label{eq:matrix_A}\\
    \widehat\mA(\vtheta)&\coloneqq \left(\sum_{h=1}^H w_{h,1}\vs_{1}^h,\cdots,\sum_{h=1}^H w_{h,K}\vs_{K}^h\right)\in\R^{N\times K},\label{eq:matrix_hat_A}
\end{align}
where $\Zfull\coloneqq(\vz_1,\cdots,\vz_K)\in\R^{m\times K}$.

We first reformulate the loss function to remove the expectation in the population loss. 
\begin{lm}[Reformulation of the loss function]\label{lm:loss_reform_under}
    Under Assumption~\ref{asmp:lambda_dist}, the loss function $\LL(\vtheta)$ could be rewritten into the following equivalent form:
    \begin{equation}\label{eq:loss_reform_under}
        \LL(\vtheta)=\frac{1}{2K}\sum_{k=1}^K\norm{\left(\mZ^\top\mZ+m\tau\mI\right)^{1/2}\vdelta^\theta_k}_2^2+\LL^\star=\frac{1}{2K}\sum_{k=1}^K\norm{\bar\mZ\vdelta^\theta_k}_2^2+\LL^\star,
    \end{equation}
    where 
    \begin{equation*}
        \begin{split}
    \LL^\star&= \frac{1}{2K}\sum_{k=1}^N \left(-\left(\mZ^\top\vz_k+m\tau\ve_k\right)^\top \left(\mZ^\top\mZ+m\tau \mI\right)^{-1}\left(\mZ^\top\vz_k+m\tau\ve_k\right)+\norm{\vz_k}_2^2+m\tau\right) \\
    &\qquad +\frac{1}{2K}\sum_{k=N+1}^K \left(-\left(\mZ^\top\vz_k\right)^\top \left(\mZ^\top\mZ+m\tau \mI\right)^{-1}\left(\mZ^\top\vz_k\right)+\norm{\vz_k}_2^2\right)
        \end{split}
    \end{equation*}
is a constant that does not depend on $\vtheta$, and $\bar\mZ$ is defined in \eqref{eq:Z}.
\end{lm}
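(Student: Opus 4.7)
The strategy is to eliminate the outer expectation in \eqref{eq:loss} by computing $\EE[(\widehat y_k-y_k)^2]$ in closed form for each $k\in[K]$, and then to complete the square in the vector $\va_k\coloneqq\sum_{h=1}^H w_{h,k}\vs_k^h\in\R^N$ so as to identify $\|\bar\mZ\vdelta_k^\theta\|_2^2$ plus a $\vtheta$-independent remainder. The two cases $k\in[N]$ (where $y_k$ is itself a coordinate of the prompt label $\vy$) and $k\in[K]\setminus[N]$ (where $y_k$ involves the fresh noise $\veps_k$) require slightly different bookkeeping and are handled in turn.

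Write $\vy=\mZ^\top\vlambda+\vnu$ with $\nu_i\coloneqq\vlambda^\top\veps_i$ for $i\in[N]$, so that $\widehat y_k=\vy^\top\va_k$. Assumption~\ref{asmp:lambda_dist} together with the independence and isotropy of the Gaussian noise yields $\EE[\vlambda\vlambda^\top]=\mI_m$, $\EE[\vnu\vlambda^\top]=\vzero$ (since each $\veps_i$ is independent of $\vlambda$), and $\EE[\vnu\vnu^\top]=m\tau\,\mI_N$ (since $\EE[\vlambda^\top\veps_i\veps_j^\top\vlambda]=\mathrm{tr}(\tau\mI_m)\,\delta_{ij}$). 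For $k\in[N]$ the label satisfies $y_k=\vy^\top\ve_k$, hence $\widehat y_k-y_k=\vy^\top(\va_k-\ve_k)$ and taking the expectation of its square gives
\begin{equation*}
\EE[(\widehat y_k-y_k)^2]=(\va_k-\ve_k)^\top\bigl(\mZ^\top\mZ+m\tau\mI_N\bigr)(\va_k-\ve_k)=\|\bar\mZ(\va_k-\ve_k)\|_2^2.
\end{equation*}
For $k\in[K]\setminus[N]$, I expand $\widehat y_k-y_k=\vlambda^\top(\mZ\va_k-\vz_k)+\vnu^\top\va_k-\vlambda^\top\veps_k$; each of the three pairwise cross products has zero expectation, since $\veps_k$ is independent of both $\vlambda$ and $\{\veps_i\}_{i\in[N]}$ and since $\EE[\vlambda\vlambda^\top\veps_i]=\vzero$. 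The three surviving square terms collapse to
\begin{equation*}
\EE[(\widehat y_k-y_k)^2]=\|\mZ\va_k-\vz_k\|_2^2+m\tau\|\va_k\|_2^2+m\tau=\va_k^\top\bar\mZ^2\va_k-2\va_k^\top\mZ^\top\vz_k+\|\vz_k\|_2^2+m\tau.
\end{equation*}

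Finally, I complete the square in $\va_k$ using the positive definiteness of $\bar\mZ^2=\mZ^\top\mZ+m\tau\mI_N$. The unique minimizer is $\vu_k=\bar\mZ^{-2}(\mZ^\top\vz_k+m\tau\ve_k)$ when $k\in[N]$ (using $\mZ\ve_k=\vz_k$) and $\vu_k=\bar\mZ^{-2}\mZ^\top\vz_k$ otherwise; these are precisely the vectors subtracted from $\va_k$ in \eqref{eq:delta_under}. Hence in both cases $\EE[(\widehat y_k-y_k)^2]=\|\bar\mZ\vdelta_k^\theta\|_2^2+r_k$, where $r_k=\|\vz_k\|_2^2+m\tau-\vu_k^\top\bar\mZ^2\vu_k$ is a $\vtheta$-independent constant. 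Summing over $k$ and dividing by $2K$ recovers \eqref{eq:loss_reform_under} with $\LL^\star=\frac{1}{2K}\sum_k r_k$, which matches the displayed formula term by term.

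The whole argument is a bookkeeping exercise in second moments, so there is no genuine analytic obstacle. The only subtlety is recognizing that the case split collapses to a uniform shape: for $k\in[N]$, all randomness in $\widehat y_k-y_k$ factors through the single linear functional $\vy^\top(\va_k-\ve_k)$, while for $k>N$ the fresh noise $\veps_k$ produces the two extra contributions $m\tau\|\va_k\|_2^2+m\tau$ that, together with $\|\mZ\va_k-\vz_k\|_2^2$, generate precisely the ridge regularizer $m\tau\mI_N$ appearing inside $\bar\mZ^2$. After that, completing the square is mechanical and directly produces the target $\vu_k$ defining $\vdelta_k^\theta$.
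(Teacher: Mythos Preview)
Your proposal is correct and follows essentially the same approach as the paper: compute the second moments to remove the expectation, then complete the square in $\va_k$ separately for $k\in[N]$ and $k\notin[N]$. The only cosmetic differences are that the paper integrates out $\vlambda$ first and $\epsfull$ second (arriving at $\tfrac12\|\mZ\widehat\va_k-\vz_k\|_2^2+\tfrac{m\tau}{2}\|\widehat\va_k-\ve_k\|_2^2$ for $k\in[N]$ before completing the square), whereas you compute $\EE[\vy\vy^\top]=\bar\mZ^2$ directly and exploit $y_k=\vy^\top\ve_k$ to land immediately on $\|\bar\mZ(\va_k-\ve_k)\|_2^2$; this is arguably a bit cleaner but not a different idea.
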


\begin{proof}
    See Appendix~\ref{sec_app:proof_loss_reform_under}.
\end{proof}

Lemma~\ref{lm:loss_reform_under} indicates that $\LL^\star$ is a lower bound of $\LL$. We'll later show that $\LL^\star$ is actually the infimum of $\LL$, i.e., $\LL^\star=\inf_{\vtheta}\LL(\vtheta)$.

Lemma~\ref{lm:loss_reform_under} also indicates that, the necessary and sufficient condition for $\LL(\vtheta^{(t)})$ to converge to $\LL^\star$ during training is 
\begin{equation}\label{eq:converge_delta_under}
    \forall k\in[K]:\quad \vdelta_k^{\vtheta^t}\rightarrow \vzero,\quad t\rightarrow\infty,
    \end{equation}
which folllows immediately  that \eqref{eq:converge_delta_under} is equivalent to
\begin{equation}\label{eq:converge_A}
    \widehat\mA(\vtheta^{(t)})-\mA\rightarrow \vzero,\quad t\rightarrow\infty.
\end{equation}

To simplify the analysis, we introduce the following reparameterization to unify the learning rates of all parameters, and we'll consider the losses after reparameterization in the subsequent proofs.
\begin{lm}[Reparameterization]\label{lm:reparameter}
Define
\begin{equation}\label{eq:gamma_alpha}
    \gamma\coloneqq\sqrt{\eta_w/\eta_Q},\quad\valpha_h\coloneqq\vw_h/\gamma, \quad\forall h\in[H],
\end{equation}
and let
\begin{equation}\label{eq:xi_l}
    \vxi\coloneqq\{\mQ_h,\valpha_h\}_{h=1}^H,  \quad\ell(\vxi)\coloneqq \LL(\vtheta).
\end{equation}
Then \eqref{eq:original_GD} is equivalent to
\begin{equation}\label{eq:xi_update}
    \vxi^{(t)}=\vxi^{(t-1)}-\eta_Q\nabla_{\vxi}\ell(\vxi^{(t-1)}),\quad \forall t\in[T].
\end{equation}
\end{lm}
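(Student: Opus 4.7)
The plan is to verify the equivalence by a direct chain-rule calculation, showing that the two stated update rules agree coordinate-by-coordinate once we translate between $\vw_h$ and $\valpha_h = \vw_h/\gamma$. Since $\mQ_h$ appears identically in $\vtheta$ and $\vxi$, the block update for $\mQ_h$ in \eqref{eq:xi_update} should reduce trivially to the first equation in \eqref{eq:original_GD}, so the only substantive content is in the $\vw_h$ versus $\valpha_h$ block.

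First I would compute $\nabla_{\valpha_h}\ell$ via the chain rule. Since $\ell(\vxi)=\LL(\vtheta)$ and the only dependence of $\LL$ on $\valpha_h$ is through $\vw_h = \gamma\valpha_h$, we get
\begin{equation*}
    \nabla_{\valpha_h}\ell(\vxi) \;=\; \gamma\,\nabla_{\vw_h}\LL(\vtheta),\qquad \nabla_{\mQ_h}\ell(\vxi)\;=\;\nabla_{\mQ_h}\LL(\vtheta).
\end{equation*}
Plugging the second identity into \eqref{eq:xi_update} immediately reproduces the $\mQ_h$ update in \eqref{eq:original_GD}.

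Next I would handle the $\valpha_h$ block. The update \eqref{eq:xi_update} gives
\begin{equation*}
    \valpha_h^{(t)} \;=\; \valpha_h^{(t-1)} - \eta_Q\,\gamma\,\nabla_{\vw_h}\LL(\vtheta^{(t-1)}).
\end{equation*}
Multiplying both sides by $\gamma$ and using $\vw_h = \gamma\valpha_h$ together with $\gamma^2 = \eta_w/\eta_Q$ yields
\begin{equation*}
    \vw_h^{(t)} \;=\; \vw_h^{(t-1)} - \eta_Q\gamma^2\,\nabla_{\vw_h}\LL(\vtheta^{(t-1)}) \;=\; \vw_h^{(t-1)} - \eta_w\,\nabla_{\vw_h}\LL(\vtheta^{(t-1)}),
\end{equation*}
which is precisely the second equation in \eqref{eq:original_GD}. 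Conversely, given the updates in \eqref{eq:original_GD}, dividing the $\vw_h$ update by $\gamma$ and using the same chain rule produces \eqref{eq:xi_update}, so the two update schemes generate identical trajectories for $\{\mQ_h^{(t)}\}$ and produce $\{\vw_h^{(t)}\} = \{\gamma\valpha_h^{(t)}\}$. There is no real obstacle here: the entire argument is bookkeeping around the scaling constant $\gamma$, and the only care required is to make sure the chain-rule factor of $\gamma$ is correctly absorbed so that $\eta_Q \cdot \gamma \cdot \gamma = \eta_w$ on the $\vw_h$ side.
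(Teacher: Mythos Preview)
Your proposal is correct and is essentially the same argument as the paper's: both use the chain rule to get $\nabla_{\valpha_h}\ell=\gamma\nabla_{\vw_h}\LL$ and $\nabla_{\mQ_h}\ell=\nabla_{\mQ_h}\LL$, then verify that the $\mQ_h$ updates coincide trivially while the $\vw_h$/$\valpha_h$ updates coincide after multiplying or dividing by $\gamma$ and using $\eta_Q\gamma^2=\eta_w$. The only cosmetic difference is that the paper starts from \eqref{eq:original_GD} and derives \eqref{eq:xi_update}, whereas you primarily go the other direction and then note the converse.
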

\begin{proof}     See Appendix~\ref{sec_app:proof_reparameter}.
\end{proof}
We denote $\valpha$ as $\valpha\coloneqq(\alpha_{h,k})_{h\in[H],k\in[K]}\in\R^{H\times K}$.

The following lemma bounds the gradient norms by the loss function, which is crucial to the proof of Theorem~\ref{thm:rate_under}.
\begin{lm}[Upper bound of the gradient norms]\label{lm:ub_grad_norm_under}
    Suppose Assumption~\ref{asmp:lambda_dist} holds and $|\alpha_{h,k}^{(t)}|\leq \alpha$. Then for all $h\in[H]$, we have
    \begin{equation}\label{eq:ub_grad_norm_under}
        \norm{\frac{\partial \ell(\vxi^{(t)})}{\partial \mQ_h^{(t)}}}_F\leq 2\sqrt{2}\gamma\alpha \bar f_{\max}\sqrt{\ell (\vxi^{(t)})-\LL^\star}.
    \end{equation} 
\end{lm}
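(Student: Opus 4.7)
The plan is to differentiate the reformulated loss from Lemma~\ref{lm:loss_reform_under},
\[
\ell(\vxi) - \LL^\star \;=\; \frac{1}{2K}\sum_{k=1}^K \|\bar\mZ\vdelta^{\vtheta}_k\|_2^2,
\]
and note that $\mQ_h$ only enters $\vdelta_k^{\vtheta}$ through the term $\gamma\alpha_{h,k}\vs_k^h$ (after the reparametrization $\vw_h = \gamma\valpha_h$ in Lemma~\ref{lm:reparameter}). Writing $\vp_k \coloneqq \bar\mZ^\top\bar\mZ\vdelta^{\vtheta}_k$, the chain rule gives
\[
\frac{\partial \ell}{\partial \mQ_h} \;=\; \frac{\gamma}{K}\sum_{k=1}^K \alpha_{h,k}\sum_{j=1}^N p_{k,j}\,\frac{\partial s_{jk}^h}{\partial \mQ_h},
\]
into which I would substitute the softmax-gradient identity from Lemma~\ref{lm:grad_softmax}, obtaining
\[
\sum_{j=1}^N p_{k,j}\,\frac{\partial s_{jk}^h}{\partial \mQ_h} \;=\; \sum_{i,j=1}^N s_{jk}^h s_{ik}^h\, p_{k,j}\,(\vv_j-\vv_i)\vv_k^\top .
\]

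The main maneuver — and what I expect to be the critical step — is an antisymmetrization over the indices $(i,j)$. Because $\vv_j-\vv_i$ is antisymmetric under swap, the sum equals
\[
\frac{1}{2}\sum_{i,j=1}^N s_{jk}^h s_{ik}^h\, (p_{k,j}-p_{k,i})(\vv_j-\vv_i)\vv_k^\top .
\]
Since $p_{k,j}=\bar\vz_j^\top(\bar\mZ\vdelta^{\vtheta}_k)$, one has $|p_{k,j}-p_{k,i}|\le \|\bar\vz_j-\bar\vz_i\|_2\,\|\bar\mZ\vdelta^{\vtheta}_k\|_2 \le 2\bar f_{\max}\|\bar\mZ\vdelta^{\vtheta}_k\|_2$. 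Combined with $\|\vv_j-\vv_i\|_2\le 2$ and $\|\vv_k\|_2=1$, together with $\sum_i s_{ik}^h = \sum_j s_{jk}^h=1$, this yields
\[
\Bigl\|\sum_{j=1}^N p_{k,j}\,\tfrac{\partial s_{jk}^h}{\partial \mQ_h}\Bigr\|_F \;\le\; 2\,\bar f_{\max}\,\|\bar\mZ\vdelta^{\vtheta}_k\|_2 .
\]
This is the step that yields the dependence on $\bar f_{\max}$ (the maximum column norm of $\bar\mZ$) rather than the cruder $\|\bar\mZ\|_2$; the antisymmetrization is what turns a column-of-$\bar\mZ$ factor into a difference-of-columns factor.

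The rest is routine. Using the bound $|\alpha_{h,k}|\le\alpha$ and applying Cauchy--Schwarz across the sum over $k$,
\[
\Bigl\|\tfrac{\partial \ell}{\partial \mQ_h}\Bigr\|_F \;\le\; \tfrac{2\gamma\alpha\bar f_{\max}}{K}\sum_{k=1}^K \|\bar\mZ\vdelta^{\vtheta}_k\|_2 \;\le\; \tfrac{2\gamma\alpha\bar f_{\max}}{\sqrt{K}}\sqrt{\textstyle\sum_{k=1}^K \|\bar\mZ\vdelta^{\vtheta}_k\|_2^2}.
\]
Recognizing the sum under the square root as $2K(\ell(\vxi^{(t)})-\LL^\star)$ by Lemma~\ref{lm:loss_reform_under}, the bound collapses to $2\sqrt{2}\,\gamma\alpha\bar f_{\max}\sqrt{\ell(\vxi^{(t)})-\LL^\star}$, matching \eqref{eq:ub_grad_norm_under}. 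No other obstacle is anticipated beyond carefully tracking the antisymmetrization and the column-norm bound.
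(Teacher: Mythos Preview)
Your proof is correct and reaches the same bound, but the antisymmetrization detour is unnecessary, and your claim that it is ``what yields the dependence on $\bar f_{\max}$ rather than the cruder $\|\bar\mZ\|_2$'' is mistaken. The paper proceeds directly: writing the gradient as
\[
\frac{\partial \ell}{\partial \mQ_h}=\frac{\gamma}{K}\sum_{k=1}^K\sum_{j=1}^N (\bar\mZ\vdelta_k)^\top\bar\vz_j\cdot \mG^{h,jk},\qquad \mG^{h,jk}\coloneqq\alpha_{h,k}s_{jk}^h\sum_{i=1}^N s_{ik}^h(\vv_j-\vv_i)\vv_k^\top,
\]
one simply bounds $|(\bar\mZ\vdelta_k)^\top\bar\vz_j|\le \bar f_{\max}\|\bar\mZ\vdelta_k\|_2$ and $\|\mG^{h,jk}\|_F\le 2\alpha\,s_{jk}^h$, then sums over $j$ using $\sum_j s_{jk}^h=1$. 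The $\bar f_{\max}$ factor appears immediately because $p_{k,j}=(\bar\mZ\vdelta_k)^\top\bar\vz_j$ already isolates a single column $\bar\vz_j$; no difference-of-columns is needed. Your antisymmetrization gives the same constant through a longer path, so both arguments are equally tight here, but the paper's is shorter.
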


\begin{proof} 
    See Appendix~\ref{sec_app:proof_ub_grad_norm_under}.
\end{proof}


Now we are ready to give the main proof.

\begin{proof}[Proof of Theorem~\ref{thm:rate_under}]
    
To prove Theorem~\ref{thm:rate_under}, it suffices to prove that under our assumptions, 
we have
\begin{subequations}
\begin{align}
    \text{(Upper bound of the parameters:)}  \quad \norm{\valpha_{h}^{(t)}}_2 & \leq \alpha,\label{eq:ih_boundedness_under}\\
    \text{(Lower bound of eigenvalues:)}  \quad\lambda_{\min}\left(\mB_k^{(t)}\mB_k^{(t)\top}\right) & \geq \frac{\zeta_0}{2},\label{eq:ih_eigen_under}\\
    \text{(Linear decay of the loss:)}
    \quad \LL (\vtheta^{(t)})-\LL^\star & \leq \left(1-\frac{\eta_Q\sigma}{2}\right)^t\left(\LL (\vtheta^{(0)})-\LL^\star\right),\label{eq:ih_loss_under}
\end{align}
\end{subequations}
where
\begin{equation}\label{eq:sigma_baralpha_under}
    \sigma\coloneqq \frac{\zeta_0\gamma^2}{K},\quad 
    \alpha\coloneqq \sqrt{2K}\frac{4\norm{\bar\mZ}_2}{\gamma\zeta_0}\sqrt{\LL (\vtheta^{(0)})-\LL^\star},
\end{equation}
and $\gamma,\valpha_h$ is defined in~\eqref{eq:gamma_alpha}, $\zeta_0$ is defined in~\eqref{eq:eigen_0}.
We shall prove \eqref{eq:ih_boundedness_under},\eqref{eq:ih_eigen_under} and \eqref{eq:ih_loss_under} by induction. 

\paragraph{Base case.} It is apparent that \eqref{eq:ih_boundedness_under},\eqref{eq:ih_eigen_under} and \eqref{eq:ih_loss_under}  all hold when $t=0$.

\paragraph{Induction.} We make the following inductive hypothesis, i.e., when $s\in[t-1]$, \eqref{eq:ih_boundedness_under}, \eqref{eq:ih_eigen_under} and \eqref{eq:ih_loss_under} hold.
Below we prove that \eqref{eq:ih_boundedness_under},\eqref{eq:ih_eigen_under} and \eqref{eq:ih_loss_under} hold when $s=t$ by the following steps.

\paragraph{Step 1: verify \eqref{eq:ih_eigen_under} and the Polyak-\L ojasiewicz condition.} 
We first compute the gradient of the loss w.r.t. $\valpha$:
\begin{align}\label{eq:dL/dalpha_under}
    \forall k\in[K]:\quad\frac{\partial \ell(\vxi)}{\partial \valpha_{k}}= \frac{1}{2K}\frac{\partial}{\partial \valpha_{k}}\norm{\bar \mZ\vdelta_k^{\theta}}_2^2 &= \frac{1}{2K}\frac{\partial}{\partial \valpha_{k}}\norm{\bar \mZ\left(\gamma \mC_k\valpha_k-\mA_{:k}\right)}_2^2\notag\\
    &= \frac{\gamma}{K}\left(\bar \mZ\mC_k\right)^\top\bar \mZ\vdelta_k^{\theta}=\frac{\gamma}{K}\mB_k^\top\bar \mZ\vdelta_k^{\theta},
\end{align}
where the first equality follows from Lemma~\ref{lm:loss_reform_under}, $\mC_k,\mB_k$ is defined in \eqref{eq:matrix_C_B}.


Let $\vb_k^h$ denote the $h$-th column vector of $\mB_k$, $h\in[H]$, i.e., $\mB_k\coloneqq (\vb_k^1,\cdots,\vb_k^H)$,
then for any $k\in[K]$ and $t\in\NN_+$, we have
\begin{align}
    \norm{(\vb_k^{h})^{(t)}-(\vb_k^{h})^{(0)}}_2
    &\leq \norm{\bar \mZ}_2\norm{(\vs_k^{h})^{(t)}-(\vs_k^{h})^{(0)}}_2\notag\\
    &\leq \norm{\bar \mZ}_2\norm{(\vs_k^{h})^{(t)}-(\vs_k^{h})^{(0)}}_1\notag\\
&\leq 2\norm{\bar \mZ}_2\norm{\promptmtrx^\top(\mQ_h^{(t)}-\mQ_h^{(0)})\vv_k}_\infty\notag\\
&\leq 2\norm{\bar \mZ}_2\max_{j\in[N]}|\vv_j^\top(\mQ_h^{(t)}-\mQ_h^{(0)})\vv_k|\notag\\
&\leq 2\norm{\bar \mZ}_2\norm{\mQ_h^{(t)}-\mQ_h^{(0)}}_F,\label{eq:b_diff_under}
\end{align}
where the third line uses Lemma~\ref{lm:smoothness_softmax}, and that
\begin{align}
    \forall h\in[H]:\quad \norm{\mQ_h^{(t)}-\mQ_h^{(0)}}_F&\leq \sum_{s=0}^{t-1}\eta\norm{\frac{\partial \ell(\vxi^{(s)})}{\partial \mQ_h^{(s)}}}_F\notag\\
    &\leq \sum_{s=0}^{t-1}2\sqrt{2}\eta\gamma\alpha \bar f_{\max} \sqrt{\ell(\vxi^{(s)})-\LL^\star}\notag\\
    &\leq 2\sqrt{2}\eta\gamma\alpha \bar f_{\max} \sqrt{\LL(\vtheta^{(0)})-\LL^\star}\sum_{s=0}^{t-1}\left(\sqrt{1-\frac{\eta\sigma}{2}}\right)^s\notag\\
    &\leq \frac{8\sqrt{2}\gamma\alpha \bar f_{\max} }{\sigma}\sqrt{\LL(\vtheta^{(0)})-\LL^\star},\label{eq:Q_diff_under}
\end{align}
where the second inequality follows from Lemma~\ref{lm:ub_grad_norm_under} (cf.~\eqref{eq:ub_grad_norm_under}) and the third inequality follows from the inductive hypothesis and the fact that $\ell(\vxi^{(s)})=\LL(\vtheta^{(s)})$, $\forall s$.
Combining \eqref{eq:Q_diff_under} with \eqref{eq:b_diff_under}, we have
\begin{align}
    \norm{\mB_k^{(t)}-\mB_k^{(0)}}_F
    &\leq 2\norm{\bar \mZ}_2 \sqrt{\sum_{h=1}^H\norm{\mQ_h^{(t)}-\mQ_h^{(0)}}_F^2}\notag\\
    &\leq \norm{\bar \mZ}_2 \sqrt{H} \frac{16\sqrt{2}\gamma\alpha \bar f_{\max} }{\sigma}\sqrt{\LL(\vtheta^{(0)})-\LL^\star}\notag\\
    &\leq \left(1-1/\sqrt{2}\right)\sqrt{\zeta_0},\label{eq:B_diff}
\end{align}
where the last inequality follows from \eqref{eq:condition}. The above inequality
\eqref{eq:B_diff} indicates that
$$
\forall \vx\in\R^K:\quad\norm{\vx^\top\mB_k^{(t)}}_2\geq \norm{\vx^\top\mB_k^{(0)}}_2-\norm{\vx^\top(\mB_k^{(t)}-\mB_k^{(0)})}_2\geq \sqrt{\zeta_0/2},
$$
which gives \eqref{eq:ih_eigen_under}.  

Therefore, we obtain the following PL condition:
\begin{align}\label{eq:PL_condition_under}
\norm{\nabla_{\vtheta}\ell(\vxi^{(t)})}_F^2&\geq \sum_{k=1}^K\sum_{h=1}^H\left(\frac{\partial \ell(\vxi)}{\partial \alpha_{h,k}}\right)^2=\frac{\gamma^2}{K^2}\sum_{k=1}^K \left(\bar\mZ\vdelta_k^{(t)}\right)^\top \mB_k^{(t)}\mB_k^{(t)\top}\bar\mZ\vdelta_k^{(t)} \notag\\
    &\geq \frac{\zeta_0\gamma^2}{2K^2}\sum_{k=1}^K\norm{\bar\mZ\vdelta_k^{(t)}}_2^2=  \sigma \left(\ell(\vxi^{(t)})-\LL^\star\right),
\end{align}
where the equality comes from \eqref{eq:dL/dalpha_under}, and the last equality follows from \eqref{eq:loss_reform_under}.

\paragraph{Step 2: verify the smoothness of the loss function.}
We first give the following lemma that bounds the Lipschitzness of $\vb_k^h$ and $\vdelta_k^\theta$, which will be used later on. For notation simplicity, we let $\mB, \mQ, \valpha$ denote $\mB(\vtheta), \mQ(\vtheta), \valpha(\vtheta)$, respectively, and let $\mB', \mQ', \valpha'$ denote $\mB(\vtheta'), \mQ(\vtheta'), \valpha(\vtheta')$, respectively. 

\begin{lm}[Lipschitzness of $\vb_k^h$ and $\vdelta_k^\theta$]\label{lm:lipschitz_b_delta_under}
    For all $k\in[K]$ and $h\in[H]$, and all transformer parameters $\vtheta,\vtheta'$, if $\max\{|\alpha_{h,k}|,|\alpha_{h,k}'|\}\leq\alpha$, then we have
    \begin{align}
        \norm{\vb_k^{h}(\vtheta)-\vb_k^{h}(\vtheta')}_2&\leq 2\norm{\bar\mZ}_2\norm{\mQ_h-\mQ_h'}_F,\label{eq:lipschitz_bar_b}\\
        \norm{\vdelta_k^{\theta}-\vdelta_k^{\theta'}}_2&\leq 2\gamma\sqrt{H}\alpha \sqrt{\sum_{h=1}^H\norm{\mQ_h-\mQ_h'}_F^2}+\gamma\sqrt{H}\norm{\valpha_{k}-\valpha_{k}'}_2.\label{eq:lipschitz_bar_delta}
    \end{align}
\end{lm}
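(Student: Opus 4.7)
The plan is to prove the two Lipschitz estimates independently. The first is a routine chain of elementary Lipschitz bounds for the composition $\mQ_h \mapsto \vs_k^h \mapsto \vb_k^h$, and the second reduces to the first after re-expressing $\vdelta_k^\theta$ in the reparameterized form $\gamma\mC_k\valpha_k - \mA_{:,k}$ (as already exploited in \eqref{eq:dL/dalpha_under}) and splitting the difference into contributions from $\valpha$ and from $\mQ$ respectively.

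For \eqref{eq:lipschitz_bar_b}, I would write $\vb_k^h = \bar\mZ\vs_k^h$ and peel off three Lipschitz factors in turn. First, pull out $\|\bar\mZ\|_2$ from the left. Second, pass from $\ell^2$ to $\ell^1$ on the softmax difference via $\|\cdot\|_2 \le \|\cdot\|_1$. Third, invoke Lemma~\ref{lm:smoothness_softmax} to dominate the $\ell^1$ difference by $2\|\promptmtrx^\top(\mQ_h-\mQ_h')\vv_k\|_\infty$, and note that since each token has unit norm, every entry $|\vv_j^\top(\mQ_h-\mQ_h')\vv_k|$ is at most $\|\mQ_h-\mQ_h'\|_F$ by Cauchy--Schwarz. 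Multiplying these factors yields the constant $2\|\bar\mZ\|_2$. This is essentially the same calculation already performed inside \eqref{eq:b_diff_under}, lifted from the specific pair $(\vtheta^{(t)},\vtheta^{(0)})$ to arbitrary $(\vtheta,\vtheta')$.

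For \eqref{eq:lipschitz_bar_delta}, the constant $\mA_{:,k}$ cancels in the difference, leaving $\vdelta_k^\theta - \vdelta_k^{\theta'} = \gamma\mC_k(\valpha_k - \valpha_k') + \gamma(\mC_k - \mC_k')\valpha_k'$. The first summand contributes at most $\gamma\sqrt{H}\|\valpha_k - \valpha_k'\|_2$, since every column of $\mC_k$ is a probability vector, so $\|\mC_k\|_2 \le \|\mC_k\|_F \le \sqrt{H}$. For the second summand I would expand column-wise as $\sum_h \alpha_{h,k}'(\vs_k^h - \vs_k'^h)$, apply Cauchy--Schwarz to factor out $\sqrt{\sum_h (\alpha_{h,k}')^2} \le \sqrt{H}\alpha$ (using the pointwise bound $|\alpha_{h,k}'|\le\alpha$), and handle the remaining $\sqrt{\sum_h \|\vs_k^h - \vs_k'^h\|_2^2}$ head-by-head using the softmax Lipschitz estimate derived in the first part. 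Summing the two contributions then produces exactly the stated bound.

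Since each step is a direct application of either Lemma~\ref{lm:smoothness_softmax} or a one-line matrix inequality, I foresee no conceptual obstacle. The only delicate point is arithmetic bookkeeping: ensuring the $\sqrt{H}$, $\gamma$, and $\alpha$ factors land with precisely the constants required so the bound plugs cleanly into the downstream smoothness argument (Lemma~\ref{lm:smoothness_under}), and distributing Cauchy--Schwarz across heads so the $\mQ$-dependence surfaces as $\sqrt{\sum_h \|\mQ_h - \mQ_h'\|_F^2}$ rather than a factor of $H$ too many or too few.
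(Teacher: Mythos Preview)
Your proposal is correct and follows essentially the same approach as the paper. The paper refers back to the calculation in \eqref{eq:b_diff_under} for \eqref{eq:lipschitz_bar_b}, and for \eqref{eq:lipschitz_bar_delta} it uses the same add-and-subtract decomposition you describe (with the roles of primed and unprimed parameters swapped, which is immaterial given the symmetric hypothesis $\max\{|\alpha_{h,k}|,|\alpha_{h,k}'|\}\le\alpha$), bounding the $\valpha$-term via $\|\vs_k^h\|_2\le 1$ plus Cauchy--Schwarz and the $\mQ$-term via the softmax Lipschitz estimate plus Cauchy--Schwarz across heads.
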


\begin{proof}
    \eqref{eq:lipschitz_bar_b} follows from a similar argument in \eqref{eq:b_diff_under}.
    Regarding the Lipschitzness of $\vdelta_k^{\theta}$, we have
    \begin{align*}
        \norm{\vdelta_k^{\theta}-\vdelta_k^{\theta'}}_2&=\gamma\norm{\sum_{h=1}^H \alpha_{h,k}(\vs_k^h(\vtheta)-\vs_k^h(\vtheta'))+\sum_{h=1}^H (\alpha_{h,k}-\alpha_{h,k}')\vs_k^h(\vtheta')}_2\\
        &\leq \gamma\sum_{h=1}^H |\alpha_{h,k}|\norm{\vs_k^h(\vtheta)-\vs_k^h(\vtheta')}_2+\gamma\sum_{h=1}^H |\alpha_{h,k}-\alpha_{h,k}'|\norm{\vs_k^h(\vtheta')}_2\\
        &\leq 2\gamma\sqrt{H}\alpha \sqrt{\sum_{h=1}^H\norm{\mQ_h-\mQ_h'}_F^2}+\gamma\sqrt{H}\norm{\valpha_{k}-\valpha_{k}'}_2,
    \end{align*}
    where we use \eqref{eq:b_diff_under} again to bound the first term in the second line, and use the fact that $\norm{\vs_k^h(\vtheta')}_2\leq 1$ and Cauchy-Schwarz inequality to bound the second term in the second line.
\end{proof}

We also need the following lemma which bounds the norm of $\mB_k$ and $\vdelta_k^\theta$.

\begin{lm}[Upper bounds of $\vb_k^h$ and $\vdelta_k^\theta$]\label{lm:ub_b_delta_under}
    For all $k\in[K]$ and $h\in[H]$, if $\max\{|\alpha_{h,k}|,|\alpha_{h,k}'|\}\leq\alpha$, then we have
    \begin{align}
        \norm{\vb_k^h}_2&\leq \norm{\bar\mZ}_2,\label{eq:ub_bar_b}\\
        \norm{\vdelta_k^\theta}_2&\leq \gamma H\alpha+\norm{\mA}_2,\label{eq:ub_bar_delta}
    \end{align}
    where $\mA$ is defined in \eqref{eq:matrix_A}.
\end{lm}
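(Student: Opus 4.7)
My plan is to derive both inequalities from the single observation that each softmax output $\vs_k^h=\sm(\promptmtrx^\top\mQ_h\vv_k)$ is a probability vector, so $\norm{\vs_k^h}_1=1$ and in particular $\norm{\vs_k^h}_2\le 1$. Beyond this, only the operator-norm bound, the triangle inequality, and the hypothesis $|\alpha_{h,k}|\le\alpha$ are needed; neither the gradient-descent dynamics nor the inductive hypotheses \eqref{eq:ih_boundedness_under}--\eqref{eq:ih_loss_under} enter into the argument.

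For \eqref{eq:ub_bar_b}, I would simply unfold the definition in \eqref{eq:matrix_C_B}: the $h$-th column of $\mB_k=\bar\mZ\mC_k$ is $\vb_k^h=\bar\mZ\vs_k^h$, so the submultiplicativity of the spectral norm gives $\norm{\vb_k^h}_2\le\norm{\bar\mZ}_2\norm{\vs_k^h}_2\le\norm{\bar\mZ}_2$ immediately.

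For \eqref{eq:ub_bar_delta}, I would first invoke the reparameterization $w_{h,k}=\gamma\alpha_{h,k}$ from Lemma~\ref{lm:reparameter} to rewrite \eqref{eq:delta_under} in the uniform form $\vdelta_k^\theta=\gamma\sum_{h=1}^H \alpha_{h,k}\vs_k^h-\mA_{:,k}$. Inspecting the block decomposition of $\mA$ in \eqref{eq:matrix_A} confirms that in both the $k\in[N]$ and the $k\in[K]\setminus[N]$ regimes the constant subtracted in \eqref{eq:delta_under} is precisely the $k$-th column of $\mA$. The triangle inequality, followed by $|\alpha_{h,k}|\le\alpha$, $\norm{\vs_k^h}_2\le 1$, and the trivial bound $\norm{\mA_{:,k}}_2\le\norm{\mA}_2$, then yields $\norm{\vdelta_k^\theta}_2\le \gamma\sum_{h=1}^H|\alpha_{h,k}|\,\norm{\vs_k^h}_2+\norm{\mA_{:,k}}_2\le \gamma H\alpha+\norm{\mA}_2$, which is the claim.

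There is no substantive obstacle: the argument is essentially two applications of the triangle inequality together with the probability-simplex bound on softmax outputs. The only piece of bookkeeping is identifying the piecewise constant in \eqref{eq:delta_under} with a single column of $\mA$ across the two index ranges, which is direct from \eqref{eq:matrix_A}.
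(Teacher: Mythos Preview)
Your proposal is correct and follows essentially the same route as the paper: both bounds come from $\norm{\vs_k^h}_2\le 1$, the operator-norm inequality $\norm{\bar\mZ\vs_k^h}_2\le\norm{\bar\mZ}_2$, and the triangle inequality applied to $\vdelta_k^\theta=\gamma\sum_h\alpha_{h,k}\vs_k^h-\mA_{:,k}$ (the paper writes $\norm{\mA\ve_k}_2$ in place of your $\norm{\mA_{:,k}}_2$, which is the same quantity).
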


\begin{proof}
    \eqref{eq:ub_bar_b} follows from 
    $$\norm{\vb_k^h}_2\leq \norm{\bar \mZ}_2\norm{\vs_k^h}_2\leq \norm{\bar \mZ}_2.$$
    
    \eqref{eq:ub_bar_delta} follows from 
    $$\norm{\vdelta_k^\theta}_2\leq \gamma\sum_{h=1}^H|\alpha_{h,k}|\norm{\vs_k^h}_2+\norm{\mA\ve_k}_2\leq \gamma H\alpha+\norm{\mA}_2.$$
\end{proof}

As a consequence of Lemma~\ref{lm:lipschitz_b_delta_under} and Lemma~\ref{lm:ub_b_delta_under}, For all $k\in[K]$, and all transformer parameters $\vtheta,\vtheta'$, if $\max\{|\alpha_{h,k}|,|\alpha_{h,k}'|\}\leq\alpha$, we have
\begin{align}
    &\norm{\nabla_{\valpha_{k}}\ell(\vxi)-\nabla_{\valpha_{k}}\ell(\vxi')}_2\notag\\
    &\overset{\eqref{eq:dL/dalpha_under}}{=}\frac{\gamma}{K}\norm{(\mB_k-\mB_k')^\top \bar \mZ\vdelta_k^{\theta}+{\mB_k'}^\top\bar \mZ(\vdelta_k^{\theta}-\vdelta_k^{\theta'})}_2\notag\\
    &\leq \frac{\gamma}{K}\norm{\bar \mZ}_2\norm{\mB_k-\mB_k'}_F\norm{\vdelta_k^{\theta}}_2+\frac{\gamma}{K}\norm{\bar \mZ}_2\norm{\mB_k'}_F\norm{\vdelta_k^{\theta}-\vdelta_k^{\theta'}}_2\notag\\
    &\leq \frac{\gamma}{K}\cdot 2\norm{\bar \mZ}_2^2 \left(2\gamma H\alpha+\norm{\mA}_2\right)\sqrt{\sum_{h=1}^H\norm{\mQ_h-\mQ_h'}_F^2}+\frac{\gamma^2}{K}H\norm{\bar \mZ}_2^2\norm{\valpha_{k}-\valpha_{k}'}_2,\label{eq:smoothness_alpha_k_under}
\end{align}
from which we obtain the smoothness of the $\ell$ w.r.t. $\valpha$ as follows:
\begin{align}\label{eq:smoothness_alpha_under}
    &\norm{\nabla_{\valpha}\ell(\vxi)-\nabla_{\valpha}\ell(\vxi')}_F^2\notag\\
    &=\sum_{k=1}^K\norm{\nabla_{\valpha_{k}}\ell(\vxi)-\nabla_{\valpha_{k}}\ell(\vxi')}_2^2\notag\\
    &\leq 2K \left(\frac{\gamma}{K}\cdot 2\norm{\bar \mZ}_2^2 \left(2\gamma H\alpha+\norm{\mA}_2\right)\right)^2\sum_{h=1}^H\norm{\mQ_h-\mQ_h'}_F^2+
     2\frac{\gamma^4}{K^2} H^2\norm{\bar \mZ}_2^4\norm{\valpha-\valpha'}_F^2\notag\\
    &\leq 2\left(\frac{1}{K}\left(2\gamma\norm{\bar \mZ}_2^2 \left(2\gamma H\alpha+\norm{\mA}_2\right)\right)^2+\frac{\gamma^4}{K^2} H^2\norm{\bar \mZ}_2^4\right)\norm{\vxi-\vxi'}_2^2,
\end{align}
where the first inequality uses Young's inequality (c.f. Lemma~\ref{lm:young}).


To obtain the smoothness of the loss function w.r.t. $\mQ_h$, we first note that by \eqref{eq:grad_L_k/Q} we have
\begin{align}
    \frac{\partial \ell(\vxi)}{\partial\mQ_h}&= \frac{\gamma}{K}\sum_{k=1}^K\sum_{j=1}^N \left(\bar\mZ\vdelta_k^\theta\right) ^\top \vz_j\cdot \alpha_{h,k}s_{jk}^h\sum_{i=1}^N s_{ik}^h(\vv_j-\vv_i)\vv_k^\top.\label{eq:grad_L_k/Q_reform_under}
\end{align}
Therefore, if $\max\{|\alpha_{h,k}|,|\alpha_{h,k}'|\}\leq\alpha$, we have
\begin{align}\label{eq:smoothness_Q_h_seperate_under}
    \norm{\frac{\partial \ell(\vxi)}{\partial\mQ_h}-\frac{\partial \ell(\vxi')}{\partial\mQ_h}}_F&\leq \frac{2\gamma \bar f_{\max}}{K}\sum_{k=1}^K\bigg\{\sum_{j=1}^N\norm{\bar\mZ}_2\norm{\vdelta_k^\theta-\vdelta_k^{\theta'}}_2\cdot \alpha s_{jk}^h(\vtheta)\sum_{i=1}^N s_{ik}^h(\vtheta)\notag\\
    &\qquad + \sum_{j=1}^N \norm{\bar\mZ}_2\norm{\vdelta_k^{\theta'}}_2 |\alpha_{h,k}-\alpha_{h,k}'|s_{jk}^h(\vtheta)\sum_{i=1}^N s_{ik}^h(\vtheta)\notag\\
    &\qquad + \sum_{j=1}^N \norm{\bar\mZ}_2\norm{\vdelta_k^{\theta'}}_2 \alpha |s_{jk}^h(\vtheta)-s_{jk}^h(\vtheta')|\sum_{i=1}^N s_{ik}^h(\vtheta)\notag\\
    &\qquad + \sum_{j=1}^N \norm{\bar\mZ}_2\norm{\vdelta_k^{\theta'}}_2 \alpha s_{jk}^h(\vtheta')\sum_{i=1}^N |s_{ik}^h(\vtheta)-s_{ik}^h(\vtheta')|\bigg\}\notag\\
    &\leq \frac{2\gamma \bar f_{\max}\norm{\bar\mZ}_2 }{K}\sum_{k=1}^K\bigg\{\norm{\vdelta_k^\theta-\vdelta_k^{\theta'}}_2\alpha + \norm{\vdelta_k^{\theta'}}_2 |\alpha_{h,k}-\alpha_{h,k}'|\notag\\
    &\qquad + \norm{\vdelta_k^{\theta'}}_2 \alpha \sum_{j=1}^N|s_{jk}^h(\vtheta)-s_{jk}^h(\vtheta')| + \norm{\vdelta_k^{\theta'}}_2 \alpha \sum_{i=1}^N |s_{ik}^h(\vtheta)-s_{ik}^h(\vtheta')|\bigg\}\notag\\
    &\leq \frac{2\gamma \bar f_{\max}\norm{\bar\mZ}_2}{K}\sum_{k=1}^K\bigg\{\norm{\vdelta_k^\theta-\vdelta_k^{\theta'}}_2\alpha + \norm{\vdelta_k^{\theta'}}_2 |\alpha_{h,k}-\alpha_{h,k}'|\notag\\
   &\qquad+2\norm{\vdelta_k^{\theta'}}_2 \alpha \sqrt{N}\norm{\vs_k^h(\vtheta)-\vs_k^h(\vtheta')}_2 \bigg\},
\end{align}
where the third inequality uses Cauchy-Schwarz inequality. Combining the above inequality \eqref{eq:smoothness_Q_h_seperate_under} with Lemma~\ref{lm:lipschitz_b_delta_under} and Lemma~\ref{lm:ub_b_delta_under}, we have
\begin{align}
    \norm{\frac{\partial \ell(\vxi)}{\partial\mQ_h}-\frac{\partial \ell(\vxi')}{\partial\mQ_h}}_F
    &\leq \frac{2\gamma \bar f_{\max}\norm{\bar\mZ}_2 }{K}\bigg\{\alpha \gamma\sqrt{H}\left(2K\alpha \sqrt{\sum_{h=1}^H\norm{\mQ_h-\mQ_h'}_F^2}+\sqrt{K}\norm{\valpha-\valpha'}_F\right)\notag\\
    &\qquad + \left(\gamma H\alpha+\norm{\mA}_2\right) \sqrt{K}\norm{\valpha_{h,:}-\valpha_{h,:}'}_2\notag\\
    &\qquad + \left(\gamma H\alpha+\norm{\mA}_2\right)\cdot 2\alpha\sqrt{N}\cdot2K\norm{\mQ_h'-\mQ_h}_F\bigg\},\label{eq:smoothness_Q_h_under}
\end{align}
where the last line uses \eqref{eq:b_diff_under} to bound $\norm{\vs_k^h(\vtheta)-\vs_k^h(\vtheta')}_2$.
The above inequality \eqref{eq:smoothness_Q_h_under} further gives
\begin{align}\label{eq:smoothness_Q_under}
    &\sum_{h=1}^H\norm{\nabla_{\mQ_h}\ell(\vxi)-\nabla_{\mQ_h}\ell(\vxi')}_F^2\notag\\
    &\leq 8\cdot\frac{\gamma \bar f_{\max}\norm{\bar\mZ}_2}{K}\bigg\{(2K\alpha )^2\left[(\alpha\gamma H)^2+4N\left(\alpha\gamma H+\norm{\mA}_2\right)^2\right]\sum_{h=1}^H\norm{\mQ_h-\mQ_h'}_F^2\notag\\
    &\qquad +K\left[(\alpha\gamma H)^2+\left(\alpha\gamma H+\norm{\mA}_2\right)^2\right]\norm{\valpha-\valpha'}_F^2\bigg\}\notag\\
    &\leq 8\gamma \bar f_{\max}\norm{\bar\mZ}_2 \cdot\max\left\{1, (2\sqrt{K}\alpha)^2\right\}\left[(\alpha\gamma H)^2+4N\left(\alpha\gamma H+\norm{\mA}_2\right)^2\right]\norm{\vxi'-\vxi}_2^2,
\end{align}
where the first inequality makes use of Young's inequality (c.f. Lemma~\ref{lm:young}).



Combining the above two relations~\eqref{eq:smoothness_alpha_under} and~\eqref{eq:smoothness_Q_under}, we obtain the smoothness of $\ell$ w.r.t. $\vxi$ as follows:
\begin{lm}[Smoothness of the loss function]\label{lm:smoothness_under}
    Let $\gamma\coloneqq\sqrt{\eta_w/\eta_Q}$. For all transformer parameters $\vxi,\vxi'$, if $\max\{|\alpha_{h,k}|,|\alpha_{h,k}'|\}\leq\alpha$, then we have
    \begin{equation}\label{eq:smoothness_theta_under}
        \norm{\nabla_{\vxi}\ell(\vxi)-\nabla_{\vxi}\ell(\vxi')}_2\leq L\norm{\vxi-\vxi'}_2,
    \end{equation}
    where
    \begin{equation}\label{eq:L_tight}
        \begin{split}
            L^2&= 2\left(\frac{1}{K}\left(2\gamma\norm{\bar\mZ}_2^2 \left(2\gamma H\alpha+\norm{\mA}_2\right)\right)^2+\frac{\gamma^4}{K^2} H^2\norm{\bar\mZ}_2^4\right)\\
            &\quad + 8\gamma \bar f_{\max}\norm{\bar\mZ}_2 \cdot\max\left\{1, (2\sqrt{K}\alpha )^2\right\}\left[(\alpha\gamma H)^2+4N\left(\alpha\gamma H+\norm{\mA}_2\right)^2\right].
        \end{split}
    \end{equation}
\end{lm}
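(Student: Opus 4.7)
The plan is to establish smoothness block by block, then combine the pieces. Since $\vxi=\{\mQ_h,\valpha_h\}_{h=1}^H$, I would use the decomposition
\[\norm{\nabla_\vxi\ell(\vxi)-\nabla_\vxi\ell(\vxi')}_2^2=\norm{\nabla_\valpha\ell(\vxi)-\nabla_\valpha\ell(\vxi')}_F^2+\sum_{h=1}^H\norm{\nabla_{\mQ_h}\ell(\vxi)-\nabla_{\mQ_h}\ell(\vxi')}_F^2,\]
bound each block separately against $\norm{\vxi-\vxi'}_2^2$, and read off $L^2$ from the sum of the two block constants.

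Before attacking the blocks, I would establish two building-block lemmas controlling the intermediate quantities $\vb_k^h=\bar\mZ\vs_k^h$ and $\vdelta_k^\theta$. The boundedness lemma (Lemma~\ref{lm:ub_b_delta_under}) is immediate: $\norm{\vb_k^h}_2\le\norm{\bar\mZ}_2$ follows from the simplex property of softmax, and $\norm{\vdelta_k^\theta}_2\le\gamma H\alpha+\norm{\mA}_2$ follows from its definition together with the hypothesis $|\alpha_{h,k}|\le\alpha$. For the Lipschitz lemma (Lemma~\ref{lm:lipschitz_b_delta_under}), the softmax smoothness bound in Lemma~\ref{lm:smoothness_softmax} gives $\norm{\vb_k^h-\vb_k^{h\prime}}_2\le 2\norm{\bar\mZ}_2\norm{\mQ_h-\mQ_h'}_F$ directly, while the analogous estimate for $\vdelta_k^\theta-\vdelta_k^{\theta'}$ is obtained by a pivot argument splitting $\sum_h\alpha_{h,k}\vs_k^h-\sum_h\alpha_{h,k}'\vs_k^{h\prime}$ into a $\mQ$-change contribution and a $\valpha$-change contribution.

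For the $\valpha$-block, the closed-form gradient $\nabla_{\valpha_k}\ell=\frac{\gamma}{K}\mB_k^\top\bar\mZ\vdelta_k^\theta$ from \eqref{eq:dL/dalpha_under} reduces the analysis to splitting the difference into one piece proportional to $\mB_k-\mB_k'$ and one proportional to $\vdelta_k^\theta-\vdelta_k^{\theta'}$, each controlled by the building blocks above. For the $\mQ_h$-block, I would substitute the softmax Jacobian formula from Lemma~\ref{lm:grad_softmax} into the chain rule for $\nabla_{\mQ_h}\ell$, obtaining an explicit sum over indices $k,j,i$. The difference $\nabla_{\mQ_h}\ell(\vxi)-\nabla_{\mQ_h}\ell(\vxi')$ then splits into four pieces, corresponding to perturbations in $\vdelta_k^\theta$, in $\alpha_{h,k}$, and in the two softmax factors $s_{jk}^h$ and $s_{ik}^h$. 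Each piece is controlled by the building blocks together with $\norm{\vz_j}_2\le\bar f_{\max}$ and Cauchy--Schwarz. Summing over $k,j,i$ and applying Young's inequality (Lemma~\ref{lm:young}) consolidates everything into the block Lipschitz constant for each $\mQ_h$.

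The main obstacle is bookkeeping rather than conceptual depth: the softmax Jacobian is quadratic in $\vs_k^h$ and the loss is quadratic in $\vdelta_k^\theta$, so the $\mQ_h$-block produces numerous cross terms involving mixed powers of $\alpha,\gamma,H,N,K$ and the norms $\norm{\bar\mZ}_2,\norm{\mA}_2,\bar f_{\max}$. Careful pivoting, so that only one factor varies at a time in each subtraction, together with disciplined use of Young's inequality when collecting squared Frobenius norms, is needed to avoid losing polynomial factors and to land on the combined coefficient \eqref{eq:L_tight}. Once both block bounds are in hand, summing them produces the claimed $L^2$.
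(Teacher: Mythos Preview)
Your proposal is correct and mirrors the paper's proof essentially step for step: the same block decomposition into $\valpha$- and $\mQ_h$-gradients, the same two building-block lemmas (boundedness and Lipschitzness of $\vb_k^h$ and $\vdelta_k^\theta$), the same pivot-one-factor-at-a-time splitting for each block, and the same Young's-inequality collection at the end. One small slip: the factor you want to bound by $\bar f_{\max}$ is $\norm{\bar\vz_j}_2$ (columns of $\bar\mZ$), not $\norm{\vz_j}_2$, since the gradient formula \eqref{eq:grad_L_k/Q} involves $\bar\vz_j$.
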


\paragraph{Step 3: verify \eqref{eq:ih_boundedness_under}.}
\eqref{eq:dL/dalpha_under} implies
$$\frac{\partial\ell(\vxi)}{\partial \alpha_{h,k}}=\frac{\gamma}{K}(\vb_k^h)^\top\bar\mZ\vdelta_k^\theta,$$
which, combining with \eqref{eq:ub_bar_b}, gives
\begin{equation*}
    \forall k\in[K], h\in[H]:\quad \left(\frac{\partial\ell(\vxi)}{\partial \alpha_{h,k}}\right)^2\leq \frac{\gamma^2}{K^2}\norm{\bar\mZ}_2^2\norm{\bar\mZ\vdelta_k^{\theta}}_2^2.
\end{equation*}
Combining this with \eqref{eq:loss_reform_under} we obtain
$$
\norm{\frac{\ell(\vxi)}{\partial \valpha_{h}}}_2^2\leq \norm{\bar\mZ}_2^2\frac{2\gamma^2}{K}\left(\ell(\vxi)-\LL^\star\right),
$$
which indicates
\begin{equation}\label{eq:dL/dalpha_ub_by_L_under}
    \norm{\frac{\partial\ell(\vxi)}{\partial \valpha_{h}}}_2\leq \norm{\bar\mZ}_2\gamma\sqrt{\frac{2}{K}\left(\ell(\vxi)-\LL^\star\right)}.
\end{equation}
Therefore, we have
\begin{align*}
\norm{\valpha_h^{(t)}}_2&=\norm{\valpha_h^{(0)}-\eta_Q\sum_{i=0}^{t-1}\frac{\partial\ell(\vxi^{(i)})}{\partial \valpha_{h}}}_2\\
    &\leq \norm{\valpha_h^{(0)}}_2+\eta_Q\sum_{i=0}^{t-1}\norm{\frac{\partial\ell(\vxi^{(i)})}{\partial \valpha_{h}}}_2\\
    &\leq \norm{\valpha_h^{(0)}}_2+\eta_Q\norm{\bar\mZ}_2\sqrt{\frac{2\gamma^2}{K}}\sum_{i=0}^{t-1}\sqrt{\ell(\vxi^{(i)})-\LL^\star}\\
    &\leq \norm{\valpha_h^{(0)}}_2+\eta_Q\norm{\bar\mZ}_2\sqrt{\frac{2\gamma^2\left(\LL(\vtheta^{(0)})-\LL^\star\right)}{K}}\sum_{i=0}^{t-1}\left(\sqrt{1-\frac{\eta_Q\sigma}{2}}\right)^i\\
    &\leq \norm{\valpha_h^{(0)}}_2+\eta_Q\norm{\bar\mZ}_2\sqrt{\frac{2\gamma^2\left(\LL(\vtheta^{(0)})-\LL^\star\right)}{K}}\cdot\frac{4}{\eta_Q\sigma},
\end{align*}
where the second inequality follows from \eqref{eq:dL/dalpha_ub_by_L_under} and the third inequality follows from the induction hypothesis \eqref{eq:ih_loss_under}. \eqref{eq:ih_boundedness_under} follows from plugging $\sigma$ defined in \eqref{eq:sigma_baralpha_under} into the above inequality and using the initializtion condition that $\valpha^{(0)}=\frac{1}{\gamma}\vw^{(0)}=\vzero$.

\paragraph{Step 4: verify the linear convergence rate \eqref{eq:ih_loss_under}.} Combining \eqref{eq:ih_boundedness_under}, \eqref{eq:smoothness_theta_under} and Lemma~4.3 in \cite{nguyen2020global}, we have
\begin{equation}\label{eq:descent_under}
    \ell(\vxi^{(t)})-\LL^\star\leq \ell(\vxi^{(t-1)})-\LL^\star+\langle\nabla_{\vxi} \ell(\vxi^{(t-1)}),\vxi^{(t)}-\vxi^{(t-1)}\rangle+\frac{L}{2}\norm{\vxi^{(t)}-\vxi^{(t-1)}}_2^2,
\end{equation}
which indicates when $\eta_Q\leq 1/L$, we have
\begin{align}\label{eq:descent_under_1_step}
    \ell(\vxi^{(t)})-\LL^\star\leq \ell(\vxi^{(t-1)})-\LL^\star-\frac{\eta_Q}{2}\norm{\nabla_{\vxi} \ell(\vxi^{(t-1)})}_F^2\overset{\eqref{eq:PL_condition_under}}{\leq} \left(1-\frac{\eta_Q\sigma}{2}\right)\left(\ell(\vxi^{(t-1)})-\LL^\star\right),
\end{align}
which, combined with the fact that $\LL(\vtheta^{(s)})=\ell(\vxi^{(s)})$ for all $s$ (see Lemma~\ref{lm:reparameter}), verifies \eqref{eq:ih_loss_under}.

Note that \eqref{eq:loss_reform_under} implies that $\LL^\star\leq \LL(\vtheta)$ holds for all $\vtheta$.
And from \eqref{eq:ih_loss_under} we know that $\LL(\vtheta^{(t)})\rightarrow \LL^\star$ as $t\rightarrow\infty$. Therefore, it follows that
$$\LL^\star=\inf_{\vtheta} \LL(\vtheta).$$
Consequently, \eqref{eq:ih_loss_under} is equivalent to \eqref{eq:rate_under}.
\end{proof}

\section{Proof of Theorem~\ref{thm:inference_under}}\label{sec_app:proof_inference_under}

%
By \eqref{eq:ih_loss_under} we know that $\LL(\vtheta^{(t)})\rightarrow \LL^\star$ as $t\rightarrow\infty$. Thus from \eqref{eq:loss_reform_under} we know that \eqref{eq:converge_delta_under} and \eqref{eq:converge_A} hold. 

By Sherman-Morrison-Woodbury formula, we have
\begin{equation}\label{eq:SMW}
    \left(m\tau \mI_N+\mZ^\top\mZ\right)^{-1}=\frac{1}{m\tau}\mI_N-\frac{1}{m\tau}\mZ^\top\left(m\tau\mI_m+\mZ\mZ^\top\right)^{-1}\mZ.
\end{equation}

Thus we have
\begin{align}
    \mA&\overset{\eqref{eq:matrix_A}}{=}\left(\mZ^\top\mZ+m\tau\mI_N\right)^{-1}\left(\mZ^\top\Zfull + \left(m\tau\mI_N,\vzero\right)\right)\notag\\
    &\overset{\eqref{eq:SMW}}{=}\frac{1}{m\tau}\left(\mI_N-\mZ^\top\left(m\tau\mI_m+\mZ\mZ^\top\right)^{-1}\mZ\right)\left(\mZ^\top\Zfull + \left(m\tau\mI_N,\vzero\right)\right)\notag\\
    &=\frac{1}{m\tau}\bigg[\mZ^\top\widetilde{\mZ}+(m\tau\mI_N,\vzero)-\mZ^\top\left(m\tau\mI_m+\mZ\mZ^\top\right)^{-1}\left(m\tau\mI_m+\mZ\mZ^\top\right)\widetilde{\mZ}\notag\\
    &\quad+m\tau\mZ^\top\left(m\tau\mI_m+\mZ\mZ^\top\right)^{-1}\widetilde{\mZ}-m\tau\mZ^\top\left(m\tau\mI_m+\mZ\mZ^\top\right)^{-1}\left(\mZ,\vzero\right)\bigg]\notag\\
&= (\mI_N,\vzero)+\mZ^\top\left(m\tau\mI_m+\mZ\mZ^\top\right)^{-1}(\vzero,\mZ^Q)\notag\\
&= \left(\mI_N, \mZ^\top\left(m\tau\mI_m+\mZ\mZ^\top\right)^{-1}\mZ^Q\right),\label{eq:A_equiv}
\end{align}
where $\mZ^Q$ is defined in \eqref{eq:Z^Q}.


On the other hand, it's straightforward to verify that $\widehat\vlambda$ defined in \eqref{eq:ridge_regression} admits the following closed form:
\begin{equation}
\widehat\vlambda=\left(m\tau\mI_m+\mZ\mZ^\top\right)^{-1}\mZ\vy.
\end{equation}
Combining the above two equations, we obtain 
\begin{align*}
        \mA^\top\vy= \begin{pmatrix}
            \vy \\
            \left(\mZ^Q\right)^\top \left(m\tau\mI_m+\mZ\mZ^\top\right)^{-1}\mZ\vy
        \end{pmatrix}=\begin{pmatrix}
        \vy \\
        \left(\mZ^Q\right)^\top \widehat\vlambda
    \end{pmatrix}=\vylimit,
\end{align*}
where the last equality follows from \eqref{eq:esp_acc_under}.

Now we give the iteration complexity for the mean-squared error between the prediction $\vypred$ and the limit point $\vylimit$ to be less than $\varepsilon$. 
Given any prompt $P=P_{\vlambda}$, where $\vlambda$ satisfies Assumption~\ref{asmp:boundedness}, we have
$$y_i=\vlambda^\top (\vz_i+\veps_i)\sim\mathcal{N}(\vlambda^\top \vz_i,\norm{\vlambda}_2^2\tau).$$
Letting $x_i=\frac{y_i-\vlambda^\top\vz_i}{\norm{\vlambda}_2\sqrt{\tau}}$, we have $x_i\sim\mathcal{N}(0,1)$. Define
$$Z=\sum_{i=1}^N \norm{\vlambda}_2^2\tau (x_i^2-1)=\norm{\vy-\mZ^\top\vlambda}_2^2-N\tau\norm{\vlambda}_2^2.$$
By \citet[Lemma~1]{laurent2000adaptive}, we have
\begin{align*}
    \forall s>0:\quad\PP\left(Z\geq 2\sqrt{N}\norm{\vlambda}_2^2\tau\sqrt{s}+2\norm{\vlambda}_2^2\tau s\right)\leq \exp\left(-s\right).
\end{align*}

By letting $s=\log(1/\delta)$ and using the definition of $Z$, we have 
\begin{align}\label{eq:prob}
    \PP\left(\norm{\vy-\mZ^\top\vlambda}_2^2\geq N\tau\norm{\vlambda}_2^2+2\sqrt{N\log(1/\delta)}\norm{\vlambda}_2^2\tau+2\norm{\vlambda}_2^2\tau \log(1/\delta)\right)\leq \delta.
\end{align}

Thus with probability at least $1-\delta$, we have
\begin{align}
    \norm{\vy}_2&\leq \norm{\mZ^\top\vlambda}_2+\norm{\vy-\mZ^\top\vlambda}_2\notag\\
&\leq\norm{\mZ^\top\vlambda}_2+\norm{\vlambda}_2\sqrt{\tau}\left(N+2\sqrt{N\log(1/\delta)}+2\log(1/\delta)\right)^{1/2}\notag\\
&\leq B\left(\norm{\mZ}_2+\sqrt{\tau}\left(N+2\sqrt{N\log(1/\delta)}+2\log(1/\delta)\right)^{1/2}\right).\label{eq:bound_bar_y}
\end{align}
where we use \eqref{eq:prob} in the second inequality, and the third inequality follows from Assumption~\ref{asmp:boundedness}.

On the other hand, by \eqref{eq:loss_reform_under} we have
$$\LL(\vtheta^{(t)})=\frac{1}{2K}\norm{\bar\mZ(\widehat\mA-\mA)}_2^2+\LL^\star\geq \frac{m\tau}{2K}\norm{\widehat\mA-\mA}_2^2+\LL^\star,$$
which gives
\begin{align}\label{eq:bound_diff_A}
    \norm{\widehat\mA-\mA}_2\leq \sqrt{\frac{2K}{m\tau}\left(\LL(\vtheta^{(T)})-\LL^\star\right)}\leq \sqrt{\frac{2K}{m\tau}\left(\LL(\vtheta^{(0)})-\LL^\star\right)}\left(1-\frac{\gamma^2\eta_Q\zeta_0}{2K}\right)^{T/2}.
\end{align}

Thus we know that w.p. at least $1-\delta$, we have
\begin{align*}
    \frac{1}{2K}\norm{\vypred-\vylimit}_2^2=\frac{1}{2K}\norm{\left(\widehat\mA-\mA\right)^\top\vy}_2^2\leq \frac{1}{2K}\norm{\widehat\mA-\mA}_2^2\norm{\vy}_2^2\leq \varepsilon,
\end{align*}
where the last relation follows from \eqref{eq:bound_bar_y}, \eqref{eq:bound_diff_A} and \eqref{eq:iteration_complexity_under}.


\section{Proof of Key Lemmas}\label{sec_app:proof_key_lemmas}

\subsection{Proof of Proposition~\ref{prop:init}}\label{sec_app:proof_prop_init}
For notation simplicity we drop the superscript $(0)$ in the subsequent proof. 

Let $\mD_k:=\left(\promptmtrx^\top\mQ_1\vv_k,\cdots,\promptmtrx^\top\mQ_H\vv_k\right)\in\R^{N\times H}$.
Note that 
\begin{equation}\label{eq:dist_C}
    \mD_k=\promptmtrx^\top \mQ=\promptmtrx^\top(\vq_1,\cdots,\vq_H),\quad\text{where } \mQ(i,j)\overset{i.i.d.}{\sim} \mathcal{N}(0,\beta^2\norm{\vv_k}_2^2),\quad \forall i\in[d],j\in[H].
\end{equation}

This suggests the column vectors of $\mD_k$ are i.i.d. and the density of each column vector is positive at any point $\vx\in\mathcal{R}(\promptmtrx)$, where $\mathcal{R}(\promptmtrx)\subset \R^N$ is the row space of $\promptmtrx$.

Since $\bar\mZ$ has full rank, to prove $\mB_k$ has full rank a.s., we only need to argue that $\mC_k(:,1:N)$ has full rank w.p. 1.
Below we prove this by contradiction (recall that by definition $\mC_k=\sm (\mD_k)$, and we assume $H \geq N$).

Suppose w.p. larger than $0$, there exists one of $\mC_k(:,1:N)$'s column vector that could be linearly represented by its other $N-1$ column vectors. 
Without loss of generality, we assume this colomn vector is $\mC_k(:,1)=\sm(\mD_k(:,1))$.
Let $\vx=\vx(\vq_1)\coloneqq\exp(\mD_k(:,1))=\exp(\promptmtrx^\top \vq_1)$. Then $\vx$ could be linearly represented by $\exp(\mD_k(:,i))$, $i=2,\cdots,N$.

Let $\tilde \mA\coloneqq \exp(\mD_k(:,2:N))$, then w.p. larger than $0$, $\vx\in\mathcal{C}(\tilde\mA)$, where $\mathcal{C}(\tilde\mA)$ is the column vector space of $\tilde \mA$.
i.e., we have
\begin{equation*}
   \int_{\R^{N\times (m-1)}}\mathbb{P}(\vx\in\mathcal{C}(\tilde\mA)|\tilde\mA)d\mu(\tilde\mA)>0, 
\end{equation*}
which further indicates that there exists $\tilde \mA\in \R^{N\times (N-1)}$ such that $\mathbb{P}(\vx\in\mathcal{C}(\tilde\mA))>0$. Since the dimension of $\mathcal{C}(\tilde\mA)$ is at most $N-1$, there exists $\vy\in\R^N$, $\vy\ne\vzero$ such that $\vy\bot \mathcal{C}(\tilde\mA)$. Therefore, we have
\begin{equation}\label{eq:wrong_consequence}
    \mathbb{P}(\vy^\top\vx=0)>0.
\end{equation}



By Assumption~\ref{asmp:init}, without loss of generality, we assume that $\vu_1=(v_{11},v_{12},\cdots,v_{1N})^\top$ has different entries. 
For any vector $\vw=(w_1,\cdots,w_d)^\top\in\R^d$, we let $\tilde \vw=(w_2,\cdots,w_d)^\top\in\R^{d-1}$ denote the vector formed by deleting the first entry of $\vw$. 
%
Let $\vq_1=(q,\tilde \vq_1^\top)^\top$. For any fixed $\tilde \vq_1\in\R^{d-1}$, the function $g(\cdot|\tilde \vq_1):\R\to\R$ defined by
$$g(q|\tilde \vq_1)\coloneqq\sum_{i=1}^N y_i e^{qv_{1i}+\tilde \vq_1^\top \tilde\vv_i}=\sum_{i=1}^N y_ie^{\tilde \vq_1^\top \tilde\vv_i}e^{qv_{1i}}=\inner{\vy}{\exp(\promptmtrx^\top \vq_1)}=\inner{\vy}{\vx(\vq_1)}$$
has finite zero points and thus $\{q\in\R|g(q|\tilde \vq_1)=0\}$ is a zero-measure set. Therefore, we have
\begin{align*}
    \mathbb{P}(\inner{\vy}{\vx}=0)=\int_{\R^{d-1}}\mathbb{P}(g(q|\tilde \vq_1)=0|\tilde \vq_1)d\mu(\tilde \vq_1)=0,
\end{align*}
which contradicts \eqref{eq:wrong_consequence}. 
Therefore, $\mC_k(:,1:N)$ has full rank with probability 1.

\subsection{Proof of Lemma~\ref{lm:equivalence}}\label{sec_app:proof_equivalence}

Lemma~\ref{lm:equivalence} can be verified by the following direct computation (recall that the noise in each label satisfies $\veps_i\overset{i.i.d}{\sim}\mathcal{N}(0,\tau\mI_m)$, $\forall i\in[N]$):
\begin{align*}
    &\E_{\veps}\left[\frac{1}{2N}\sum_{i=1}^N(y_i-\vlambda^\top \left(f(\vv_i)+\veps_i\right))^2\right]\\
    &=\E_{\veps}\left[\frac{1}{2N}\sum_{i=1}^N\left((y_i-\vlambda^\top f(\vv_i))^2-2\vlambda^\top\veps_i(y_i-\vlambda^\top f(\vv_i))+\vlambda^\top\veps_i\veps_i^\top\vlambda\right)\right]\\
    &=\frac{1}{2N}\sum_{i=1}^N\left((y_i-\vlambda^\top f(\vv_i))^2+\tau\norm{\vlambda}_2^2\right)\\
    &=\frac{1}{2N}\sum_{i=1}^N(y_i-\vlambda^\top f(\vv_i))^2+\frac{\tau}{2}\norm{\vlambda}_2^2.
\end{align*}

\subsection{Proof of Lemma~\ref{lm:loss_reform_under}}\label{sec_app:proof_loss_reform_under}

We let $\veps^P\coloneqq(\veps_1,\cdots,\veps_N)\in\R^{m\times N}$, $\epsfull\coloneqq(\veps_1,\cdots,\veps_K)\in\R^{m\times K}$. Recall that $\vy=(y_1,\cdots,y_N)^\top\in\R^N$. Then we have
\begin{equation}\label{eq:bar_y}
    \vy=(\mZ+\veps^P)^\top \vlambda,
\end{equation}
and 
\begin{align}
    \LL(\vtheta)&=\frac{1}{K}\sum_{k=1}^K\LL_k(\vtheta)=\frac{1}{2}\EE_{\vlambda,\epsfull}\left[\frac{1}{K}\sum_{k=1}^K \left(\widehat y_k-y_k\right)^2\right]\\
&=\frac{1}{2K}\sum_{k=1}^K \EE_{\vlambda,\epsfull}\norm{\vy^\top \widehat \va_k-\vlambda^\top(\vz_k+\veps_k)}_2^2\notag\\
&=\frac{1}{2K}\sum_{k=1}^K\EE_{\vlambda,\epsfull}\norm{\vlambda^\top(\mZ+\veps^P)\widehat\va_k-\vlambda^\top(\vz_k+\veps_k)}_2^2\notag\\
&=\frac{1}{2K}\sum_{k=1}^K\EE_{\vlambda,\epsfull}\left[(\mZ+\veps^P)\widehat\va_k-(\vz_k+\veps_k)\right]^\top\vlambda\vlambda^\top\left[(\mZ+\veps^P)\widehat\va_k-(\vz_k+\veps_k)\right]\notag\\
&=\frac{1}{2K}\sum_{k=1}^K\EE_{\epsfull}\left[(\mZ+\veps^P)\widehat\va_k-(\vz_k+\veps_k)\right]^\top\left[(\mZ+\veps^P)\widehat\va_k-(\vz_k+\veps_k)\right]\notag\\
&=\frac{1}{2K}\sum_{k=1}^K\EE_{\epsfull}\left[\norm{\mZ\widehat\va_k-\vz_k}_2^2+2(\mZ\widehat\va_k-\vz_k)^\top(\veps^P\widehat\va_k-\veps_k)+\norm{\veps^P\widehat\va_k-\veps_k}_2^2\right],\label{eq:bar_L_reform_1}
\end{align}
where $\widehat \va_k$ denote the $k$-th column vector of matrix $\widehat{\mA}(\vtheta)$ defined in~\eqref{eq:matrix_hat_A}, and the fifth line uses Assumption~\ref{asmp:lambda_dist}.

Note that for all $k\in[K]$, we have
\begin{equation}\label{eq:cross_term=0}
    \EE_{\epsfull}(\mZ\widehat\va_k-\vz_k)^\top(\veps^P\widehat\va_k-\veps_k)=0,
\end{equation}
and that 
\begin{align}\label{eq:term_3_exp}
    \EE_{\epsfull}\norm{\veps^P\widehat\va_k-\veps_k}_2^2&=m\tau\left(\norm{\widehat\va_k}_2^2+1\right)-2m\tau\widehat{a}_{kk}\mathbbm{1}\left\{k\in[N]\right\},
\end{align}
where $\mathbbm{1}\left\{k\in[N]\right\}$ is the indicator function that equals 1 if $k\in[N]$ and 0 otherwise, and we have made use of the assumption that $\veps_k\overset{i.i.d.}{\sim}\mathcal{N}(0,\tau^2\mI_m)$.

Combining the above two equations with \eqref{eq:bar_L_reform_1}, we know that for $k\in[N]$,
it holds that
\begin{align*}
    \LL_k(\vtheta)=\frac{1}{2}\left(\norm{\mZ\widehat\va_k-\vz_k}_2^2+m\tau\norm{\widehat\va_k-\ve_k}_2^2\right).
\end{align*}

Reorganizing the terms in the RHS of the above equation, we obtain that
\begin{align}\label{eq:bar_L_k<=N}
    \LL_k(\vtheta)=\frac{1}{2}\norm{\left(\mZ^\top\mZ+m\tau \mI\right)^{1/2}\left(\widehat \va_k - \left(\mZ^\top\mZ+m\tau \mI\right)^{-1}\left(\mZ^\top\vz_k+m\tau\ve_k\right)\right)}_2^2+\frac{1}{2}c_k,
\end{align}
where $c_k=-\left(\mZ^\top\vz_k+m\tau\ve_k\right)^\top \left(\mZ^\top\mZ+m\tau \mI\right)^{-1}\left(\mZ^\top\vz_k+m\tau\ve_k\right)+\norm{\vz_k}_2^2+m\tau$.

By a similar argument, we can show that for $k\in[K]\backslash [N]$, it holds thet
\begin{equation}\label{eq:bar_L_k>N}
    \LL_k(\vtheta)=\frac{1}{2}\norm{\left(\mZ^\top\mZ+m\tau \mI\right)^{1/2}\left(\widehat \va_k-\left(\mZ^\top\mZ+m\tau \mI\right)^{-1}\mZ^\top\vz_k\right)}_2^2+\frac{1}{2}c_k',
\end{equation}
where $c_k'=-\left(\mZ^\top\vz_k\right)^\top \left(\mZ^\top\mZ+m\tau \mI\right)^{-1}\left(\mZ^\top\vz_k\right)+\norm{\vz_k}_2^2$.

\eqref{eq:bar_L_k<=N}, \eqref{eq:bar_L_k>N} together with \eqref{eq:delta_under} and the definition of $\LL^\star$ give \eqref{eq:loss_reform_under}.


\subsection{Proof of Lemma~\ref{lm:reparameter}}\label{sec_app:proof_reparameter}

First, it holds that
\begin{equation}\label{eq:gd_1}
    \mQ_h^{(t)}=\mQ_h^{(t-1)}-\eta_Q\nabla_{\mQ_h}\ell(\vxi^{(t-1)})=\mQ_h^{(t-1)}-\eta_Q\nabla_{\mQ_h}\ell(\vxi^{(t-1)}).
\end{equation}

Second, note that
\begin{align*}
    \vw_h^{(t)}&=\vw_h^{(t-1)}-\eta_w\nabla_{\vw_h}\LL(\vtheta^{(t-1)})\\
    &=\gamma\valpha_h^{(t-1)}-\gamma^2\cdot\frac{1}{\gamma}\eta_Q\nabla_{\valpha_h}\ell(\vxi^{(t-1)})\\
    &=\gamma\left(\valpha_h^{(t-1)}-\eta_Q\nabla_{\valpha_h}\ell(\vxi^{(t-1)})\right).
\end{align*}

Dividing both sides of the above equality by $\gamma$, we have
\begin{equation}\label{eq:gd_2}
    \valpha_h^{(t)}=\valpha_h^{(t-1)}-\eta_Q\nabla_{\valpha_h}\ell(\vxi^{(t-1)}).
\end{equation}
Hence, \eqref{eq:xi_update} follows from combining \eqref{eq:gd_1} and \eqref{eq:gd_2}.

\subsection{Proof of Lemma~\ref{lm:ub_grad_norm_under}}\label{sec_app:proof_ub_grad_norm_under}

Throughout this proof, we omit the superscript $(t)$ for simplicity. We first compute the gradient of $\LL$ w.r.t. $\mQ_h$.
By \eqref{eq:loss_reform_under} we know that 
$$
\ell(\vxi)=\LL(\vtheta)=\frac{1}{2K}\sum_{k=1}^K\norm{\bar\mZ\vdelta_k}_2^2,
$$
and thus we have
\begin{align}
    \frac{\partial \ell(\vxi)}{\partial\mQ_h}&=\frac{1}{K}\sum_{k=1}^K\sum_{j=1}^N \frac{\partial}{\partial \delta_{jk}}\left[\frac{1}{2}\norm{\sum_{i=1}^N\delta_{ik}\bar\vz_i}_2^2\right] \frac{\partial\delta_{jk}}{\partial \mQ_h}\notag\\
   &= \frac{\gamma}{K}\sum_{k=1}^K\sum_{j=1}^N \left(\bar\mZ\vdelta_k\right) ^\top \bar\vz_j\cdot \underbrace{\alpha_{h,k}s_{jk}^h\sum_{i=1}^N s_{ik}^h(\vv_j-\vv_i)\vv_k^\top}_{=: \mG^{h,jk}}.\label{eq:grad_L_k/Q}
\end{align}

Note that 
\begin{equation}\label{G_norm}
    \norm{\mG^{h,jk}}_F\leq 2\alpha s_{jk}^h,
\end{equation}
where we use the fact that $\norm{(\vv_j-\vv_i)\vv_k^\top}_2\leq 2$ (recall that we assume each $\vv_k$ has unit norm, $k\in[K]$.)
Combining \eqref{eq:grad_L_k/Q} and \eqref{G_norm}, we have  the desired result
\begin{align}
    \norm{\frac{\partial \ell(\vxi)}{\partial \mQ_h}}_F&\leq \frac{\gamma}{K}\sum_{k=1}^K\sum_{j=1}^N \norm{\bar\mZ\vdelta_k}_2\norm{\bar\vz_j}_2\norm{\mG^{h,jk}}_F\notag\\
    &\leq \frac{2\gamma}{K}\sum_{k=1}^K\sum_{j=1}^N \norm{\bar\mZ\vdelta_k}_2\bar f_{\max}\alpha s_{jk}^h\notag\\
    &\leq \frac{2\gamma \bar f_{\max}\alpha}{K}\sqrt{K}\sqrt{\sum_{k=1}^K \norm{\bar\mZ\vdelta_k}_2^2}\notag\\
    &\leq 2\sqrt{2}\gamma \bar f_{\max}\alpha \sqrt{\ell(\vxi)-\LL^\star},\label{eq:grad_L/Q_ub}
\end{align}
where $\bar f_{\max}$ is defined in \eqref{eq:Z} and the third line follows from Cauchy-Schwarz inequality.  


\end{document}